\newcommand{\xv}{\mathbf{x}}
\newcommand{\zv}{\mathbf{z}}
\newcommand{\Zv}{\mathbf{Z}}
\newcommand{\gv}{\mathbf{g}}
\newcommand{\Av}{\mathbf{A}}
\newcommand{\wv}{\mathbf{w}}
\newcommand{\Wv}{\mathbf{W}}
\newcommand{\ev}{\mathbf{e}}
\newcommand{\iTrain}{\mathcal{I}_{\textrm{tr}}}
\newcommand{\iTest}{\mathcal{I}_{\textrm{tst}}}
\newcommand{\uv}{\mathbf{u}}
\newcommand{\Uv}{\mathbf{U}}
\newcommand{\defEq}{\stackrel{\textrm{def}}{=}}
\newcommand{\argmax}{\operatornamewithlimits{argmax}}
\newcommand{\ud}{\mathrm{d}}
\newcommand{\sign}{\mathrm{sign}}
\newcommand{\etav}{\boldsymbol \eta}
\newcommand{\muv}{\boldsymbol \mu}
\newcommand{\nuv}{\boldsymbol \nu}
\newcommand{\psiv}{\boldsymbol \psi}
\newcommand{\phiv}{\boldsymbol \phi}
\newcommand{\xiv}{\boldsymbol \xi}
\newcommand{\thetav}{\boldsymbol \theta}
\newcommand{\alphav}{\boldsymbol \alpha}
\newcommand{\betav}{\boldsymbol \beta}
\newcommand{\piv}{\boldsymbol \pi}
\newcommand{\omegav}{\boldsymbol \omega}
\newcommand{\gammav}{\boldsymbol \gamma}
\newcommand{\ep}{\mathbb{E}}
\newcommand{\st}{\mathrm{s.t.}}
\newcommand{\KL}{\mathrm{KL}}
\newcommand{\B}{\mathrm{Beta}}
\newcommand{\Ber}{\mathrm{Bernoulli}}
\newcommand{\model}{\mathbf{M}}
\newcommand{\ms}{\mathcal{M}}
\newcommand{\data}{\mathcal{D}}
\def\indicator{{\mathbb I}}
\newcommand{\ericx}[1]{{\color{red}{\bf\sf [Eric: #1]}}}
\begin{document}

\title{Bayesian Inference with Posterior Regularization \\ and applications to Infinite Latent SVMs}

\author{\name Jun Zhu \email dcszj@mail.tsinghua.edu.cn \\
        \name Ning Chen \email ningchen@mail.tsinghua.edu.cn\\
        \addr State Key Laboratory of Intelligent Technology and Systems \\
        \addr Tsinghua National Laboratory for Information Science and Technology \\
        \addr Department of Computer Science and Technology \\
       Tsinghua University \\
        \name Eric P. Xing  \email epxing@cs.cmu.edu\\
        \addr School of Computer Science \\
       Carnegie Mellon University
}

\editor{Tony Jebara}

\maketitle

\begin{abstract}%
Existing Bayesian models, especially nonparametric Bayesian methods, rely on specially conceived priors to incorporate domain knowledge for discovering improved latent representations. While priors can affect posterior distributions through Bayes' rule, imposing posterior regularization is arguably more direct and in some cases more natural and general. In this paper, we present {\it regularized Bayesian inference} (RegBayes), a novel computational framework that performs posterior inference with a regularization term on the desired post-data posterior distribution under an information theoretical formulation. RegBayes is more flexible than the procedure that elicits expert knowledge via priors, and it covers both directed Bayesian networks and undirected Markov networks whose Bayesian formulation results in hybrid chain graph models. When the regularization is induced from a linear operator on the posterior distributions, such as the expectation operator, we present a general convex-analysis theorem to characterize the solution of RegBayes. Furthermore, we present two concrete examples of RegBayes, {\it infinite latent support vector machines} (iLSVM) and {\it multi-task infinite latent support vector machines} (MT-iLSVM), which explore the large-margin idea in combination with a nonparametric Bayesian model for discovering predictive latent features for classification and multi-task learning, respectively. We present efficient inference methods and report empirical studies on several benchmark datasets, which appear to demonstrate the merits inherited from both large-margin learning and Bayesian nonparametrics. Such results were not available until now, and contribute to push forward the interface between these two important subfields, which have been largely treated as isolated in the community.

{\bf Keywords:} Bayesian inference, posterior regularization, Bayesian nonparametrics, large-margin learning, classification, multi-task learning

\end{abstract}

\section{Introduction}

Over the past decade, nonparametric Bayesian models have gained remarkable popularity in machine learning and other fields, partly owing to their desirable utility as a ``nonparametric" prior distribution for a wide variety of probabilistic models, thereby turning the largely heuristic model selection practice, such as determining the unknown number of components in a mixture model~\citep{Antoniak:74} or the unknown dimensionality of latent features in a factor analysis model~\citep{Griffiths:tr05}, as a Bayesian inference problem in an unbounded model space. Popular examples include Gaussian process (GP)~\citep{Rasmussen:02}, Dirichlet process (DP)~\citep{Ferguson:73,Antoniak:74}, and Beta process (BP)~\citep{Thibaux:beta07}. DP is often described with a Chinese restaurant process (CRP) metaphor, and similarly BP is often described with an Indian buffet process (IBP) metaphor~\citep{Griffiths:tr05}. 
Such nonparametric Bayesian approaches allow the model complexity to grow as more data are observed, which is a key factor differing them from other traditional ``parametric" Bayesian models.

One recent development in practicing Bayesian nonparametrics is to relax some unrealistic assumptions on data, such as homogeneity and exchangeability. For example, to handle heterogenous observations, predictor-dependent processes~\citep{MacEachern99,Williamson:10} have been proposed; and to relax the exchangeability assumption, stochastic processes with various correlation structures, such as hierarchical structures~\citep{YWTeh:jasa06}, temporal or spatial dependencies~\citep{Beal:iHMM07,Blei:icml10}, and stochastic ordering dependencies~\citep{Hoff:03,Dunson:07}, have been introduced. A common principle shared by these approaches is that they rely on defining, or in some unusual cases learning~\citep{Welling:ITA12} a nonparametric Bayesian prior\footnote{Although likelihood is another dimension that can incorporate domain knowledge, existing work on Bayesian nonparametrics has been mainly focusing on the priors. Following this convention, this paper assumes that a common likelihood model (e.g., Gaussian likelihood for continuous data) is given.} encoding some special structures, which {\it indirectly}\footnote{A hard constraint on the prior (e.g., a truncated Gaussian) can directly affect the support of the posterior. RegBayes covers this as a special case as shown in Remark~\ref{remarkPriorConstraint}.} influences the posterior distribution of interest through an interplay with a likelihood model according to the Bayes' rule (also known as Bayes' theorem). In this paper, we explore a different principle known as {\it posterior regularization}, which offers an additional and arguably richer and more flexible set of means to augment a posterior distribution under rich side information, such as predictive margin, structural bias, etc., which can be harder, if possible, to be captured by a Bayesian prior.

Let $\Theta$ denote model parameters and $H$ denote hidden variables. Then given a set of observed data $\data$, posterior regularization~\citep{Taskar:postreg10} is generally defined as solving a regularized maximum likelihood estimation (MLE) problem:
\begin{eqnarray}\label{eq:posterior-regularization}
{\bf Posterior~Regularization}:~~\max_\Theta \mathcal{L}(\Theta; \data) + \Omega( p(H | \data, \Theta) ),
\end{eqnarray}
where $\mathcal{L}(\Theta; \data)$ is the marginal likelihood of $\data$, and $\Omega(\cdot)$ is a regularization function of the model posterior over latent variables (note that here we view posterior as a generic post-data distribution on hidden variables in the sense of \citep[pp.15]{Ghosh:book2003}, not necessarily corresponding to a Bayesian posterior that must be induced by the Bayes' rule). 
The regularizer can be defined as a KL-divergence between a desired distribution with certain properties over latent variables and the model posterior in question, or other constraints on the model posterior, such as those used in generalized expectation~\citep{McCallum:jmlr10} or constraint-driven semi-supervised learning~\citep{Chang:07}.
An EM-type procedure can be applied to solve Eq.~(\ref{eq:posterior-regularization}) approximately, and obtain an augmented MLE of the hidden variable model: $p(H | \data, \Theta_{\rm MLE}$). When a distribution over the model parameter is of interest, going beyond the classical Bayesian theory, recent attempts toward learning a regularized posterior distribution of model parameters (and latent variables as well if present) include the ``learning from measurements"~\citep{Liang:ICML09}, maximum entropy discrimination (MED)~\citep{Jaakkola:99,Zhu:jmlr09} and maximum entropy discrimination latent Dirichlet allocation (MedLDA)~\citep{Zhu:MedLDA09}.
All these methods are parametric in that they give rise to distributions over a fixed and finite-dimensional parameter space. To the best of our knowledge, very few attempts have been made to impose posterior regularization in a nonparametric setting where model complexity depends on data, such as the case for nonparametric Bayesian latent variable models. A general formalism for (parametric and nonparametric) Bayesian inference with posterior regularization seems to be not yet available or apparent. In this paper, we present such a formalism, which we call {\it regularized Bayesian inference}, or RegBayes, built on the convex duality theory over distribution function spaces; and we apply this formalism to learn regularized posteriors under the Indian buffet process (IBP), conjoining two powerful machine learning paradigms, nonparametric Bayesian inference and SVM-style max-margin constrained optimization.

Unlike the regularized MLE formulation in Eq.~(\ref{eq:posterior-regularization}), under the traditional formulation of Bayesian inference one is not directly optimizing an objective with respect to the posterior. To enable a regularized optimization formulation of  RegBayes, we begin with a variational reformulation of the Bayes' theorem, and define $\mathcal{L}( q(\model | \data) )$ as the KL-divergence between a desired post-data posterior $q(\model|\data)$ over model $\model$ and the standard Bayesian posterior $p(\model|\data)$ (see Section~\ref{sec:Bayes-As-Optimization} for a recapitulation of the connection between KL-minimization and Bayes' theorem). RegBayes solves the following optimization problem:
\begin{eqnarray}\label{eq:RegBayes-Generic}
{\bf RegBayes}:~~\inf_{q(\model|\data) \in \mathcal{P}_{\textrm{prob}} } \mathcal{L}(q(\model|\data)) + \Omega( q(\model|\data) ),
\end{eqnarray}
where the regularization $\Omega(\cdot)$ is a function of the post-data posterior $q(\model|\data)$, and $\mathcal{P}_{\textrm{prob}}$ is the feasible space of well-defined distributions. By appropriately defining the model and its prior distribution, RegBayes can be instantiated to perform either parametric and nonparametric regularized Bayesian inference.

One particularly interesting way to derive the posterior regularization is to impose posterior constraints. Let $\xiv$ denote slack variables and $\mathcal{P}_{\textrm{post}}(\xiv)$ denote the general soft posterior constraints (see Section 3.2 for a formal description), then, we can express the regularization term variationally:
\setlength\arraycolsep{1pt}\begin{eqnarray}\label{eq:RegBayes-VarReg}
\Omega( q(\model|\data) )  = \inf_{\xiv}~ U(\xiv) ,~~\textrm{s.t.:}~ q( \model|\data ) \in \mathcal{P}_{\textrm{post}}(\xiv),
\end{eqnarray}
{where $U(\xiv)$ is normally defined as a convex penalty function}. The RegBayes formalism defined in Eq.~(\ref{eq:RegBayes-Generic}) applies to a wide spectrum of models, including directed graphical models (i.e., Bayesian networks) and undirected Markov networks. For undirected models, when performing Bayesian inference the resulting posterior takes the form of a hybrid chain graphical model~\citep{Frydenberg:90}~\citep{Murray:UAI04,AlanQi:05,Welling:UAI06}, which is usually much more challenging to regularize than for Bayesian inference with directed GMs. When the regularization term is convex and induced from a linear operator (e.g., expectation) of the posterior distributions, RegBayes can be solved with convex analysis theory.

By allowing direct regularization over posterior distributions, RegBayes provides a significant source of extra flexibility for post-data posterior inference, which applies to both parametric and nonparametric Bayesian learning (see the remarks after the main Theorem~\ref{lemma:RegBayes}). In this paper, we focus on applying this technique to the later case, and illustrate how to use RegBayes to facilitate integration of Bayesian nonparametrics and large-margin learning, which have complementary advantages but have been largely treated as two disjoint subfields. Previously, it has been shown that, the core ideas of support vector machines~\citep{Vapnik:95} and maximum entropy discrimination~\citep{Jaakkola:99}, as well as their structured extensions to the max-margin Markov networks~\citep{Taskar:03} and maximum entropy discrimination Markov networks~\citep{Zhu:jmlr09}, have led to successful outcomes in many scenarios. But a large-margin model rarely has the flexibility of nonparametric Bayesian models to automatically handle  model complexity from data, especially when latent variables are present~\citep{Jebara:thesis,Zhu:MedLDA09}. In this paper, we intend to bridge this gap using the RegBayes principle.

Specifically, we develop the {\it infinite latent support vector machines} (iLSVM) and {\it multi-task infinite latent support vector machines} (MT-iLSVM), which explore the discriminative large-margin idea to learn infinite latent feature models for classification and multi-task learning~\citep{Argyriou:nips07,Bakker:JMLR03}, respectively.
We show that both models can be readily instantiated from the RegBayes master equation~(\ref{eq:RegBayes-Generic}) by defining appropriate posterior regularization using the large-margin principle, and by employing an appropriate prior. For iLSVM, we use the IBP prior to allow the model to have an unbounded number of latent features {\it a priori}. For MT-iLSVM, we use a similar IBP prior to infer a latent projection matrix to capture the correlations among multiple predictive tasks while avoiding pre-specifying the dimensionality of the projection matrix. The regularized inference problems can be efficiently solved with an iterative procedure, which leverages existing high-performance convex optimization techniques.

The rest of the paper is organized as follows. Section 2 discusses related work. Section 3 presents regularized Bayesian inference (RegBayes), together with the convex duality results that will be needed in latter sections. Section 4 concretizes the ideas of RegBayes and presents two infinite latent feature models with large-margin constraints for both classification and multi-task learning. Section 5 presents some preliminary experimental results. Finally, Section 6 concludes and discusses future research directions.

\section{Related Work}

Expectation regularization or expectation constraints have been considered to regularize model parameter estimation in the context of semi-supervised learning or learning with weakly labeled data. Mann and McCallum~\citep{McCallum:jmlr10} summarized the recent developments of the generalized expectation (GE) criteria for training a discriminative probabilistic model (e.g., maximum entropy models or conditional random fields~\citep{Lafferty:01}) with unlabeled data. By providing appropriate side information, such as labeled features or estimates of label distributions, a GE-based penalty function is defined to regularize the model distribution, e.g., the distribution of class labels. One commonly used GE function is the KL-divergence between empirical expectation and model expectation of some feature functions if the expectations are normalized or the general Bregman divergence for unnormalized expectations. 
Although the GE criteria can be used alone as a scoring function to estimate the unknown parameters of a discriminative model, it is more usually used as a regularization term to an estimation method, such as maximum (conditional) likelihood estimation. Bellare et al.~\citep{McCallum:uai09} presented a different formulation of using expectation constraints in semi-supervised learning by introducing an auxiliary distribution to GE, 
together with an alternating projection algorithm, which can be more efficient. Liang et al.~\citep{Liang:ICML09} proposed to use the general notion of ``measurements" to encapsulate the variety of weakly labeled data for learning exponential family models. The measurements can be labels, partial labels or other constraints on model predictions. Under the EM framework, posterior constraints were used in~\citep{Taskar:nips07} to modify the E-step of an EM algorithm to project model posterior distributions onto the subspace of distributions that satisfy a set of auxiliary constraints.

Dudik et al.~\citep{Dudik:07} studied the generalized maximum entropy principle with a rich form of expectation constraints using convex duality theory, where the standard moment matching constraints of maximum entropy are relaxed to inequality constraints. But their analysis was restricted to KL-divergence minimization (maximum entropy is a special case) and the finite dimensional space of observations. Later on, Altun and Smola~\citep{Altun:COLT06} presented a more general duality theory for a family of divergence functions on Banach spaces. We have drawn inspiration from both papers to develop the regularized Bayesian inference framework using convex duality theory.

When using large-margin posterior regularization, RegBayes generalizes the previous work on maximum entropy discrimination~\citep{Jaakkola:99,Zhu:jmlr09}. 
The present paper provides a full extension of our preliminary work on max-margin nonparametric Bayesian models~\citep{Zhu:iSVM11,Zhu:iLSVM11}. For example, the infinite SVM (iSVM)~\citep{Zhu:iSVM11} is a latent class model, where each data example is assigned to a single mixture component (i.e., an 1-dimensional space), and both iLSVM and MT-iLSVM extend the ideas to infinite latent feature models. For multi-task learning, nonparametric Bayesian models have been developed in \citep{XueYa:icml07,HalDaume:10} for learning features shared by multiple tasks. However, these methods are based on standard Bayesian inference without a posterior regularization using, for example, the large-margin constraints. Finally, MT-iLSVM can be also regarded as a nonparametric Bayesian formulation of the popular multi-task learning methods~\citep{AndoTong:05,Jebara:jmlr11}.

\section{Regularized Bayesian Inference}\label{sec:RegBayes}

We begin by laying out a general formulation of regularized Bayesian inference, using an optimization framework built on convex duality theory.

\subsection{Variational formulation of Bayes' theorem}\label{sec:Bayes-As-Optimization}

We first derive an optimization-theoretic reformulation of the Bayes' theorem. Let $\ms$ denote the space of feasible models, and $\model \in \ms$ represents an atom in this space. We assume that $\ms$ is a complete separable metric space endowed with its Borel $\sigma$-algebra $\mathcal{B}(\ms)$. Let $\Pi$ be a distribution (i.e., a probability measure) on the measurable space $(\ms, \mathcal{B}(\ms))$. We assume that $\Pi$ is absolutely continuous with respect to some background measure $\mu$, so that there exists a density $\pi$ such that $\ud \Pi = \pi \ud \mu$.
{Let $\data=\{\xv_n \}_{n=1}^N$ be a collection of observed data, which we assume to be i.i.d. given a model.
Let $P( \cdot  |\model)$ be the likelihood distribution, which is assumed to be dominated by a $\sigma$-finite measure $\lambda$ for all $\model$ with positive density, so that there exists a density $p(\cdot|\model)$ such that $\ud P(\cdot | \model) = p(\cdot | \model) \ud \lambda$. Then, the Bayes' conditionalization rule gives a posterior distribution with the density~\citep[Chap.1.3]{Ghosh:book2003}:
\begin{eqnarray}\label{eq:bayesthrm}
p(\model|\data) = \frac{\pi(\model) p(\data|\model)}{p(\data)} = \frac{\pi(\model) \prod_{n=1}^N p(\xv_n|\model)}{p(\xv_1, \cdots, \xv_N)},
\end{eqnarray}
a density over $\model$ with respect to the base measure $\mu$, where $p(\data)$ is the marginal likelihood of the observed data.}

For reasons to be clear shortly, we now introduce a variational formulation of the Bayes' theorem.
Let $Q$ are an arbitrary distribution on the measurable space $(\ms, \mathcal{B}(\ms))$. We assume that $Q$ is absolutely continuous with respect to $\Pi$ and denote by $q$ its density with respect to the background measure $\mu$.\footnote{This assumption is necessary to make the $\KL$-divergence between the two distributions $Q$ and $\Pi$ well-defined. This assumption (or constraint) will be implicitly included in $\mathcal{P}_{\textrm{prob}}$ for clarity.} 
It can be shown that the posterior distribution of $\model$ due to the Bayes' theorem is equivalent to the optimum solution of the following convex optimization problem:
\begin{eqnarray}\label{eq:BasicMaxEnt}
\inf_{q(\model)} && \KL(q(\model) \Vert \pi(\model)) - \int_{\ms}  \log p(\data|\model) q(\model) \ud \mu(\model) \\
\mathrm{s.t.:} && q(\model) \in \mathcal{P}_{\textrm{prob}}, \nonumber
\end{eqnarray}
where $\KL(q(\model)\Vert \pi(\model)) = \int_\ms q(\model) \log (q(\model) / \pi(\model)) \ud \mu(\model)$ is the Kullback-Leibler (KL) divergence from $q(\cdot)$ to $\pi(\cdot)$, and $\mathcal{P}_{\textrm{prob}}$ represents the feasible space of all density functions over $\model$ {with respect to the measure $\mu$}. 
The proof is straightforward by noticing that the objective will become $\KL(q(\model) \Vert p(\model|\data))$ by adding the constant $\log p(\data)$. It is noteworthy that $q(\model)$ here represents the density of a general post-data posterior distribution in the sense of \citep[pp.15]{Ghosh:book2003}, not necessarily corresponding to a Bayesian posterior that is induced by the Bayes' rule. {As we shall see soon later, when we introduce additional constraints, the post-data posterior $q(\model)$ is different from the Bayesian posterior $p(\model|\data)$, and moreover, it could even not be obtainable from any Bayesian conditionalization in a different model}. In the sequel, in order to distinguish $q(\cdot)$ from the Bayesian posterior, we will call it post-data distribution\footnote{Rigorously, $q(\cdot)$ is the density of the post-data posterior distribution $Q(\cdot)$. We simply call $q$ a distribution if no confusion arises.} in short or post-data posterior distribution in full. For notation simplicity, we have omitted the condition $\data$ in the post-data posterior distribution $q(\model)$.

\begin{remark}
The optimization formulation in~(\ref{eq:BasicMaxEnt}) implies that Bayes' rule is an information projection procedure that projects a prior density to a post-data posterior by taking account of the observed data. In general, Bayes's rule is a special case of the principle of minimum information~\citep{Williams:BayesCond1980}.
\end{remark}

\subsection{Regularized Bayesian Inference with Expectation Constraints}\label{sec:regBayes}

{In the variational formulation of Bayes' rule in Eq. (\ref{eq:BasicMaxEnt}), the constraints on $q(\model)$ ensure that $q$ is well-normalized and the objective is well-defined, i.e., $q(\model) \in \mathcal{P}_{\textrm{prob}}$, which do not capture any domain knowledge or structures of the model or data. Some previous efforts have been devoted to eliciting domain knowledge by constraining the prior or the base measure $\mu$~\citep{Christian:1995,Garthwaite:jasa05}. As we shall see, such constraints without considering data are special cases of RegBayes to be presented.}

Specifically, the optimization-based formulation of Bayes' rule makes it straightforward to generalize Bayesian inference to a richer type of posterior inference, by replacing the standard normality constraint on $q$ with a wide spectrum of knowledge-driven and/or data-driven constraints or regularization. (To contrast, we will refer to the problem in Eq. (\ref{eq:BasicMaxEnt}) as ``unconstrained" or ``unregularized".) Formally, we define {\it regularized Bayesian inference} (RegBayes) as a generalized posterior inference  procedure that solves a constrained optimization problem due to such additional regularization imposed on $q$:
\begin{eqnarray}\label{eq:constraindBayes}
\inf_{q(\model), \xiv} && \KL(q(\model) \Vert \pi(\model)) - \int_{\ms}  \log p(\data|\model) q(\model) \ud \mu(\model) + U(\xiv) \\
\mathrm{s.t.:} && q(\model) \in \mathcal{P}_{\textrm{post}}(\xiv), \nonumber
\end{eqnarray}
\noindent where $\mathcal{P}_{\textrm{post}}(\xiv)$ is a subspace of distributions that satisfy a set of additional constraints besides the standard normality constraint of a probability distribution. Using the variational formulation in Eq. (\ref{eq:RegBayes-VarReg}), problem~(\ref{eq:constraindBayes}) can be rewritten in the form of the master equation~(\ref{eq:RegBayes-Generic}), of which the objective is: $\mathcal{L}(q(\model)) = \KL(q(\model) \Vert \pi(\model)) - \int_{\ms}  \log p(\data|\model) q(\model) \ud \mu(\model) = \KL( q(\model) \Vert p(\model, \data))$ and the posterior regularization is $\Omega(q(\model)) = \inf_{\xiv}~ U(\xiv) ,~\textrm{s.t.:}~ q( \model|\data ) \in \mathcal{P}_{\textrm{post}}(\xiv)$. {Note that when $\data$ is given, the distribution $p(\model, \data)$ is unnormalized for $\model$; and we have abused the KL notation for unnormalized distributions in $\KL( q(\model) \Vert p(\model, \data))$, but with the same formula.}


\begin{figure*}\vspace{-.2cm}
\begin{center}
{\hfill\subfigure[]{\includegraphics[height=0.25\columnwidth]{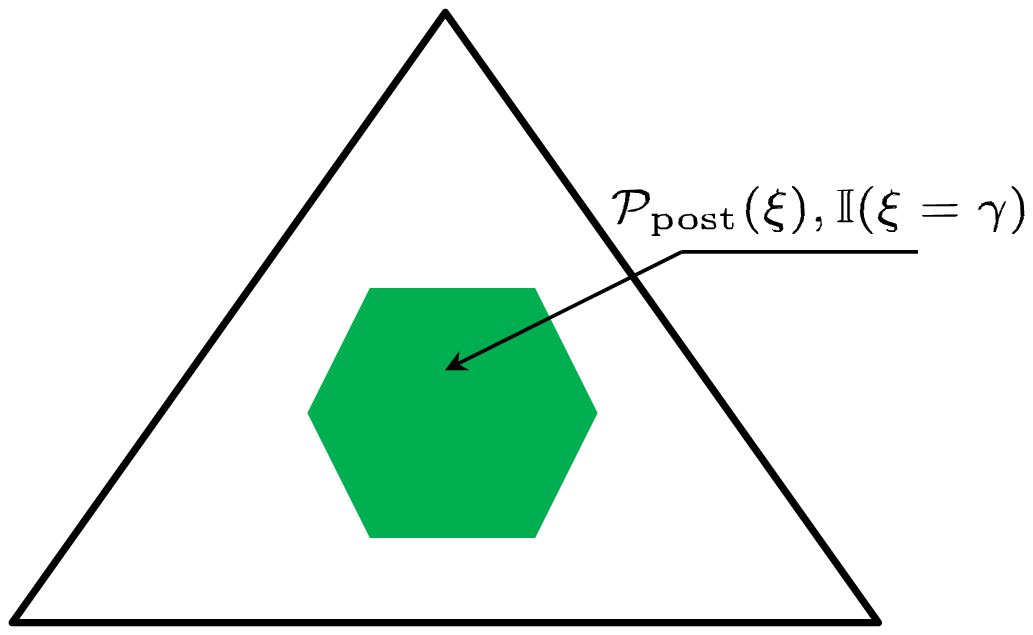}\label{fig:HardConstraints}}\hfill
\subfigure[]{\includegraphics[height=.25\columnwidth]{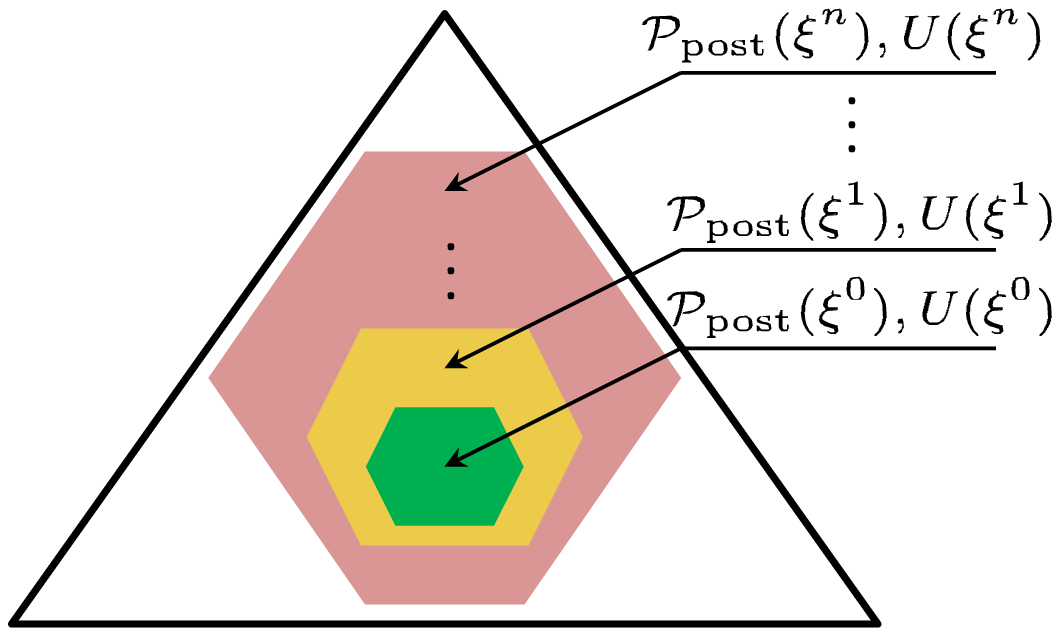}\label{fig:SoftConstraints}}\hfill}\vspace{-.2cm}
\caption{Illustration for the (a) hard and (b) soft constraints in the simple setting which has only three possible models. For hard constraints, we have only one feasible subspace. In contrast, we have many (normally infinite for continuous $\xiv$) feasible subspaces for soft constraints and each of them is associated with a different complexity or penalty, measured by the $U$ function.}
\end{center}\vspace{-.3cm}
\end{figure*}

Obviously this formulation enables different types of constraints to be employed in practice. In this paper, we focus on the {\it expectation constraints}, of which each one is a function of $q(\model)$ through an expectation operator. For instance, let $\psiv = (\psi_1, \cdots, \psi_T)$ be a vector of feature functions, each of which is $\psi_t(\model; \data)$ defined on $\model$ and possibly data dependent. Then a subspace of feasible post-data distributions can be defined in the following form:
\begin{eqnarray}\label{eq:ExpConstraint}
\mathcal{P}_{\textrm{post}}(\xiv) \defEq  \Big\{ q(\model) | ~ \forall t=1,\cdots, T, ~h\big(E q( \psi_t; \data ) \big) \leq \xi_t \Big\} ,
\end{eqnarray}
where $E$ is the expectation operator that maps $q(\model)$ to a point in the space $\mathbb{R}^T$, and for each feature function $\psi_t$:~$E q (\psi_t; \data) \defEq \ep_{ q(\model)}[ \psi_t(\model; \data)]$. The function $h$ can be of any form in theory, though a simple $h$ function will make the optimization problem easy to solve.
The auxiliary parameters $\xiv$ are usually nonnegative and interpreted as slack variables. The constraints with non-trivial $\xiv$ are soft constraints as illustrated in Figure~\ref{fig:SoftConstraints}. But we emphasize that by defining $U$ as an indicator function, the formulation (\ref{eq:constraindBayes}) covers the case where hard constraints are imposed. For instance, if we define $$U(\xiv) = \sum_{t=1}^T \indicator(\xi_t = \gamma_t) = \indicator(\xiv = \gammav),$$ where $\indicator(c)$ is an indicator function that equals to $0$ if the condition $c$ is satisfied; otherwise $\infty$, then all the expectation constraints (\ref{eq:ExpConstraint}) are hard constraints. As illustrated in Figure~\ref{fig:HardConstraints}, hard constraints define one single feasible subspace (assuming to be non-empty). In general, we assume that $U(\xiv)$ is a convex function, which represents a penalty on the size of the feasible subspaces, as illustrated in Figure~\ref{fig:SoftConstraints}. A larger subspace typically leads to models with a higher complexity. In the classification models to be presented, $U$ corresponds to a surrogate loss, e.g., hinge loss of a prediction rule, as we shall see.

Similarly, the formulation of RegBayes with expectation constraints~(\ref{eq:ExpConstraint}) can be equivalently written in an ``unconstrained" form by using the rule in (\ref{eq:RegBayes-VarReg}). Specifically, let $g(E q( \psiv; \data  ) ) \defEq  \inf_{\xiv} U(\xiv),~\textrm{s.t.}:~ h(Eq (\psi_t; \data) ) \leq \xi_t,~\forall t $, we have the equivalent optimization problem:
\begin{eqnarray}\label{eq:constraindBayes2}
\inf_{q(\model) \in \mathcal{P}_{\textrm{prob}}} && \KL(q(\model) \Vert \pi(\model)) - \int_{\ms} \log p(\data|\model) q(\model) \ud \mu(\model) + g( E q(\psiv; \data ) ), 
\end{eqnarray}
where $E q(\psiv; \data )$ is a point in $\mathbb{R}^T$ and the $t$-th coordinate is $E q(\psi_t; \data)$, a function of $q(\model)$ as defined before.
We assume that the real-valued function $g:~\mathbb{R}^T \to \mathbb{R}$ is convex and lower semi-continuous. For each $U$, we can induce a $g$ function by taking the infimum of $U(\xiv)$ over $\xiv$ with the posterior constraints; vice versa.
If we use hard constraints, similar as in regularized maximum entropy density estimation~\citep{Altun:COLT06,Dudik:07}, we have
\begin{eqnarray}
g(Eq) = \sum_{t=1}^T \indicator(h(E q( \psi_t; \data )) \leq \gamma_t).
\end{eqnarray}

For the regularization function $g$, as well as $U$, we can have many choices, besides the above mentioned indicator function. For example, if the feature function $\psi_t$ is an indicator function and we could obtain `prior' expectations $\ep_{\tilde{p}}[\psi_t]$ from domain/expert knowledge about $\model$. {If we further normalize the empirical expectations of $T$ functions and denote the discrete distribution by $\tilde{p}(\model)$, one natural regularization function would be the KL-divergence between prior expectations and the expectations computed from the normalized model posterior $q(\model)$,
i.e., $g(Eq) = \sum_t s(\ep_{\tilde{p}}[\psi_t], Eq(\psi_t)) = \KL(\tilde{p}(\model) \Vert q(\model) )$, where $s(x,y) = x \log(x/y)$ for $x,y \in (0,1)$. The general Bregman divergence can be used for unnormalized expectations.} This kind of regularization function has been used in~\citep{McCallum:jmlr10} for label regularization, in the context of semi-supervised learning. Other choices of the regularization function include the $\ell_2^2$ penalty or indicator function with equality constraints (Please see Table 1 in~\citep{Dudik:07} for a summary).

\begin{remark}
So far, we have focused on RegBayes in the context of full Bayesian inference. Indeed, RegBayes can be generalized to apply to empirical Bayesian inference, where some model parameters need to be estimated. More generally, RegBayes applies to both directed Bayesian networks (of which the hierarchical Bayesian models we have discussed are an example) and undirected Markov random fields. But for undirected models, a RegBayes treatment will have to deal with a chain graph resultant from Bayesian inference, which is more challenging due to existence of normalization factors. We will discuss some details and examples in Appendix A.
\end{remark}

\subsection{Optimization with Convex Duality Theory}

Depending on several factors, including the size of the model space, the data likelihood model, the prior distribution, and the regularization function, a RegBayes problem in general can be highly non-trivial to solve, either in the constrained or unconstrained form, as can be seen from several concrete examples of RegBayes models we will present in the next section and in the Appendix B. 
In this section, we present a representation theorem to characterize the solution the convex RegBayes problem~(\ref{eq:constraindBayes2}) with expectation regularization. {These theoretical results will be used later in developing concrete RegBayes models}. 

To make the subsequent statements general, we consider the following problem: 
\begin{eqnarray}\label{eq:GeneralProblem}
\inf_{x \in \mathcal{X}} f(x) + g(Ax)
\end{eqnarray}
where $f: \mathcal{X} \to \mathbb{R}$ is a convex function; $A: \mathcal{X} \to \mathcal{B}$ is a bounded linear operator; and $g: \mathcal{B} \to \mathbb{R}$ is also convex.
Below we introduce some tools in convex analysis theory to study this problem. We begin by formulating the primal-dual space relationships of convex optimization problems in the general settings, where we assume both $\mathcal{X}$ and $\mathcal{B}$ are Banach spaces\footnote{A Banach space is a vector space with a metric that allows the computation of vector length and distance between vectors. Moreover, a Cauchy sequence of vectors always converges to a well defined limit in the space.}. An important result we build on is the Fenchel duality theorem.
\begin{definition}[Convex Conjugate]
Let $\mathcal{X}$ be a Banach space and $\mathcal{X}^\ast$ be its dual space. The convex conjugate or the Legendre-Frenchel transformation of a function $f:~\mathcal{X} \to [-\infty, +\infty]$ is $f^\ast:~\mathcal{X}^\ast \to [-\infty, +\infty]$, where
\begin{eqnarray}
f^\ast( x^\ast) = \sup_{x \in \mathcal{X}}\{ \langle x^\ast, x \rangle - f(x) \}.
\end{eqnarray}
\end{definition}

\begin{theorem}[Fenchel Duality~\citep{Borwein:05}]\label{thrm:FenchelDuality}
Let $\mathcal{X}$ and $\mathcal{B}$ be Banach spaces, $f: \mathcal{X} \to \mathbb{R}\cup \{+\infty\}$ and $g:~\mathcal{B} \to \mathbb{R}\cup \{+\infty\}$ be convex functions and $A:~\mathcal{X} \to \mathcal{B}$ be a bounded linear map. Define the primal and dual values $t, d$ by the Fenchel problems
\begin{eqnarray}
t = \inf_{x \in \mathcal{X}} \{f(x) + g(Ax) \}~and~d = \sup_{x^\ast \in \mathcal{B}^\ast} \{ -f^\ast(A^\ast x^\ast) - g^\ast(-x^\ast) \}.\nonumber
\end{eqnarray}
Then these values satisfy the weak duality inequality $t \geq d$. If $f$, $g$ and $A$ satisfy either
\begin{eqnarray}
0 \in \mathrm{core}(\mathrm{dom}g - A\mathrm{dom}f) ~\textrm{and both}~f~\textrm{and}~g~are~lower~semicontinuous~(lsc),
\end{eqnarray}
or
\begin{eqnarray}
A \mathrm{dom}f \cap \mathrm{cont}g \neq \emptyset,
\end{eqnarray}
then $t = d$ and the supremum to the dual problem is attainable if finite.
\end{theorem}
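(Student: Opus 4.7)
The plan is to handle weak duality directly from the Fenchel--Young inequality, and then establish strong duality via a perturbation (value) function argument, where the two constraint qualifications are precisely what guarantees subdifferentiability of that function at the origin.

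For the weak inequality $t \ge d$, I would pick any $x \in \mathcal{X}$ and $x^\ast \in \mathcal{B}^\ast$ and invoke the definition of the convex conjugate together with the adjoint identity $\langle A^\ast x^\ast, x\rangle = \langle x^\ast, Ax\rangle$:
\begin{eqnarray*}
f^\ast(A^\ast x^\ast) &\ge& \langle x^\ast, Ax\rangle - f(x), \\
g^\ast(-x^\ast) &\ge& \langle -x^\ast, Ax\rangle - g(Ax).
\end{eqnarray*}
Adding these and rearranging gives $-f^\ast(A^\ast x^\ast) - g^\ast(-x^\ast) \le f(x) + g(Ax)$; taking $\sup$ over $x^\ast$ on the left and $\inf$ over $x$ on the right yields $d \le t$. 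This half is routine and uses neither constraint qualification.

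For strong duality, I would introduce the perturbation (value) function $v:\mathcal{B}\to[-\infty,+\infty]$ defined by
\begin{eqnarray*}
v(b) \defEq \inf_{x\in\mathcal{X}}\{f(x) + g(Ax+b)\},
\end{eqnarray*}
so that $v(0)=t$ and a direct check shows $v$ is convex with effective domain $\mathrm{dom}\,v = \mathrm{dom}\,g - A\,\mathrm{dom}\,f$. The role of the two constraint qualifications is to ensure that $v$ is well-behaved at the origin: under $0 \in \mathrm{core}(\mathrm{dom}\,g - A\,\mathrm{dom}\,f)$ combined with lower semicontinuity of $f$ and $g$, one can show $v$ is proper and that its epigraph admits a nonvertical supporting hyperplane at $(0,t)$; under the alternative condition, the existence of a point of continuity of $g$ inside $A\,\mathrm{dom}\,f$ transports continuity through the inner infimum, making $v$ continuous at $0$.

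The main obstacle is extracting a subgradient $\phi \in \partial v(0) \subset \mathcal{B}^\ast$ from these hypotheses; this is where I would invoke Hahn--Banach separation applied to the epigraph of $v$ (see~\citep{Borwein:05}) to produce a nontrivial continuous linear functional that supports the epigraph at $(0,v(0))$, with the core/continuity assumption guaranteeing the normalization needed to rule out a vertical supporting hyperplane. Once $\phi$ is in hand, the subgradient inequality $v(b) \ge t + \langle \phi, b\rangle$ for all $b \in \mathcal{B}$ rewrites, after the substitution $b = y - Ax$ and separation of the resulting double infimum over $x$ and $y$, as
\begin{eqnarray*}
-f^\ast(-A^\ast \phi) - g^\ast(\phi) \ge t.
\end{eqnarray*}
Setting $x^\ast = -\phi$ then produces a dual feasible point whose objective is at least $t$, so $d \ge t$; combined with weak duality this gives $t = d$ with the supremum attained. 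Thus the whole argument reduces to the separation/subgradient step, and the rest is bookkeeping with conjugates and adjoints.
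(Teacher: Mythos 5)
The paper does not prove this theorem at all: it is imported verbatim from the cited reference \citep{Borwein:05}, and the only place the paper engages with its proof is in Appendix C.1, where the argument for Theorem~\ref{lemma:RegBayes} invokes the decoupling lemma (Lemma 4.3.1 of \citep{Borwein:05}) as a black box. So there is no in-paper proof to compare against; your proposal has to be judged on its own terms, and on those terms it is the standard and correct perturbation-function argument. The weak-duality half via Fenchel--Young and the adjoint identity is complete as written. The strong-duality half is also correctly organized: defining $v(b)=\inf_x\{f(x)+g(Ax+b)\}$, noting $v(0)=t$ and $\mathrm{dom}\,v=\mathrm{dom}\,g-A\,\mathrm{dom}\,f$, and then converting a subgradient $\phi\in\partial v(0)$ into the dual point $x^\ast=-\phi$ via the substitution $b=y-Ax$ and the split of the double infimum is exactly right, including the signs, and it delivers attainment of the supremum for free.

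The one place you should be more careful is the step you yourself flag as the main obstacle. In an infinite-dimensional Banach space, Hahn--Banach separation of $\mathrm{epi}\,v$ from the point $(0,v(0)-\varepsilon)$ does not by itself produce a nonvertical supporting functional: a convex function can be finite at a core point of its domain and still fail to be subdifferentiable there. What actually rules out the vertical hyperplane is that $v$ is bounded above on a neighborhood of $0$, and establishing this from $0\in\mathrm{core}(\mathrm{dom}\,g-A\,\mathrm{dom}\,f)$ requires the completeness of $\mathcal{X}$ and $\mathcal{B}$ together with the lower semicontinuity of $f$ and $g$, via a Baire-category or open-mapping argument for convex multifunctions (this is precisely the content of the decoupling lemma the paper cites). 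Under the alternative qualification $A\,\mathrm{dom}f\cap\mathrm{cont}\,g\neq\emptyset$ the local upper bound on $v$ near $0$ is immediate from the local boundedness of $g$ at the continuity point, so that branch of your argument is easier and essentially complete. If you make explicit that the core branch leans on this completeness-based local boundedness result rather than on bare separation, the proof sketch is a faithful reconstruction of the argument in the cited source.
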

Let $\mathcal{S}$ be a subset of a Banach space $\mathcal{B}$. In the above theorem, we say $s$ is in the {\it core} of $\mathcal{S}$, denoted by $s \in \textrm{core}( \mathcal{S} )$, provided that $\cup_{\lambda > 0} \lambda(\mathcal{S} - s) = \mathcal{B}$.

The Fenchel duality theorem has been applied to solve divergence minimization problems for density estimation~\citep{Altun:COLT06,Dudik:07}. 
Let $\psiv \defEq (\psi_1, \cdots, \psi_T)$ be a vector of feature functions. {Each feature function is a mapping, $\psi_t: \ms \to \mathbb{R}$. Therefore, $\mathcal{B}$ is the product space $\mathbb{R}^T$, a simple Banach space. Let $\mathcal{X}$ be the Banach space of finite signed measures (with total variation as the norm) that are absolutely continuous with respect to the measure $\mu$, and let $A$ be the expectation operator of the feature functions with respect to the distribution $q$ on $\ms$, that is, $Aq \defEq \ep_{\model \sim q}[\psiv(\model)]$, where $\psiv(\model) = (\psi_1(\model), \cdots, \psi_T(\model))$. Let $\tilde{\psiv}$ be a reference point in $\mathbb{R}^T$. As for density estimation, we have some observations of $\model$ here, and $\tilde{\psiv} = A p_{\textrm{emp}}[ \psiv(\model) ]$, where $p_{\textrm{emp}}$ is the empirical distribution.}
Then, when the $f$ function is a KL-divergence and the constraints are relaxed moment matching constraints, the following result can be proven.
\begin{lemma}[KL-divergence with Constraints \citep{Altun:COLT06}]\label{lemma:KLConstraints}
\begin{eqnarray}
&& \inf_q \Big\{ \KL(q \Vert p)~\mathrm{s.t.}: \Vert \ep_q[\psiv] - \tilde{\psiv} \Vert_{\mathcal{B}} \leq \epsilon~and~ q \in \mathcal{P}_{\mathrm{prob}} \Big\} \\
= ~ && \sup_{\phiv}\Big\{ \langle \phiv, \tilde{\psiv} \rangle - \log \int_{\ms} p(\model) \exp(\langle \phiv, \psiv(\model) \rangle ) \ud \mu(\model) - \epsilon \Vert \phiv \Vert_{\mathcal{B}^\ast}  \Big\}, \nonumber
\end{eqnarray}
where the unique solution is given by $\hat{q}_{\hat{\phiv}}(\model) = p(\model)\exp(\langle \hat{\phiv}, \psiv(\model) \rangle - \Lambda_{\hat{\phiv}} )$; $\hat{\phiv}$ is the solution of the dual problem; and $\Lambda_{\hat{\phiv}}$ is the log-partition function.
\end{lemma}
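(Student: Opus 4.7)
The plan is to recognize the identity as an instance of Fenchel duality (Theorem~\ref{thrm:FenchelDuality}) specialized to a KL-objective with an $\epsilon$-ball moment constraint. On the Banach space $\mathcal{X}$ of finite signed measures absolutely continuous with respect to $\mu$, with $\mathcal{B}=\mathbb{R}^T$ equipped with an arbitrary norm $\Vert\cdot\Vert_{\mathcal{B}}$, I would set $f(q) = \KL(q\Vert p) + \delta_{\mathcal{P}_{\mathrm{prob}}}(q)$, $g(y) = \delta_{\overline{B}_\epsilon(\tilde{\psiv})}(y)$, and $A q = \ep_q[\psiv]$, where $\delta_S$ denotes the $\{0,+\infty\}$-valued convex indicator of a set $S$. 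The primal in the lemma is then exactly $\inf_q \{f(q)+g(Aq)\}$, and Theorem~\ref{thrm:FenchelDuality} produces the dual $\sup_{\phiv} \{-f^*(A^*\phiv)-g^*(-\phiv)\}$.

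Next I would compute the two conjugates. For the ball-indicator, a direct calculation gives $g^*(z) = \langle z,\tilde{\psiv}\rangle + \epsilon \Vert z \Vert_{\mathcal{B}^*}$, so $-g^*(-\phiv) = \langle \phiv, \tilde{\psiv}\rangle - \epsilon\Vert\phiv\Vert_{\mathcal{B}^*}$, matching the last two summands of the dual. For $f$, I would apply the classical Gibbs--Donsker--Varadhan variational identity: $A^*\phiv$ is the linear functional $q\mapsto \ep_q[\langle\phiv,\psiv\rangle]$, and the conjugate of the normality-constrained KL at this functional equals the log-partition
\[
f^*(A^*\phiv) = \log\int_{\ms} p(\model)\exp(\langle\phiv,\psiv(\model)\rangle)\,\ud\mu(\model),
\]
with the hard normalization built into $f$ producing the logarithm. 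Summing $-f^*(A^*\phiv)-g^*(-\phiv)$ reproduces the right-hand side of the lemma exactly.

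To promote weak to strong duality and secure attainment, I would verify the constraint qualification of Theorem~\ref{thrm:FenchelDuality}. Because $g$ is the indicator of a set with non-empty interior in the finite-dimensional range space $\mathcal{B}$, the condition $0\in\mathrm{core}(\mathrm{dom}\,g - A\,\mathrm{dom}\,f)$ reduces to a Slater-type feasibility statement: there exists a probability measure $q_0 \ll p$ with $\Vert Aq_0 - \tilde{\psiv}\Vert_{\mathcal{B}} < \epsilon$. Under this (standard) hypothesis, together with lower semicontinuity of $\KL(\cdot\Vert p)$ and of the ball indicator, Theorem~\ref{thrm:FenchelDuality} delivers equality of primal and dual values and attainment of the supremum at some $\hat{\phiv}$.

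Finally, I would extract the unique primal minimizer from the Fenchel equality case. At the dual optimum $A^*\hat{\phiv} \in \partial f(\hat{q})$; combined with the first-order optimality condition for a KL objective restricted to probability measures, this yields $\log(\hat{q}(\model)/p(\model)) = \langle\hat{\phiv},\psiv(\model)\rangle - \Lambda_{\hat{\phiv}}$, where $\Lambda_{\hat{\phiv}}$ is the Lagrange multiplier enforcing normalization and is readily identified with the log-partition. Uniqueness follows from the strict convexity of $\KL(\cdot \Vert p)$ on its effective domain. The main obstacle I anticipate is the rigorous computation of $f^*$ on this infinite-dimensional measure space: the normalization constraint baked into $f$ must be carried through via an explicit Lagrange multiplier which then has to be matched to the log-partition. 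Verifying the Slater condition rigorously is the other delicate point, requiring either an interior-point hypothesis on $\tilde{\psiv}$ relative to the moment set realizable by densities with respect to $p$, or an $\epsilon$-ball weakening thereof.
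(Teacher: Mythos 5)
Your proposal is correct and follows essentially the same route the paper takes: the paper itself defers this lemma to Altun and Smola, but its proof of the generalization (Theorem~\ref{lemma:RegBayes}, Appendix C.1) uses exactly the Fenchel-duality decomposition you describe, with $g$ the indicator of the $\epsilon$-ball (whose conjugate is the support function $\langle\cdot,\tilde{\psiv}\rangle+\epsilon\Vert\cdot\Vert_{\mathcal{B}^\ast}$) and $f^\ast$ identified with the log-partition function, yielding the Gibbs-form optimizer. The only difference is bookkeeping: the paper treats the general convex $g$ once and recovers this hard-constraint case by specialization, whereas you compute the ball conjugate directly.
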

Note that for this lemma and the ones to be presented below to hold, the problems need to meet some regularity conditions (or constraint qualifications), such as those in Theorem~\ref{thrm:FenchelDuality}. In practice it can be difficult to check whether the constraint qualifications hold. One solution is to solve the dual optimization problem and examine if the conditions hold depending on whether the solution diverge or not~\citep{Altun:COLT06}.

The problem in the above lemma is subject to hard constraints, therefore the corresponding $g$ is the indicator function $\indicator(\Vert \ep_q[\psiv] - \tilde{\psiv} \Vert_{\mathcal{B}} \leq \epsilon)$ when applying the Fenchel duality theorem. Other examples of the posterior constraints can be found in~\citep{Dudik:07,McCallum:jmlr10,Taskar:postreg10}, as we have discussed in Section~\ref{sec:regBayes}. In this paper, we consider the general soft constraints as defined in the RegBayes problem~(Eq. (\ref{eq:constraindBayes})). Furthermore, we do not assume the existence of a fully observed dataset to compute the empirical expectation $\tilde{\phiv}$. Specifically, following a similar line of reasoning as in~\citep{Altun:COLT06}, though this time with an un-normalized $p$ in $\KL(q\Vert p)$, we have the following result. The detailed proof is deferred to Appendix C.1.
\begin{theorem}[Representation theorem of RegBayes]\label{lemma:RegBayes}
Let $E$ be the expectation operator with feature functions $\psiv(\model;\data)$, and assume $g$ is convex and lower semicontinuous (lsc). We have
\begin{eqnarray}
&& \inf_{q(\model)} \Big\{ \KL(q(\model) \Vert p(\model, \data)) + g(Eq)~\mathrm{s.t.}: q(\model) \in \mathcal{P}_{\mathrm{prob}} \Big\} \\
= ~ && \sup_{\phiv}\Big\{ - \log \int_{\ms}  p(\model, \data) \exp(\langle \phiv, \psiv(\model;\data) \rangle ) \ud \mu(\model) - g^\ast( - \phiv) \Big\}, \nonumber
\end{eqnarray}
where the unique solution is given by $\hat{q}_{\hat{\phiv}}(\model) = p(\model, \data)\exp(\langle \hat{\phiv}, \psiv(\model;\data) \rangle - \Lambda_{\hat{\phiv}} )$; and $\hat{\phiv}$ is the solution of the dual problem; and $\Lambda_{\hat{\phiv}}$ is the log-partition function.
\end{theorem}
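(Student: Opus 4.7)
The plan is to recognize the primal problem as an instance of the Fenchel duality framework (Theorem~\ref{thrm:FenchelDuality}) with $f(q) = \KL(q(\model) \Vert p(\model,\data))$ restricted to $\mathcal{P}_{\textrm{prob}}$, the bounded linear operator $A: q \mapsto \ep_q[\psiv(\model;\data)] \in \mathbb{R}^T$, and $g$ the given convex lsc regularizer on $\mathbb{R}^T$. Once the three ingredients are identified, the equality of primal and dual values should follow from a direct application of the duality theorem, provided we verify the constraint qualification, and the closed-form of $\hat q_{\hat \phiv}$ should drop out of the stationarity condition in the inner maximization computing $f^*$.

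First, I would compute the adjoint $A^*: \mathbb{R}^T \to \mathcal{X}^*$. Since $\langle Aq, \phiv \rangle = \ep_q[\langle \phiv, \psiv(\model;\data)\rangle]$, the adjoint acts by $A^*\phiv = \langle \phiv, \psiv(\cdot\,;\data)\rangle$ as a measurable function on $\ms$. Second, I would compute the conjugate $f^*$ evaluated at $A^*\phiv$, i.e.
\begin{eqnarray*}
f^*(A^*\phiv) = \sup_{q \in \mathcal{P}_{\textrm{prob}}} \Big\{ \ep_q[\langle \phiv,\psiv(\model;\data)\rangle] - \KL(q(\model) \Vert p(\model,\data)) \Big\}.
\end{eqnarray*}
This is a tilted maximum-entropy problem with an unnormalized reference measure $p(\model,\data)$. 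Introducing a Lagrange multiplier for the normalization constraint $\int q\, \ud\mu = 1$ and differentiating, the stationarity condition yields $q^\star(\model) \propto p(\model,\data)\exp(\langle \phiv, \psiv(\model;\data)\rangle)$; plugging back gives $f^*(A^*\phiv) = \log \int_{\ms} p(\model,\data)\exp(\langle \phiv,\psiv(\model;\data)\rangle)\,\ud\mu(\model) = \Lambda_\phiv$. This is the key technical calculation because it is where the abuse of the $\KL$ notation for the unnormalized joint $p(\model,\data)$ has to be handled carefully; the answer is the standard log-partition function precisely because of the extra $-\log p(\data)$ that a normalized treatment would have contributed.

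Third, plugging this into the Fenchel dual $\sup_{\phiv \in \mathbb{R}^T} \{ -f^*(A^*\phiv) - g^*(-\phiv) \}$ gives exactly the right-hand side of the claim. For the equality $t = d$, I would check the constraint qualification of Theorem~\ref{thrm:FenchelDuality}: $g$ is lsc by hypothesis, and $f$ (the KL-divergence on probability measures) is lsc in the total variation norm on $\mathcal{X}$; continuity of $g$ on $\mathbb{R}^T$ together with $A \,\mathrm{dom}\, f \cap \mathrm{cont}\, g \neq \emptyset$ (any $q \in \mathcal{P}_{\textrm{prob}}$ absolutely continuous with respect to $\Pi$ with bounded expectations suffices) delivers strong duality and dual attainment. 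Fourth, for the unique primal solution, I would argue that $f$ is strictly convex on $\mathcal{P}_{\textrm{prob}}$ (strict convexity of $\KL$), so that $f + g \circ A$ is strictly convex, and the unique primal optimizer is recovered from $\hat\phiv$ via the stationarity relation above, i.e. $\hat q_{\hat\phiv}(\model) = p(\model,\data)\exp(\langle \hat\phiv, \psiv(\model;\data)\rangle - \Lambda_{\hat\phiv})$.

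The main obstacle I anticipate is not the formal chain of dualities but the careful bookkeeping around the unnormalized $p(\model,\data)$ inside the KL, and verifying the constraint qualification in the Banach space of finite signed measures. In particular, one must ensure that the supremum in $f^*(A^*\phiv)$ is finite (which requires $p(\model,\data)\exp\langle \phiv,\psiv\rangle$ to be $\mu$-integrable for the $\phiv$ of interest; as noted after Lemma~\ref{lemma:KLConstraints}, one can in practice diagnose failures by observing divergence of the dual) and that the chosen function spaces make $A$ a bounded linear map from $\mathcal{X}$ into $\mathbb{R}^T$, which is automatic when the coordinate feature functions $\psi_t(\,\cdot\,;\data)$ are bounded. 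Modulo these regularity points, the theorem follows by specializing Theorem~\ref{thrm:FenchelDuality} to the $(f,g,A)$ identified above, mirroring the route used to derive Lemma~\ref{lemma:KLConstraints} but now (a) with a general convex lsc $g$ rather than a hard-constraint indicator and (b) with an unnormalized reference measure.
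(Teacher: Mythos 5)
Your proposal is correct and follows essentially the same route as the paper's proof in Appendix C.1: identify $f$, $g$, and the expectation operator with its adjoint $E^\ast\phiv = \langle \phiv, \psiv(\cdot;\data)\rangle$, compute the conjugate of the KL term as the log-partition function, and invoke Fenchel duality to obtain both the dual objective and the exponentially tilted form of $\hat{q}_{\hat{\phiv}}$. The only cosmetic difference is that you treat the unnormalized reference measure $p(\model,\data)$ directly inside the conjugate, whereas the paper first splits off the constant $-\log p(\data)$, works with the normalized posterior, and absorbs the constant back at the end; these are equivalent.
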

From the optimum solution $\hat{q}_{\hat{\phiv}}(\model)$, we can see that the form of the RegBayes posterior is symbolically similar to that of the Bayesian posterior; but instead of multiplying the likelihood term with a prior distribution, RegBayes introduces an extra term, $\exp(\langle \hat{\phiv}, \psiv(\model;\data) \rangle - \Lambda_{\hat{\phiv}} )$, whose coefficients are derived from an constrained optimization problem resultant from the constraints on the posterior. We make the following remarks.
{
\begin{remark}[Putting constraints on priors is a special case of RegBayes]\label{remarkPriorConstraint}
If both the feature function $\psiv(\model;\data)$ and $\hat{\phi}$ depend on the model $\model$ only, this extra term contributes to define a new prior $\pi^\prime(\model) \propto \pi(\model) \exp(\langle \hat{\phiv}, \psiv(\model;\data) \rangle - \Lambda_{\hat{\phiv}} )$. For example, if we constrain the model space to a subset $\ms_0 \subset \ms$ a priori, this constraint can be incorporated in RegBayes by defining the expectation constraint on $\model$ only. Specifically, define the single feature function $\psi(\model)$: $\psi(\model) = 0$ if $\model \in \ms_0$, otherwise $1$; and define the simple posterior regularization $g(Eq) = \indicator(\ep_q[ \psi(\model) ] = 0 )$. Then, by Theorem~\ref{lemma:RegBayes},\footnote{We also used the fact that if $f(x) = \indicator(x = c)$ is an indicator function, its conjugate is $f^\ast(\mu) = c \cdot \mu$.} we have $\hat{\phi} = -\infty$ and $\hat{q}_{\hat{\phi}}(\model)  \propto \pi^\prime(\model) p(\data|\model) $, where $\pi^\prime(\model) \propto \pi(\model) \indicator( \model \in \ms_0)$ is the constrained prior. Therefore, such a constraint lets RegBayes cover the widely used truncated priors, such as truncated Gaussian~\citep{Christian:1995}.
\end{remark}
\begin{remark}[RegBayes is more flexible than Bayes' rule] For the more general case where $\psiv(\model;\data)$ depends on both $\model$ and $\data$, the term $p(\model, \data)\exp(\langle \hat{\phiv}, \psiv(\model;\data) \rangle)$ implicitly defines a joint distribution on $(\model,\data)$ if it has a finite measure. In this case, RegBayes is doing implicit Bayesian conditionalization, that is, the posterior $\hat{q}_{\hat{\phiv}}(\model)$ can be obtained through Bayes' rule with some well-defined prior and likelihood. However, it could be that the integral of $p(\model, \data)\exp(\langle \hat{\phiv}, \psiv(\model;\data) \rangle)$ with respect to $(\model, \data)$ is not finite because of the way $\hat{\phiv}$ varies with $\data$,\footnote{Note: this does not affect the well-normalization of the posterior $\hat{q}_{\hat{\phiv}}(\model)$ because its integral is taken over $\model$ only, with $\data$ fixed.} in which case there is no implicit prior and likelihood that give back $\hat{q}_{\hat{\phiv}}(\model)$ through Bayesian conditionalization. Therefore, RegBayes is more flexible than the standard Bayesian inference, where the prior and likelihood model are explicitly defined, but no additional constraints or regularization can be systematically incorporated. The recent work~\citep{Zhu:RegBayes-icml14} presents an example. Specifically, we show that incorporating domain knowledge via posterior regularization can lead to a flexible framework that automatically learns the importance of each piece of knowledge, thereby allowing for a robust incorporation, which is important in the scenarios where noisy knowledge is collected from crowds. In contrast, eliciting expert knowledge via fitting some priors is generally hard, especially in high-dimensional spaces, as experts are normally good at perceiving low-dimensional and well-behaved distributions but can be very bad in perceiving high-dimensional or skewed distributions~\citep{Garthwaite:jasa05}.
\end{remark}}

It is worth mentioning that although the above theorem provides a generic representation of the solution to RegBayes, in practice we usually need to make additional assumptions in order to make either the primal or dual problem tractable to solve. Since such assumptions could make the feasible space non-convex, additional cautions need to be paid. For instance, the mean-field assumptions will lead to a non-convex feasible space~\citep{Wainwright:book08}, and we can only apply the convex analysis theory to deal with convex sub-problems within an EM-type procedure. More concrete examples will be provided later along the developments of various models. We should also note that the modeling flexibility of RegBayes comes with risks. For example, it might lead to inconsistent posteriors~\citep{Barron:99,Choi:08}. This paper focuses on presenting several practical instances of RegBayes and we leave a systematic analysis of the Bayesian asymptotic properties (e.g., posterior consistency and convergence rates) for future work.

Now, we derive the conjugate functions of three examples which will be used shortly for developing the infinite latent SVM models we have intended. We defer the proof to Appendix C.
Specifically, the first one is the conjugate of a simple function, which will be used in a binary latent SVM classification model.
\begin{lemma}\label{proposition:ConjugateBinaryFunc}
Let $g_0: \mathbb{R} \to \mathbb{R}$ be defined as $g_0(x) = C \max(0, x)$. Then, we have $$g_0^\ast(\mu) = \indicator(0 \leq \mu \leq C).$$
\end{lemma}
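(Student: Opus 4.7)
The plan is to apply the definition of the convex conjugate directly and evaluate the supremum by a simple case split on the sign of $x$. Concretely, by definition
\[
g_0^\ast(\mu) = \sup_{x \in \mathbb{R}} \bigl\{ \mu x - C \max(0,x) \bigr\},
\]
and the hinge-type function $C\max(0,x)$ is piecewise linear with a kink at the origin, so I would split $\mathbb{R}$ into the half-lines $x \geq 0$ and $x \leq 0$ and compute the supremum on each piece.

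On the half-line $x \geq 0$ the objective reduces to the linear function $(\mu - C)x$. Its supremum over $[0,\infty)$ is $0$ (attained at $x=0$) when $\mu \leq C$, and $+\infty$ otherwise. On the half-line $x \leq 0$ the objective reduces to $\mu x$, whose supremum over $(-\infty,0]$ is $0$ (attained at $x=0$) when $\mu \geq 0$, and $+\infty$ otherwise. Taking the maximum of these two suprema, I conclude that $g_0^\ast(\mu) = 0$ precisely when both $\mu \geq 0$ and $\mu \leq C$ hold, and $g_0^\ast(\mu) = +\infty$ otherwise, which is exactly the indicator $\indicator(0 \leq \mu \leq C)$ under the paper's convention that $\indicator(c)$ equals $0$ if $c$ holds and $\infty$ otherwise.

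There is essentially no obstacle; the entire argument is a one-line case analysis because $g_0$ is positively homogeneous of degree one, whose conjugate is always the indicator of the subdifferential at the origin $\partial g_0(0) = [0,C]$. I would include the two-case computation for explicitness since the paper seems to target a self-contained exposition, and I would also note the trivial sanity check that $g_0$ is convex and lower semicontinuous so that biconjugation $g_0^{\ast\ast} = g_0$ holds, which one can quickly verify from the derived formula.
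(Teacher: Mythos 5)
Your proof is correct and follows essentially the same route as the paper's: a direct evaluation of the supremum in the definition of the conjugate via an elementary case split (you split on the sign of $x$, the paper splits on the sign of $\mu$, but the computation is identical). The added remark about positive homogeneity and $\partial g_0(0) = [0,C]$ is a nice sanity check but not needed.
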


The second function is slightly more complex, which will be used for defining a multi-way latent SVM classifier. Specifically, we define the function $g_1:~\mathbb{R}^L \to \mathbb{R}$ as
\begin{eqnarray}\label{eq:gFunc}
g_1(\xv) = C \max(\xv),
\end{eqnarray}
where $\max(\xv) \defEq \max(x_1, \cdots, x_L)$. Apparently, $g_1$ is convex because it is a point-wise maximum~\citep{Boyd:04} of the simple linear functions $\phi_i(\xv) = x_i$. Then, we have the following results.
\begin{lemma}\label{proposition:ConjugateMaxFunc}
The convex conjugate of $g_1(\xv)$ as defined above is
\begin{eqnarray}
g_1^\ast(\muv) = \indicator\Big( \forall i, \mu_i \geq 0;~and~\sum_i \mu_i = C\Big) .\nonumber
\end{eqnarray}
\end{lemma}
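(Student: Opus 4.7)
The plan is to compute $g_1^\ast(\muv) = \sup_{\xv \in \mathbb{R}^L}\bigl\{\langle \muv, \xv\rangle - C\max(\xv)\bigr\}$ directly from the definition, and to show that this supremum equals $0$ precisely when $\muv$ lies in the simplex $\{\muv : \mu_i \geq 0,\ \sum_i \mu_i = C\}$ and $+\infty$ otherwise. The natural way to proceed is a three-case dissection of $\muv$, where each of the first two cases exhibits an explicit unbounded direction.

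First I would dispose of the nonnegativity constraint. Suppose some coordinate $\mu_j < 0$. Choose $\xv^{(t)}$ with $x_j^{(t)} = -t$ for $t > 0$ and $x_i^{(t)} = 0$ for $i \neq j$. Then $\max(\xv^{(t)}) = 0$ and $\langle \muv, \xv^{(t)} \rangle = -\mu_j t \to +\infty$ as $t \to \infty$, so $g_1^\ast(\muv) = +\infty$. Next I would handle the sum constraint by restricting to the diagonal $\xv = (t, t, \ldots, t)$, where $\langle\muv,\xv\rangle - C\max(\xv) = t\bigl(\sum_i \mu_i - C\bigr)$; letting $t \to +\infty$ or $t \to -\infty$ according to the sign of $\sum_i \mu_i - C$ shows that $g_1^\ast(\muv) = +\infty$ whenever $\sum_i \mu_i \neq C$.

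Finally, assume $\mu_i \geq 0$ for all $i$ and $\sum_i \mu_i = C$. Then for every $\xv \in \mathbb{R}^L$,
\begin{equation*}
\langle \muv, \xv \rangle \;=\; \sum_i \mu_i x_i \;\leq\; \Bigl(\sum_i \mu_i\Bigr) \max(\xv) \;=\; C\max(\xv),
\end{equation*}
so the expression inside the sup is always $\leq 0$. Equality is attained on the diagonal (any $\xv$ with all coordinates equal, e.g.\ $\xv = \mathbf{0}$), so the supremum equals $0$. Combining the three cases yields exactly $g_1^\ast(\muv) = \indicator\bigl(\forall i,\ \mu_i \geq 0;\ \sum_i \mu_i = C\bigr)$.

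I do not anticipate any real obstacle: the argument is a textbook conjugate computation whose only subtlety is that the constraints $\mu_i \geq 0$ and $\sum_i \mu_i = C$ arise from two qualitatively different unbounded directions (a single-coordinate ray versus the diagonal ray). The tightness inequality $\sum_i \mu_i x_i \leq (\sum_i \mu_i)\max(\xv)$ in the final case is the only nontrivial step and follows immediately from $x_i \leq \max(\xv)$ together with $\mu_i \geq 0$.
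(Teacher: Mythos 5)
Your proof is correct and follows essentially the same route as the paper's: the paper likewise kills off a negative coordinate by taking the supremum over the ray $\{x_j<0,\ x_i=0\ (i\neq j)\}$, uses the bound $\muv^\top\xv\le(\sum_i\mu_i)\max(\xv)$ for the upper estimate, and restricts to the diagonal $\{x\ev: x\in\mathbb{R}\}$ for the matching lower estimate. Your only cosmetic difference is packaging the sum constraint as a separate third case rather than folding it into an upper-bound/lower-bound sandwich, which changes nothing of substance.
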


Let $y \in \mathbb{R}$ and $\epsilon \in \mathbb{R}_+$ are fixed parameters. The last function that we are interested in is $g_2: \mathbb{R} \to \mathbb{R}$, where
\begin{eqnarray}\label{eq:gFunc2}
g_2(x; y, \epsilon ) = C \max(0, |x - y| - \epsilon).
\end{eqnarray}
Finally, we have the following lemma, which will be used in developing large-margin regression models.
\begin{lemma}\label{lemma:ConjugateMaxFunc2}
The convex conjugate of $g_2(x)$ as defined above is
\begin{eqnarray}
g_2^\ast(\mu; y, \epsilon) = \mu y + \epsilon |\mu| + \indicator\big( |\mu| \leq C \big) .\nonumber
\end{eqnarray}
\end{lemma}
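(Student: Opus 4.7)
The plan is to compute $g_2^\ast(\mu) = \sup_{x \in \mathbb{R}} \{ x\mu - C\max(0, |x-y| - \epsilon) \}$ directly from the definition of the convex conjugate. First I would substitute $t = x - y$, which factors out the linear term $y\mu$ and reduces the task to evaluating $h^\ast(\mu) \defEq \sup_t\{t\mu - C\max(0, |t| - \epsilon)\}$; the original conjugate is then $g_2^\ast(\mu) = y\mu + h^\ast(\mu)$.

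Next I would split the supremum into three regions. On the ``flat'' region $|t| \leq \epsilon$, the penalty vanishes and the integrand reduces to $t\mu$, whose supremum over $[-\epsilon,\epsilon]$ is $\epsilon|\mu|$. On the right tail $t > \epsilon$, the integrand becomes the affine function $(\mu - C)t + C\epsilon$, which is unbounded above precisely when $\mu > C$; otherwise it is non-increasing in $t$ and its supremum (attained in the limit $t \downarrow \epsilon$) equals $\epsilon\mu$. Symmetrically, on the left tail $t < -\epsilon$ the integrand is $(\mu + C)t + C\epsilon$, which is unbounded when $\mu < -C$ and otherwise has supremum $-\epsilon\mu$.

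Combining the three cases yields the dichotomy: if $|\mu| > C$ then $h^\ast(\mu) = +\infty$, while if $|\mu| \leq C$ the three candidate values $\epsilon|\mu|,\, \epsilon\mu,\, -\epsilon\mu$ are all dominated by $\epsilon|\mu|$, so the supremum equals $\epsilon|\mu|$. Equivalently, $h^\ast(\mu) = \epsilon|\mu| + \indicator(|\mu| \leq C)$, and adding back the translation term produces $g_2^\ast(\mu) = \mu y + \epsilon|\mu| + \indicator(|\mu| \leq C)$ as claimed.

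There is no real conceptual obstacle; the only care required is the sign analysis of $\mu \pm C$ to decide when each affine tail is unbounded versus when its endpoint limit dominates. As a cross-check, the result can also be derived structurally by composing three standard facts about conjugates: the translation rule $(f(\cdot - y))^\ast(\mu) = y\mu + f^\ast(\mu)$, the scaling rule $(Cf)^\ast(\mu) = C f^\ast(\mu/C)$, and the known conjugate of the $\epsilon$-insensitive loss $t \mapsto \max(0,|t| - \epsilon)$, which is $\epsilon|\nu| + \indicator(|\nu|\leq 1)$.
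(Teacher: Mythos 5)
Your proof is correct, and it takes a genuinely different route from the paper's. You compute $g_2^\ast$ by a direct, elementary case analysis: after the translation $t = x - y$ extracts the term $y\mu$, you split $\sup_t\{t\mu - C\max(0,|t|-\epsilon)\}$ over the three regions $|t|\le\epsilon$, $t>\epsilon$, $t<-\epsilon$, where the objective is affine, and read off $\epsilon|\mu|$, $\epsilon\mu$ (finite iff $\mu\le C$), and $-\epsilon\mu$ (finite iff $\mu\ge -C$) respectively; all the region analyses and the final comparison are right. The paper instead introduces an auxiliary variable $t\ge 0$, $t\ge|x-y|-\epsilon$ to linearize the max, forms the Lagrangian with multipliers $\alpha,\beta\ge 0$, evaluates the two inner infima to get $-\indicator(\alpha+\beta=C)$ and $-\mu y - \indicator(|\mu|\le\alpha)$, and then optimizes over $(\alpha,\beta)$ by setting $\alpha=|\mu|$. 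What each buys: your argument is self-contained and sidesteps the inf--sup exchange that the paper's Lagrangian step implicitly relies on (strong duality for the inner convex problem), at the cost of a slightly longer sign analysis; the paper's derivation explicitly surfaces the dual multipliers and box constraint $|\mu|\le C$ in the form that reappears in the SVM-style dual problems later in the paper, and it generalizes mechanically to other piecewise-linear losses. Your closing cross-check via the translation and scaling rules for conjugates together with the known conjugate of the $\epsilon$-insensitive loss is also valid and is arguably the cleanest packaging of the result.
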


\section{Infinite Latent Support Vector Machines}\label{sec:ilsvm}



Given the general theoretical framework of RegBayes introduced in Section~\ref{sec:RegBayes}, now we are ready to present its application to the development of two interesting nonparametric RegBayes models. In these two models we conjoin the ideas behind the nonparametric Bayesian infinite feature model known as the Indian buffet process (IBP), and the large margin classifier known as support vector machines (SVM) to build a new class of models for simultaneous single-task (or multi-task) classification and feature learning. A parametric Bayesian model is presented in Appendix B.

Specifically, to illustrate how to develop latent large-margin classifiers and automatically resolve the unknown dimensionality of latent features from data, we demonstrate how to choose/define the three key elements of RegBayes, that is, {\it prior distribution}, {\it likelihood model}, and {\it posterior regularization}.
We first present the single-task classification model. The basic setup is that we project each data example $\xv \in \mathcal{X} \subset \mathbb{R}^D$ to a latent feature vector $\zv$. Here, we consider binary features. Real-valued features can be easily considered by elementwisely multiplying $\zv$ by a Guassian vector~\citep{Griffiths:tr05}. Given a set of $N$ data examples, let $\Zv$ be the matrix, of which each row is a binary vector $\zv_n$ associated with data sample $n$. Instead of pre-specifying a fixed dimension of $\zv$, we resort to the nonparametric Bayesian methods and let $\zv$ have an infinite number of dimensions. To make the expected number of active latent features finite, we employ an IBP as prior for the binary feature matrix $\Zv$, as reviewed below.


\subsection{Indian Buffet Process}

Indian buffet process (IBP) was proposed in~\cite{Griffiths:tr05} and has been successfully applied in various fields, such as link prediction~\citep{Miller:nips09} and multi-task learning~\citep{HalDaume:10}. We will make use of its stick-breaking construction~\citep{YWTeh:aistats07}, which is good for developing efficient inference methods. Let $\pi_k \in (0, 1)$ be a parameter associated with each column of the binary matrix $\Zv$. Given $\pi_k$, each $z_{nk}$ in column $k$ is sampled independently from $\Ber(\pi_k)$. The parameter $\piv$ are generated by a stick-breaking process
\begin{eqnarray}
\pi_1 = \nu_1, ~\textrm{and}~ \pi_k = \nu_k \pi_{k-1} = \prod_{i=1}^k \nu_i,
\end{eqnarray}
where $\nu_i \sim \B(\alpha, 1)$. Since each $\nu_i$ is less than 1, this process generates a decreasing sequence of $\pi_k$. Specifically, given a finite dataset, the probability of seeing feature $k$ decreases exponentially with $k$.

{IBP has several properties. For a finite number of rows, $N$, the prior of the IBP gives zero mass on matrices with an infinite number of ones, as the total number of columns with non-zero entries is $\textrm{Poisson}(\alpha H_N)$, where $H_N$ is the $N$th harmonic number, $H_N = \sum_{j=1}^N \frac{1}{j}$. Thus, $\Zv$ has almost surely only a finite number of non-zero entries, though this number is unbounded. A second property of IBP is that the number of features possessed by each data point follows a $\textrm{Poisson}(\alpha)$ distribution. Therefore, the expected number of non-zero entries in $\Zv$ is $N\alpha$.}

\subsection{Infinite Latent Support Vector Machines}

Consider a single-task, but multi-way classification, where each training data is provided with a categorical label $y \in \mathcal{Y} \defEq \{1, \cdots, L\}$. 
Suppose that the latent features $\zv_n$ for document $n$ are given, then we can define the {\it latent discriminant function} as linear
\begin{eqnarray}\label{eq:latent-func-ilsvm}
f(y, \xv_n, \zv_n; \etav) \defEq \etav^\top \gv(y, \xv_n, \zv_n),
\end{eqnarray}
where $\gv(y, \xv_n, \zv_n)$ is a vector stacking $L$ subvectors\footnote{We can consider the input features $\xv_n$ or its certain statistics in combination with the latent features $\zv_n$ to define a classifier boundary, by simply concatenating them in the subvectors.} of which the $y$th is $\zv_n^\top$ and all the others are zero; $\etav$ is the corresponding infinite-dimensional 
vector of feature weights. Since we are doing Bayesian inference, we need to maintain the entire distribution profile of the latent feature matrix $\Zv$. However, in order to make a prediction on the observed data $\xv$, we need to remove the uncertainty of $\Zv$. Here, we define the {\it effective discriminant function} as an expectation\footnote{Although other choices such as taking the mode are possible, our choice could lead to a computationally easy problem because expectation is a linear functional of the distribution under which the expectation is taken. Moreover, expectation can be more robust than taking the mode~\citep{Murphy:nips10}, and it has been widely used in~\citep{Zhu:MedLDA09,Zhu:iSVM11}.} (i.e., a weighted average considering all possible values of $\Zv$) of the latent discriminant function. To fully explore the flexibility offered by Bayesian inference, we also treat $\etav$ as random and aim to infer its posterior distribution from given data. For the prior, we assume all the dimensions of $\etav$ are independent and each dimension $\eta_k$ follows the standard normal distribution. This is in fact a Gaussian process (GP) prior as $\etav$ is infinite dimensional. More formally, the effective discriminant function $f: \mathcal{X}\times\mathcal{Y} \mapsto \mathbb{R}$ is
\begin{eqnarray}\label{eq:effective-disc-func}
f\big(y, \xv_n; q(\Zv, \etav, \Wv)\big) &\defEq& \ep_{q(\Zv, \etav, \Wv)} \big\lbrack f(y, \xv_n, \zv_n; \etav) \big\rbrack \\
&=& \ep_{q(\Zv,\etav, \Wv)}\big\lbrack \etav^\top \gv(y, \xv_n, \zv_n)\big\rbrack, \nonumber
\end{eqnarray}
where $q(\Zv, \etav, \Wv)$ is the post-data posterior distribution we want to infer. We have included $\Wv$ as a place holder for any other variables we may define, e.g., the variables arising from a data likelihood model. Since we are taking the expectation, the variables which do not appear in the feature map $\gv$ (i.e., $\Wv$) will be marginalized out.

{Before moving on, we should note that since we require $q$ to be absolutely continuous with respect to the prior to make the KL-divergence term well defined in the RegBayes problem, $q(\Zv)$ will also put zero mass on $\Zv$'s with an infinite number of non-zero entries, because of the properties of the IBP prior. The sparsity of $\Zv$ is essential to ensure that the dot-product in Eq.~(\ref{eq:latent-func-ilsvm}) and the expectation in Eq.~(\ref{eq:effective-disc-func}) are well defined, i.e., with finite values\footnote{A more rigorous derivation of finiteness of these quantities is beyond the scope of this work and could require additional technical conditions~\citep{Orbanz:2012}. We refer the readers to~\citep{Stummer:it2012} for a generic definition of Bregman divergence (or KL divergence in particular) on Banach spaces and in the case where the second measure is unnormalized.}. Moreover, in practice, to make the problem computationally feasible, we usually set a finite upper bound $K$ to the number of possible features, where $K$ is sufficiently large and known as the truncation level (See Section~\ref{sec:inference} and Appendix D.2 for details). As shown in~\citep{Doshi-Velez:09}, the $\ell_1$-distance truncation error of marginal distributions decreases exponentially as $K$ increases. For a finite truncation level, all the expectations are definitely finite.}

Let $\iTrain$ denote the set of training data. Then, with the above definitions, we define the $\mathcal{P}_{\mathrm{post}}(\xiv)$ in problem~(\ref{eq:constraindBayes}) using soft\footnote{Hard constraints for the separable cases are covered by simply setting $\xiv=0$.} large-margin constraints as
\setlength\arraycolsep{1pt} \begin{eqnarray}\label{eq:svmConstraints}
\mathcal{P}_{\mathrm{post}}^{c}(\xiv) \defEq \left\{ q(\Zv, \etav, \Wv)~ \begin{array}{|cl}
\forall n \in \iTrain: &  \Delta f(y, \xv_n; q(\Zv, \etav, \Wv))  \geq  \ell_n^\Delta(y) - \xi_n, \forall y \\
{} & \xi_n \geq 0
\end{array}\right\}, \nonumber
\end{eqnarray}
where $ \Delta f(y, \xv_n; q(\Zv, \etav, \Wv)) \defEq f(y_n, \xv_n; q(\Zv, \etav, \Wv)) - f(y, \xv_n; q(\Zv, \etav, \Wv))$ is the margin favored by the true label $y_n$ over an arbitrary label $y$ and the superscript is used to distinguish from the posterior constraints for multi-task iLSVM to be presented. We define the penalty function for classification as $$U^c(\xiv) \defEq C\sum_{n \in \iTrain} \xi_n^\kappa,$$ where $\kappa \geq 1$. If $\kappa$ is 1, minimizing $U^c(\xiv)$ is equivalent to minimizing the hinge-loss (or $\ell_1$-loss) $\mathcal{R}^c_h$ of the averaging prediction rule~(\ref{eq:predrule}), where $$\mathcal{R}^c_h(q(\Zv, \etav, \Wv)) = C\sum_{n \in \iTrain} \max_y\big( \ell_n^\Delta(y) - \Delta f(y_n, \xv_n; q(\Zv, \etav, \Wv)) \big);$$ if $\kappa$ is 2, the surrogate loss is the squared $\ell_2$-loss. For clarity, we consider the hinge loss. The non-negative cost function $\ell_n^\Delta(y)$ (e.g., 0/1-cost) measures the cost of predicting $\xv_n$ to be $y$ when its true label is $y_n$. $\iTrain$ is the index set of training data.

Besides performing the prediction task, we may also be interested in explaining observed data $\xv$ using the latent factors $\Zv$. This can be done by defining a likelihood model $p(\xv|\Zv)$. 
Here, we define the most common linear-Gaussian likelihood model for real-valued data
\begin{eqnarray}\label{eq:iLSVM-GaussLikelihood}
p\big( \xv_n | \zv_n, \Wv, \sigma_{n0}^2\big) = \mathcal{N}\big(\xv_n | \Wv\zv_n^\top, \sigma_{n0}^2 I\big),
\end{eqnarray}
where $\Wv$ is a $D \times \infty$ random loading matrix. We assume $\Wv$ follows an independent Gaussian prior {and each entry has the prior distribution $\pi(w_{dk}) = \mathcal{N}(w_{dk}|0, \sigma_{0}^2)$}. The hyperparameters $\sigma_0^2$ and $\sigma_{n0}^2$ can be set a priori or estimated from observed data (See Appendix D.2 for details). Figure~\ref{fig:multitaskInfLSVM} (a) shows the graphical structure of iLSVM as defined above, where the plate means $N$ replicates. 



{\bf Training}: Putting the above definitions together, we get the RegBayes problem for iLSVM in the following two equivalent forms
\begin{eqnarray}\label{eq:iLSVMconstrained}
\inf_{q(\Zv, \etav, \Wv), \xiv} && \KL(q(\Zv, \etav, \Wv) \Vert p(\Zv, \etav, \Wv, \data)) + U^c(\xiv)  \\
\mathrm{s.t.}:~&& q(\Zv, \etav, \Wv) \in \mathcal{P}_{\mathrm{post}}^c(\xiv) \nonumber
\end{eqnarray}
\begin{eqnarray} \label{eq:iLSVMunconstrained}
\iff ~~\inf_{q(\Zv, \etav, \Wv) \in \mathcal{P}_{\mathrm{prob}}} && \KL(q(\Zv, \etav, \Wv) \Vert p(\Zv, \etav, \Wv, \data)) + \mathcal{R}^c_h(q(\Zv, \etav, \Wv)),
\end{eqnarray}
where $p(\Zv,\etav, \Wv, \data) = \pi(\etav) \pi(\Zv) \pi(\Wv) \prod_{n=1}^N p(\xv_n | \zv_n, \Wv, \sigma_{n0}^2)$ is the joint distribution of the model; $\pi(\Zv)$ is an IBP prior; and $\pi(\etav)$ and $\pi(\Wv)$ are Gaussian process priors with identity covariance functions.

Directly solving the iLSVM problems is not easy because either the posterior constraints or the non-smooth regularization function $\mathcal{R}^c$ is hard to deal with. Thus, we resort to convex duality theory, which will be useful for developing approximate inference algorithms. 
We can either solve the constrained form (E.q. (\ref{eq:iLSVMconstrained})) using Lagrangian duality theory~\citep{Ito:08} or solve the unconstrained form (E.q. (\ref{eq:iLSVMunconstrained})) using Fenchel duality theory. Here, we take the second approach. In this case, the linear operator is the expectation operator, denoted by $E:~\mathcal{P}_{\textrm{prob}} \to \mathbb{R}^{|\iTrain| \times L}$ and the element of $Eq$ evaluated at $y$ for the $n$th example is
\begin{eqnarray}
Eq(n, y) \defEq  \Delta f\big(y, \xv_n; q(\Zv, \etav, \Wv)\big) = \ep_{q(\Zv, \etav, \Wv)}\big[ \etav^\top \Delta \gv_n(y, \Zv)\big],
\end{eqnarray}
where $\Delta \gv_n(y, \Zv) \defEq \gv(y_n, \xv_n, \zv) - \gv(y, \xv_n, \zv)$. 
Then, let $g_1:~\mathbb{R}^{L} \to \mathbb{R}$ be a function defined in the same form as in Eq.~(\ref{eq:gFunc}). We have $$\mathcal{R}_h^c\big(q(\Zv, \etav, \Wv)\big) = \sum_{n \in \iTrain} g_1\big( \ell_n^\Delta - Eq(n) \big),$$ where $Eq(n) \defEq (Eq(n, 1), \cdots, Eq(n, L))$ and $\ell^\Delta_n \defEq (\ell_n^\Delta(1), \cdots, \ell_n^\Delta(L))$ are the vectors of elements evaluated for $n$th data. By the Fenchel's duality theorem and the results in Lemma~\ref{proposition:ConjugateMaxFunc}, we can derive the conjugate of the problem~(\ref{eq:iLSVMunconstrained}). The proof is deferred to Appendix C.4.
\begin{lemma}[Conjugate of iLSVM]\label{lemma:conjugateiLSVM}
For the iLSVM problem, we have that
\begin{eqnarray}
 \inf_{q(\Zv, \etav, \Wv) \in \mathcal{P}_{\mathrm{prob}}} && \KL\big(q(\Zv, \etav, \Wv) \Vert p(\Zv, \etav, \Wv, \data)\big) + \mathcal{R}^c_h\big(q(\Zv, \etav, \Wv)\big) \\
 = ~~~~~~~~ \sup_{\omegav}~~~~~ && - \log Z(\omegav | \data) + \sum_{n \in \iTrain} \sum_y \omega_n^y \ell_n^\Delta (y) - \sum_n g_1^\ast(\omegav_n), \nonumber
\end{eqnarray}
where $\omegav_n = (\omega_n^1, \cdots, \omega_n^L)$ is the subvector associated with data $n$. Moreover, The optimum distribution is the posterior distribution
\begin{eqnarray}
\hat{q}(\Zv, \etav, \Wv) = \frac{1}{Z(\hat{\omegav}  | \data )}p(\Zv, \etav, \Wv, \data)\exp\Big\{ \sum_{n \in \iTrain} \sum_y \hat{\omega}_n^y \etav^\top \Delta \gv_n(y, Z) \Big\},
\end{eqnarray}
where $Z(\hat{\omegav}  | \data )$ is the normalization factor and $\hat{\omegav}$ is the solution of the dual problem.
\end{lemma}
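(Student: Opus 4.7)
The plan is to apply the RegBayes representation theorem (Theorem~\ref{lemma:RegBayes}) directly, using Lemma~\ref{proposition:ConjugateMaxFunc} to evaluate the relevant convex conjugate. First I would identify the ingredients. The model is $\model=(\Zv,\etav,\Wv)$, and the feature functions indexed by $(n,y)$ for $n\in\iTrain$ and $y\in\mathcal{Y}$ are $\psi_{n,y}(\model;\data)=\etav^\top\Delta\gv_n(y,\Zv)$, so that the expectation operator $E$ sends $q$ to the matrix with entries $Eq(n,y)$ defined in the text. Under this identification the regularization functional decomposes as $\mathcal{R}^c_h(q)=\sum_{n\in\iTrain}g_1(\ell^\Delta_n-Eq(n))$, where $g_1$ is exactly the function from Eq.~(\ref{eq:gFunc}) (with constant $C$ absorbed appropriately).

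Next I would apply Theorem~\ref{lemma:RegBayes}, which reduces the problem to computing the convex conjugate of the affine-composed and separable functional $g(u)=\sum_n g_1(\ell^\Delta_n-u_n)$. By separability we get $g^\ast(\phiv)=\sum_n(g_1(\ell^\Delta_n-\cdot))^\ast(\phi_n)$, and the standard shift/reflection rules for conjugates give
\begin{equation*}
(g_1(\ell^\Delta_n-\cdot))^\ast(\phi_n)=g_1^\ast(-\phi_n)+\langle\phi_n,\ell^\Delta_n\rangle.
\end{equation*}
Writing $\omegav_n=-\phi_n$ (so $\omega_n^y$ are the dual variables the lemma asks for) and substituting into the dual expression from Theorem~\ref{lemma:RegBayes}, the term $-g^\ast(-\phiv)$ becomes $\sum_n\omega_n^y\ell_n^\Delta(y)-\sum_n g_1^\ast(\omegav_n)$, matching the target objective. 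The log-partition term becomes $-\log Z(\omegav\mid\data)$, where
\begin{equation*}
Z(\omegav\mid\data)=\int p(\Zv,\etav,\Wv,\data)\exp\Bigl\{\sum_{n\in\iTrain}\sum_y\omega_n^y\,\etav^\top\Delta\gv_n(y,\Zv)\Bigr\}\,\mathrm d\mu,
\end{equation*}
which is exactly the normalizer of the claimed $\hat q$.

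Finally, the optimum primal distribution is read off from the representation formula in Theorem~\ref{lemma:RegBayes}: $\hat q(\model)\propto p(\model,\data)\exp(\langle\hat\phiv,\psiv(\model;\data)\rangle)$. Re-expressing in terms of $\omegav=-\hat\phiv$ and the feature functions $\psi_{n,y}$ yields precisely the expression for $\hat q(\Zv,\etav,\Wv)$ stated in the lemma, with $\hat\omegav$ the maximizer of the dual.

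The routine parts are the conjugate shift rule and the separable decomposition; the main obstacle is verifying the qualification hypotheses of Theorem~\ref{lemma:RegBayes} (lower semicontinuity and the core/continuity condition on $g$) in this concrete setting, so that strong duality and dual attainment hold. Here $g_1$ is a finite-valued convex (hence continuous) function on $\mathbb{R}^L$, so $g$ is continuous everywhere on $\mathbb{R}^{|\iTrain|\times L}$, and the continuity condition $A\operatorname{dom}f\cap\operatorname{cont}g\neq\emptyset$ is satisfied as soon as $\mathcal{P}_{\mathrm{prob}}$ is non-empty (any proper $q$ gives a finite expectation on the bounded relevant part of $\Zv,\etav$). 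This guarantees $t=d$ and attainment of the dual supremum, and uniqueness of the primal optimum follows from the strict convexity of the KL-term, as already noted in Theorem~\ref{lemma:RegBayes}.
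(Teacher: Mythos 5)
Your proposal follows essentially the same route as the paper's proof in Appendix C.5: write $\mathcal{R}^c_h(q)=\sum_{n}g_1(\ell_n^\Delta - Eq(n))$, compute $g^\ast$ by separability together with the shift/reflection rule for conjugates, and then invoke Theorem~\ref{lemma:RegBayes}; your added verification of the constraint qualification (finiteness and hence continuity of $g_1$ on all of $\mathbb{R}^L$) is a point the paper only gestures at. The one slip is the sign in your change of variables: since the dual in Theorem~\ref{lemma:RegBayes} already contains $-g^\ast(-\phiv)$ and gives $\hat q\propto p(\model,\data)\exp(\langle\hat\phiv,\psiv\rangle)$, the correct identification is $\omegav_n=\phi_n$ rather than $\omegav_n=-\phi_n$, which directly yields $\sum_n\langle\omegav_n,\ell_n^\Delta\rangle-\sum_n g_1^\ast(\omegav_n)$ and the stated form of $\hat q$, whereas your substitution would negate both the linear term and the exponent.
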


{\bf Testing}: to make prediction on test examples, we put both training and test data together to do regularized Bayesian inference.
For training data, we impose the above large-margin constraints because of the awareness of their true labels, while for test data, we do the inference without the large-margin constraints since we do not know their true labels. Therefore, the classifier (i.e., $q(\etav)$) is learned from the training data only, while both training and testing data influence the posterior distributions of the likelihood model $\Wv$. After inference, we make the prediction via the rule
\begin{eqnarray}\label{eq:predrule}
y^\ast \defEq \argmax_y f\big(y, \xv; q(\Zv, \etav, \Wv)\big).
\end{eqnarray}
Note that the ability to generalize to test data relies on the fact that all the data examples share $\etav$ and the IBP prior. We can also cast the problem as a transductive inference problem by imposing additional large-margin constraints on test data~\citep{Joachims:tsvm99}. However, the resulting problem will be generally harder to solve because it needs to resolve the unknown labels of testing examples. We also note that the testing is different from the standard inductive setting~\citep{Zhu:iSVM11}, where the latent features of a new data example can be approximately inferred given the training data. Our empirical study shows little difference on performance between our setting and the standard inductive setting.

\begin{figure*}
\begin{center}
\centering{
\hfill\subfigure[]{\includegraphics[width=0.23\columnwidth]{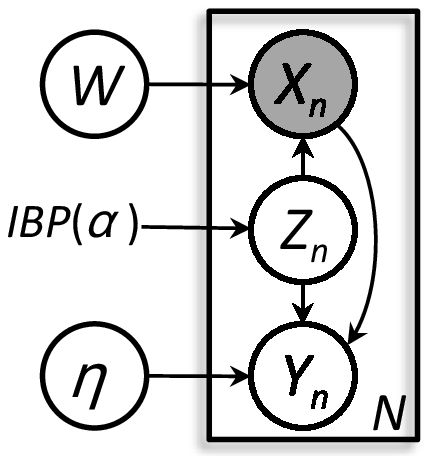}}\hfill
\subfigure[]{\includegraphics[width=.33\columnwidth]{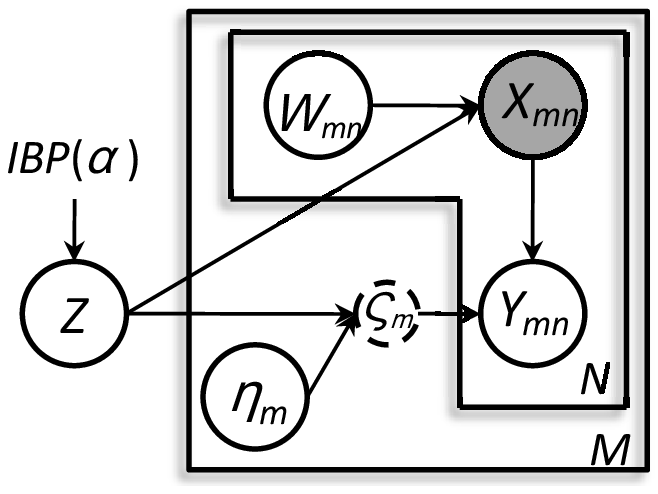}}\hfill}\vspace{-.2cm}
\caption{Graphical structures of (a) infinite latent SVM (iLSVM); and (b) multi-task infinite latent SVM (MT-iLSVM). For MT-iLSVM, the dashed nodes (i.e., $\varsigma_m$) illustrate the task relatedness but do not exist.}
\label{fig:multitaskInfLSVM}\vspace{-.3cm}
\end{center}
\end{figure*}


\subsection{Multi-Task Infinite Latent Support Vector Machines}

Different from classification, which is typically formulated as a single learning task, multi-task learning aims to improve a set of related tasks through sharing statistical strength among these tasks, which are performed jointly. Many different approaches have been developed for multi-task learning (See~\citep{Jebara:jmlr11} for a review). In particular, learning a common latent representation shared by all the related tasks has proven to be an effective way to capture task relationships~\citep{AndoTong:05,Argyriou:nips07,HalDaume:10}. Below, we present the multi-task infinite latent SVM (MT-iLSVM) for learning a common binary projection matrix $\Zv$ to capture the relationships among multiple tasks. Similar as in iLSVM, we also put the IBP prior on $\Zv$ to allow it to have an unbounded number of columns.

Suppose we have $M$ related tasks. Let $\data_m \!=\! \{ (\xv_{mn}, y_{mn}) \}_{n\in \mathcal{I}_{\textrm{tr}}^m}$ be the training data for task $m$. We consider binary classification tasks, where $\mathcal{Y}_m =\{+1, -1\}$. Extension to multi-way classification or regression can be easily done. A na\" ive way to solve this learning problem with multiple tasks is to perform the multiple tasks independently. In order to make the multiple tasks coupled and share statistical strength, MT-iLSVM introduces a latent projection matrix $\Zv$. If the latent matrix $\Zv$ is given, we define the {\it latent discriminant function} for task $m$ as
\begin{eqnarray}
f_m(\xv_{mn}, \Zv; \etav_m) \defEq ( \Zv \etav_m )^\top \xv_{mn} = \etav_m^\top (\Zv^\top \xv_{mn}),
\end{eqnarray}
where $\xv_{mn}$ is one data example in $\data_m$ and $\etav_m$ is the vector of parameters for task $m$. The dimension of $\etav_m$ is the number of columns of the latent projection matrix $\Zv$, which is unbounded in the nonparametric setting. This definition provides two views of how the $M$ tasks get related.
\begin{enumerate}[(1)]
\item If we let $\varsigma_m = \Zv \etav_m$, then $\varsigma_m$ is the actual parameter of task $m$ and all $\varsigma_m$ in different tasks are coupled by sharing the same latent matrix $\Zv$; 
\item Another view is that each task $m$ has its own parameters $\etav_m$, but all the tasks share the same latent projection matrix $\Zv$ to extract latent features $\Zv^\top \xv_{mn}$, which is a projection of the input features $\xv_{mn}$. 
\end{enumerate}
As such, our method can be viewed as a nonparametric Bayesian treatment of alternating structure optimization (ASO)~\citep{AndoTong:05}, which learns a single projection matrix with a pre-specified latent dimension. Moreover, different from~\citep{Jebara:jmlr11}, which learns a binary vector with known dimensionality to select features or kernels on $\xv$, we learn an unbounded projection matrix $\Zv$ using nonparametric Bayesian techniques.

As in iLSVM, we employ a Bayesian treatment of $\etav_m$, and view it as random variables. We assume that $\etav_m$ has a fully-factorized Gaussian prior, i.e., $\eta_{mk} \sim \mathcal{N}(0,1)$. Then, we define the effective discriminant function for task $m$ as the expectation
\begin{eqnarray}
f_m\big(\xv; q(\Zv, \etav, \Wv)\big) \defEq \ep_{q(\Zv, \etav, \Wv)}\big[ f_m(\xv, \Zv; \etav_m)\big] = \ep_{q(\Zv, \etav, \Wv)}[\Zv \etav_m ]^\top \xv,
\end{eqnarray}
where $\Wv$ is a place holder for the variables that possibly arise from other parts of the model. As in iLSVM, since we are taking expectation, the variables which do not appear in the feature map (i.e., $\Wv$) will be marginalized out. Then, the prediction rule for task $m$ is naturally $y_m^\ast \defEq \sign f_m(\xv)$.
Similarly, we perform regularized Bayesian inference by defining: $$U^{MT}(\xiv) \defEq C  \sum_{m, n \in \mathcal{I}^m_{\textrm{tr}}} \xi_{mn}$$ and imposing the following constraints:
\begin{eqnarray}\label{eq:MTconstraints}
\mathcal{P}_{\mathrm{post}}^{MT}(\xiv) \defEq \left\{ q(\Zv, \etav, \Wv)~ \begin{array}{|cl}
\forall m,~\forall n \in \mathcal{I}^m_{\textrm{tr}}: & ~ y_{mn} \ep_{q(\Zv, \etav, \Wv)}[ \Zv \etav_m ]^\top \xv_{mn} \geq 1 - \xi_{mn} \\
{} &~ \xi_{mn} \geq 0
\end{array}\right\}.
\end{eqnarray}
Finally, as in iLSVM we may also be interested in explaining observed data $\xv$. Therefore, we relate $\Zv$ to the observed data $\xv$ by defining a likelihood model:
\begin{eqnarray}
p\big(\xv_{mn} | \wv_{mn}, \Zv, \lambda_{mn}^2\big) = \mathcal{N}\big(\xv_{mn}| \Zv \wv_{mn}, \lambda_{mn}^2 I\big),
\end{eqnarray}
where $\wv_{mn}$ is a vector. We assume $\Wv$ has an independent prior $\pi(\Wv) = \prod_{mn} \mathcal{N}(\wv_{mn}|0, \sigma_{m0}^2 I)$. Fig.~\ref{fig:multitaskInfLSVM} (b) illustrates the graphical structure of MT-iLSVM.

For training, we can derive the similar convex conjugate as in the case of iLSVM. Similar as in iLSVM, minimizing $U^{MT}(\xiv)$ is equivalent to minimizing the hinge-loss $\mathcal{R}_h^{MT}$ of the multiple binary prediction rules, where
\begin{eqnarray}
\mathcal{R}_h^{MT}\big(q(\Zv, \etav, \Wv)\big) = C \sum_{m, n \in \mathcal{I}^m_{\textrm{tr}}} \max\big(0, 1 - y_{mn} \ep_{q(\Zv, \etav, \Wv)}[ \Zv \etav_m ]^\top \xv_{mn}\big).
\end{eqnarray}
Thus, the RegBayes problem of MT-iLSVM can be equivalently written as
\begin{eqnarray}\label{eq:MTconstrained}
\inf_{q(\Zv, \etav, \Wv)} \KL\big(q(\Zv, \etav, \Wv) \Vert p(\Zv, \etav, \Wv, \data)\big) + \mathcal{R}^{MT}_h\big(q(\Zv, \etav, \Wv)\big).
\end{eqnarray}
Then, by the Fenchel's duality theorem and Lemma~\ref{proposition:ConjugateBinaryFunc}, we can derive the conjugate of MT-iLSVM. The proof is deferred to Appendix C.5.
\begin{lemma}[Conjugate of MT-iLSVM]\label{lemma:conjugateMTiLSVM}
For the MT-iLSVM problem, we have that
\begin{eqnarray}
 \inf_{q(\Zv, \etav, \Wv) \in \mathcal{P}_{\mathrm{prob}}} && \KL(q(\Zv, \etav, \Wv) \Vert p(\Zv, \etav, \Wv, \data)) + \mathcal{R}^{MT}_h(q(\Zv, \etav, \Wv)) \\
 =~~~~~~~~ \sup_{\omegav} ~~~~~&& - \log Z^\prime(\omegav  | \data ) + \sum_{m,n} \omega_{mn} - \sum_{m,n} g_0^\ast(\omega_{mn}). \nonumber
\end{eqnarray}
Moreover, The optimum distribution is the posterior distribution
\begin{eqnarray}
\hat{q}(\Zv, \etav, \Wv) = \frac{1}{Z^\prime(\hat{\omegav}  | \data )}p(\Zv, \etav, \Wv, \data)\exp\Big\{ \sum_{m,n} y_{mn} \hat{\omega}_{mn} (\Zv\etav_m)^\top \xv_{mn} \Big\},
\end{eqnarray}
where $Z^\prime(\hat{\omegav}  | \data )$ is the normalization factor and $\hat{\omegav}$ is the solution of the dual problem.
\end{lemma}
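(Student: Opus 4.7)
The plan is to recognize the MT-iLSVM objective as a special case of the RegBayes master equation~(\ref{eq:constraindBayes2}) with a linear (expectation) operator, and then apply the Representation Theorem~\ref{lemma:RegBayes} together with the conjugate formula from Lemma~\ref{proposition:ConjugateBinaryFunc}. The only real work is to set up the feature functions correctly so that the hinge penalty $\mathcal{R}_h^{MT}$ factors through the expectation operator, and then to compute the convex conjugate of the composite regularizer.

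First, I would identify the model as $\model = (\Zv, \etav, \Wv)$ and introduce, for every pair $(m, n)$ with $n \in \mathcal{I}_{\mathrm{tr}}^m$, the scalar feature function
\begin{eqnarray}
\psi_{mn}(\model; \data) \defEq y_{mn}\,(\Zv \etav_m)^\top \xv_{mn}, \nonumber
\end{eqnarray}
and stack them into $\psiv$. With $E$ the expectation operator, this gives $(Eq)_{mn} = y_{mn}\,\ep_{q}[\Zv\etav_m]^\top \xv_{mn}$, so the hinge regularizer reads
\begin{eqnarray}
\mathcal{R}_h^{MT}\bigl(q(\Zv,\etav,\Wv)\bigr) \;=\; \sum_{m,n} g_0\bigl(1 - (Eq)_{mn}\bigr), \nonumber
\end{eqnarray}
where $g_0(x) = C\max(0,x)$ as in Lemma~\ref{proposition:ConjugateBinaryFunc}. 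Define $g(\muv) \defEq \sum_{m,n} g_0(1 - \mu_{mn})$; this is convex and lower semicontinuous, so the hypotheses of Theorem~\ref{lemma:RegBayes} are met, and the MT-iLSVM problem~(\ref{eq:MTconstrained}) is exactly the primal form appearing there.

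Next I would compute the conjugate $g^\ast$. By separability and the affine change of variables $\nu_{mn} = 1 - \mu_{mn}$,
\begin{eqnarray}
g^\ast(\omegav) &=& \sum_{m,n} \sup_{\mu_{mn}}\bigl\{ \omega_{mn}\mu_{mn} - g_0(1-\mu_{mn})\bigr\} \nonumber \\
&=& \sum_{m,n}\bigl\{ \omega_{mn} + g_0^\ast(-\omega_{mn})\bigr\}, \nonumber
\end{eqnarray}
so that $g^\ast(-\omegav) = \sum_{m,n}\bigl\{ -\omega_{mn} + g_0^\ast(\omega_{mn})\bigr\}$. Substituting this into the dual expression of Theorem~\ref{lemma:RegBayes} immediately yields
\begin{eqnarray}
\sup_{\omegav}\Bigl\{ -\log\!\int p(\model,\data)\exp\bigl(\textstyle\sum_{m,n}\omega_{mn}\psi_{mn}(\model;\data)\bigr)\,\ud\mu(\model) + \sum_{m,n}\omega_{mn} - \sum_{m,n} g_0^\ast(\omega_{mn})\Bigr\}, \nonumber
\end{eqnarray}
which is the claimed right-hand side once we identify the integral as $Z^\prime(\omegav\mid\data)$. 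Strong duality and attainability follow from the constraint qualification (continuity of $g$ at points in the range of $E$), and Theorem~\ref{lemma:RegBayes} also hands us the unique primal optimizer $\hat{q}(\Zv,\etav,\Wv) \propto p(\Zv,\etav,\Wv,\data)\exp\bigl\{\sum_{m,n}\hat{\omega}_{mn} y_{mn}(\Zv\etav_m)^\top\xv_{mn}\bigr\}$, exactly as stated.

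The main thing to be careful about, rather than any single hard step, is verifying the regularity conditions in the infinite-dimensional setting: one needs $q \in \mathcal{P}_{\mathrm{prob}}$ to be absolutely continuous with respect to the IBP$\times$Gaussian prior and the expectation $\ep_q[\Zv\etav_m]^\top \xv_{mn}$ to be finite so that $E$ is a well-defined bounded linear operator into $\mathbb{R}^{\sum_m |\mathcal{I}_{\mathrm{tr}}^m|}$. Both hold because the IBP prior concentrates on matrices with finitely many nonzero entries almost surely, and in practice the truncation level $K$ discussed after Eq.~(\ref{eq:iLSVM-GaussLikelihood}) makes everything finite-dimensional. Once these technical preliminaries are in place, the proof is essentially a two-line application of Theorem~\ref{lemma:RegBayes} combined with the separable conjugate computation above.
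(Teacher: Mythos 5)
Your proposal is correct and follows essentially the same route as the paper's proof in Appendix C.6: define the expectation operator with feature functions $y_{mn}(\Zv\etav_m)^\top\xv_{mn}$, write $\mathcal{R}_h^{MT}$ as a separable sum of $g_0(1-\cdot)$ terms, compute the conjugate via the affine substitution, and invoke Theorem~\ref{lemma:RegBayes} with Lemma~\ref{proposition:ConjugateBinaryFunc}. Your added remarks on the constraint qualification and on finiteness of the expectations under the IBP prior go slightly beyond what the paper writes down, but the argument is the same.
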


For testing, we use the same strategy as in iLSVM to do Bayesian inference on both training and test data. The difference is that training data are subject to large-margin constraints, while test data are not. Similarly, the hyper-parameters $\sigma_{m0}^2$ and $\lambda_{mn}^2$ can be set a priori or estimated from data (See Appendix D.1 for details).

\subsection{Inference with Truncated Mean-Field Constraints}\label{sec:inference}

Now we discuss how to perform regularized Bayesian inference with the large-margin constraints for both iLSVM and MT-iLSVM. From the primal-dual formulations, it is obvious that there are basically two methods to perform the regularized Bayesian inference. One is to directly solve the primal problem for the posterior distribution $q(\Zv, \etav, \Wv)$, and the other is to first solve the dual problem for the optimum $\hat{\omegav}$ and then infer the posterior distribution. However, both the primal and dual problems are intractable for iLSVM and MT-iLSVM. The intrinsic hardness is due to the mutual dependency among the latent variables in the desired posterior distribution. Therefore, a natural approximation method is the mean field~\citep{Jordan:99}, which breaks the mutual dependency by assuming that $q$ is of some factorization form. This method approximates the original problems by imposing additional constraints. An alternative method is to apply approximate methods (e.g., MCMC sampling) to infer the true posterior distributions derived via convex conjugates as above, and iteratively estimate the dual parameters using approximate statistics (e.g., feature expectations estimated using samples)~\citep{schofield2006fitting}. Below, we use MT-iLSVM as an example to illustrate the idea of the first strategy. A full discussion on the second strategy is beyond the scope of this paper.
For iLSVM, the similar procedure applies and we defer its details to Appendix D.2.

To make the problem easier to solve, we use the stick-breaking representation of IBP, which includes the auxiliary variable $\nuv$, and infer the augmented posterior $q(\nuv, \Wv, \Zv, \etav)$. The joint model distribution is now $q(\nuv, \Wv, \Zv, \etav, \data)$. Furthermore, we impose the truncated mean-field constraint that
\begin{eqnarray}\label{eq:MF-MTiLSVM}
q(\nuv, \Wv, \Zv, \etav) = q(\etav) \prod_{k=1}^K \Big( q(\nu_k | \gammav_k) \prod_{d=1}^D q(z_{dk}|\psi_{dk}) \Big) \prod_{mn} q\Big(\wv_{mn}|\Phi_{mn}, \sigma_{mn}^2 I\Big),
\end{eqnarray}
where $K$ is the truncation level, and we assume that $$q(\nu_k | \gammav_k) = \B(\gamma_{k1}, \gamma_{k2}),$$
$$q(z_{dk}|\psi_{dk}) = \Ber(\psi_{dk}),$$
$$q(\wv_{mn}|\Phi_{mn}, \sigma_{mn}^2 I)=\mathcal{N}(\wv_{mn}|\Phi_{mn}, \sigma_{mn}^2 I).$$
Then, we can use the duality theory\footnote{Lagrangian duality~\citep{Ito:08} was used in~\citep{Zhu:iLSVM11} to solve the constrained variational formulations, which is closely related to Fenchel duality~\citep{Magnanti:74} and leads to the same solutions for iLSVM and MT-iLSVM.} to solve the RegBayes problem by alternating between two substeps, as outlined in Algorithm 1 and detailed below.

\begin{algorithm}[tb]
   \caption{Inference Algorithm for Infinite Latent SVMs}
   \label{alg:coord_descent_regress}
\begin{algorithmic}[1]
   \STATE {\bfseries Input:} corpus $\mathcal{D}$ and constants $(\alpha, C)$.
   \STATE {\bfseries Output:} posterior distribution $q(\nuv, \Zv, \etav, \Wv)$.
   \REPEAT
        \STATE infer $q(\nuv), q(\Wv)$ and $q(\Zv)$ with $q(\etav)$ and $\omegav$ given;
        \STATE infer $q(\etav)$ and solve for $\omegav$ with $q(\Zv)$ given.
   \UNTIL{convergence}
\end{algorithmic}
\end{algorithm}

{\bf Infer $q(\nuv)$, $q(\Wv)$ and $q(\Zv)$:} Since $q(\nuv)$ and $q(\Wv)$ are not directly involved in the posterior constraints, we can solve for them by using standard Bayesian inference, i.e., minimizing a KL-divergence. Specifically, for $q(\Wv)$, since the prior is also normal, we can easily derive the update rules for $\Phi_{mn}$ and $\sigma_{mn}^2$. For $q(\nuv)$, we have the same update rules as in~\citep{Doshi-Velez:09}. We defer the details to Appendix D.1.

For $q(\Zv)$, it is directly involved in the posterior constraints. So, we need to solve it together with $q(\etav)$ using conjugate theory. However, this is intractable. Here, we adopt an alternating strategy that first infers $q(\Zv)$ with $q(\etav)$ and dual parameters $\omegav$ fixed, and then infers $q(\etav)$ and solves for $\omegav$. Specifically, since the large-margin constraints are linear of $q(\Zv)$, we can get the mean-field update equation as $$\psi_{dk} = \frac{1}{1 + e^{-\vartheta_{dk}}},$$ where
\setlength\arraycolsep{1pt}\begin{eqnarray}\label{eq:updateZ_mainPaper}
\vartheta_{dk} = && \sum_{j=1}^k \ep_q[ \log v_j ] - \mathcal{L}_k^\nu - \sum_{mn} \frac{1}{2\lambda_{mn}^2}\Big( (K \sigma_{mn}^2 + (\phi_{mn}^k)^2)   \\
&&  - 2 x_{mn}^d \phi_{mn}^k + 2 \sum_{j\neq k} \phi_{mn}^j\phi_{mn}^k \psi_{dj} \Big) + \sum_{m, n\in \mathcal{I}_{\textrm{tr}}^m} y_{mn} \ep_q[\eta_{mk}] x_{mn}^d, \nonumber
\end{eqnarray}
and $\mathcal{L}_k^\nu$ is an lower bound of $\ep_q[ \log (1 - \prod_{j=1}^k v_j) ]$ (See Appendix D.1 for details). The last term of $\vartheta_{dk}$ is due to the large-margin posterior constraints as defined in Eq.~(\ref{eq:MTconstraints}). Therefore, from this equation we can see how the large-margin constraints regularize the procedure of inferring the latent matrix $\Zv$.

{\bf Infer $q(\etav)$ and solve for $\omegav$:} Now, we can apply the convex conjugate theory and show that the optimum posterior distribution of $\etav$ is
$$q(\etav) = \prod_m q(\etav_m),~\textrm{where}~q(\etav_m) \propto \pi(\etav_m) \exp\lbrace \etav_{m}^\top \muv_m \rbrace,$$
and $\muv_m = \sum_{n \in \mathcal{I}^m_{\textrm{tr}}} y_{mn} \omega_{mn} (\psiv^\top \xv_{mn})$. Here, we assume $\pi(\etav_m)$ is standard normal. Then, we have $q(\etav_m) = \mathcal{N}(\etav_m|\muv_m , I)$ and the optimum dual parameters can be obtained by solving the following $M$ independent dual problems
\begin{eqnarray}
\sup_{ \omegav_m }~&& -\frac{1}{2} \muv_m^\top \muv_m + \sum_{n \in \mathcal{I}^m_{\textrm{tr}}} \omega_{mn} \\
\forall n \in \mathcal{I}^m_{\textrm{tr}}, ~ \st:~&& 0 \leq \omega_{mn} \leq C,  \nonumber
\end{eqnarray}
where the constraints are from the conjugate function $g_0^\ast$ in Lemma~\ref{lemma:conjugateMTiLSVM}. These dual problems (or their primal forms) can be efficiently solved with a binary SVM solver, such as SVM-light or LibSVM.

\section{Experiments}\label{sec:experiment}

We present empirical results for both classification and multi-task learning. Our results appear to demonstrate the merits inherited from both Bayesian nonparametrics and large-margin learning.

\subsection{Multi-way Classification}

We evaluate the infinite latent SVM (iLSVM) for classification on the real TRECVID2003 and Flickr image datasets, which have been extensively evaluated in the context of learning finite latent feature models~\citep{Chen:nips10}. TRECVID2003 consists of 1078 video key-frames that belong to 5 categories, including {\it Airplane scene}, {\it Basketball scene}, {\it Weather news}, {\it Baseball scene}, and {\it Hockey scene}. Each data example has two types of features -- 1894-dimension binary vector of text features and 165-dimension HSV color histogram. The Flickr image dataset consists of 3411 natural scene images about 13 types of animals, including {\it squirrel, cow, cat, zebra, tiger, lion, elephant, whales, rabbit, snake, antlers, hawk and wolf}, downloaded from the Flickr website\footnote{http://www.flickr.com/}. Also, each example has two types of features, including 500-dimension SIFT bag-of-words and 634-dimension real-valued features (e.g., color histogram, edge direction histogram, and block-wise color moments). Here, we consider the real-valued features only by defining Gaussian likelihood distributions for $\xv$; and we define the discriminant function using latent features only as in Eq.~(\ref{eq:latent-func-ilsvm}). We follow the same training/testing splits as in~\citep{Chen:nips10}.

\begin{table}[t]
\vskip 0.1in
\begin{center}
\begin{tabular}{|c|cc|cc|}
\hline
{} &  \multicolumn{2}{c|}{TRECVID2003} & \multicolumn{2}{c|}{Flickr} \\
Model & Accuracy & F1 score & Accuracy & F1 score \\
\hline
EFH+SVM & 0.565 $\pm$ 0.0 & 0.427 $\pm$ 0.0 & 0.476 $\pm$ 0.0 & 0.461 $\pm$ 0.0 \\
MMH     & {\bf 0.566} $\pm$ 0.0 & 0.430 $\pm$ 0.0 & {\bf 0.538} $\pm$ 0.0 & {\bf 0.512} $\pm$ 0.0 \\
\hline
IBP+SVM & 0.553 $\pm$ 0.013 & 0.397 $\pm$ 0.030 & 0.500 $\pm$ 0.004 & 0.477 $\pm$ 0.009\\
iLSVM   & 0.563 $\pm$ 0.010 & {\bf 0.448} $\pm$ 0.011 & 0.533 $\pm$ 0.005 & 0.510 $\pm$ 0.010 \\
\hline
\end{tabular}
\end{center}
\caption{ Classification accuracy and F1 scores on the TRECVID2003 and Flickr image datasets (Note: MMH and EFH have zero std because of their deterministic initialization).}
\label{table:Classification}\vspace{-.1cm}
\end{table}

\begin{figure*}
\begin{center}
\includegraphics[width=0.85\columnwidth,height=0.3\columnwidth]{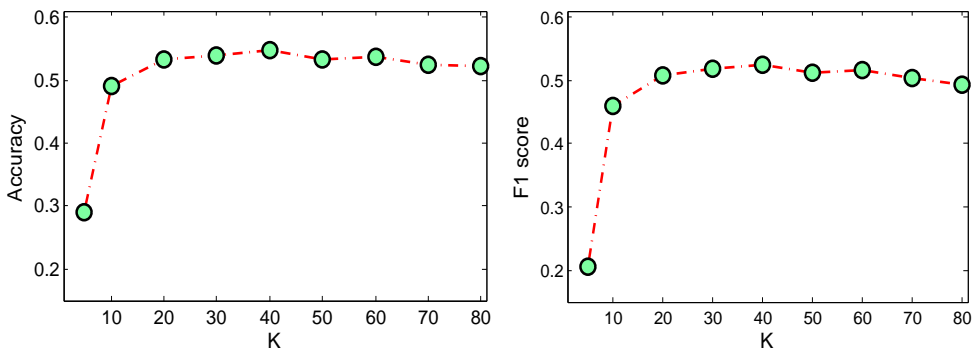}
\caption{Accuracy and F1 score of MMH on the Flickr dataset with different numbers of latent features.}
\label{fig:FlickrK}\vspace{-.2cm}
\end{center}
\end{figure*}

\begin{figure*}[t]
\begin{center}
\includegraphics[width=0.9\columnwidth,height=0.45\columnwidth]{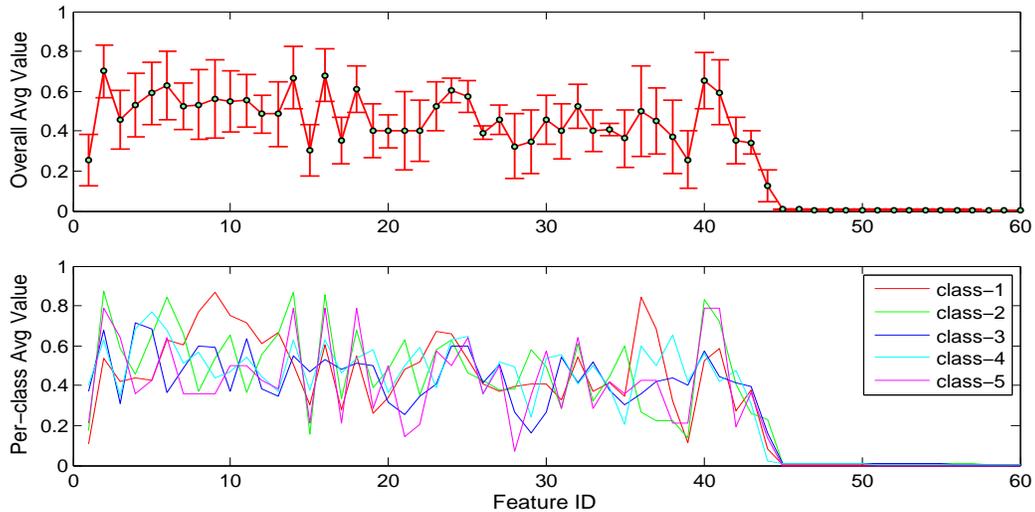}
\caption{(Up) the overall average values of the latent features with standard deviation over different classes; and (Bottom) the per-class average values of latent features learned by iLSVM on the TRECVID dataset.}\vspace{-.4cm}
\label{fig:TrecPattern}
\end{center}
\end{figure*}

\begin{figure*}
\begin{center}
\includegraphics[width=0.9\columnwidth,height=0.23\columnwidth]{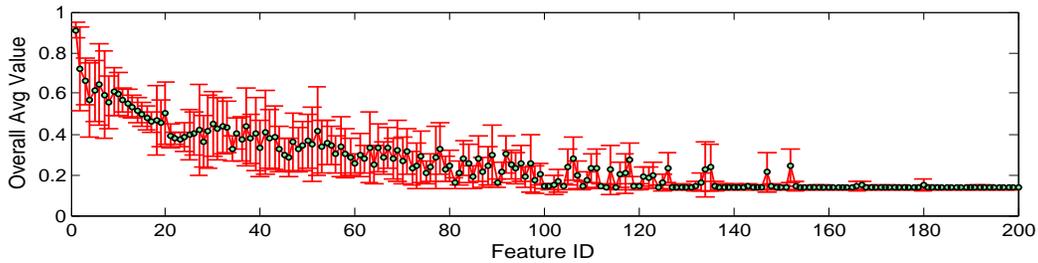}
\caption{The overall average values of the latent features with standard deviation over different classes on the Flickr dataset.}\vspace{-.4cm}
\label{fig:FlickrZ}
\end{center}
\end{figure*}

\begin{figure*}[t]
\begin{center}
\setlength{\tabcolsep}{2.9pt}
\scalebox{0.75}{
 \begin{tabular}{ c c c c c c }
 \hline \hline
{\multirow{5}{*}{F1}} & {\multirow{5}{*}{\includegraphics[width=.18\columnwidth,height=.15\columnwidth]{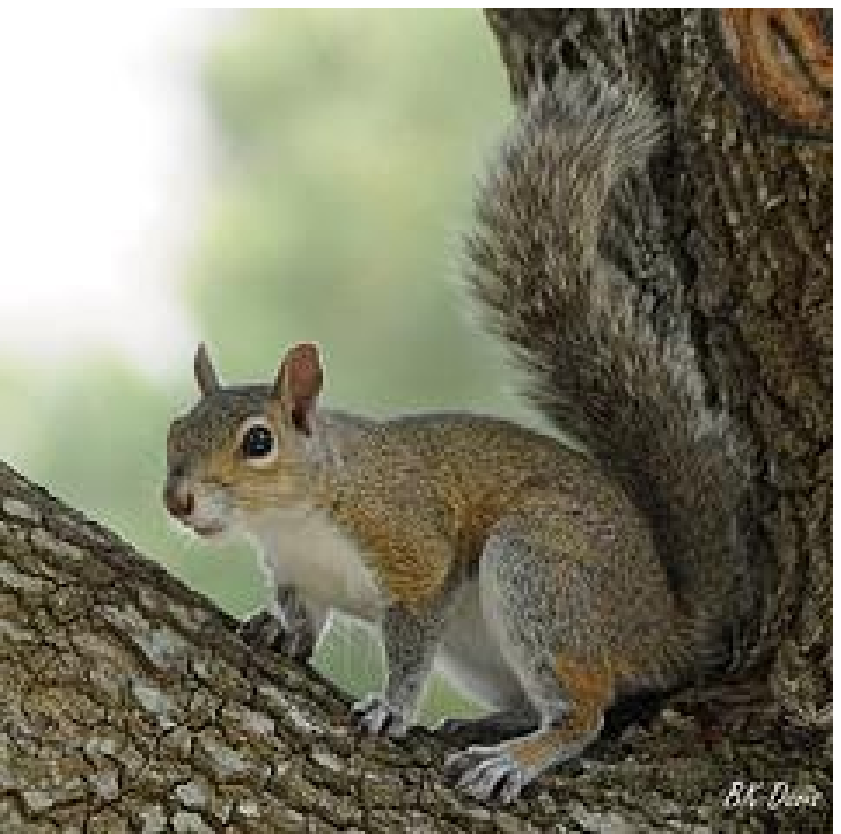}}} & {\multirow{5}{*}{\includegraphics[width=.18\columnwidth,height=.15\columnwidth]{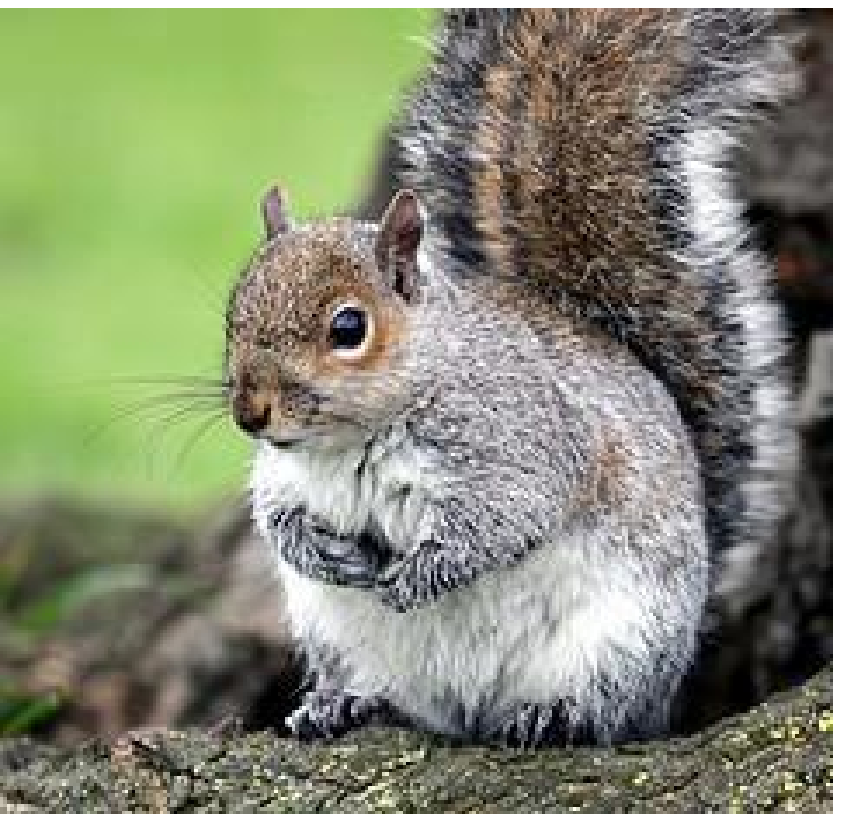}}} & {\multirow{5}{*}{\includegraphics[width=.18\columnwidth,height=.15\columnwidth]{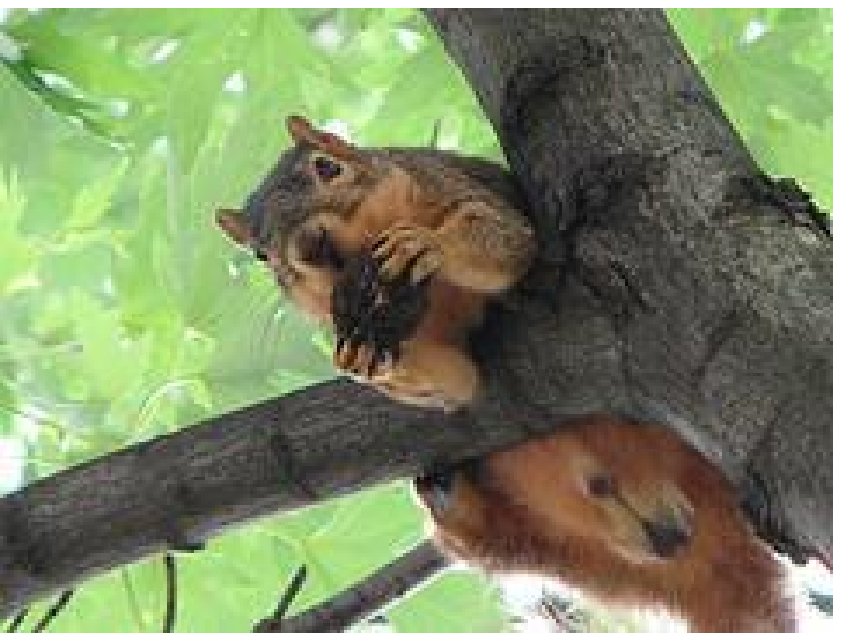}}} & {\multirow{5}{*}{\includegraphics[width=.18\columnwidth,height=.15\columnwidth]{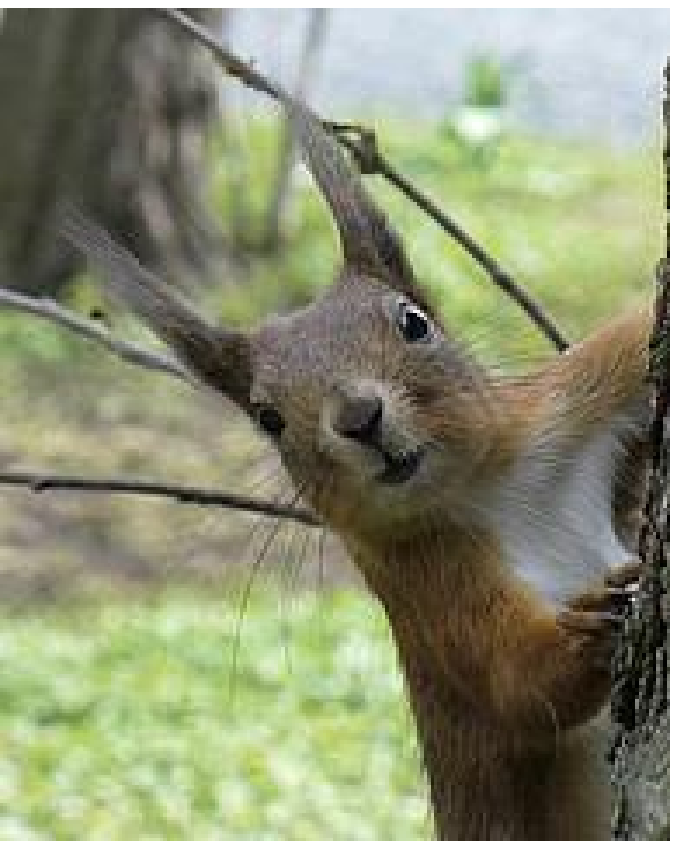}}} & {\multirow{5}{*}{\includegraphics[width=.18\columnwidth,height=.15\columnwidth]{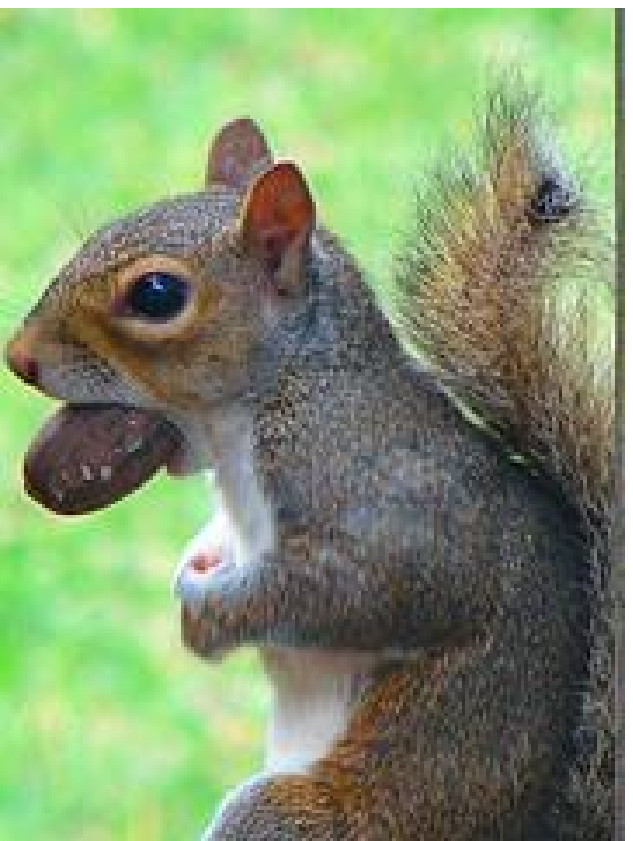}}} \\
        &&&&& \\
        &&&&& \\
        &&&&& \\
        &&&&& \\
         \hline
{\multirow{5}{*}{F2}} & {\multirow{5}{*}{\includegraphics[width=.18\columnwidth,height=.15\columnwidth]{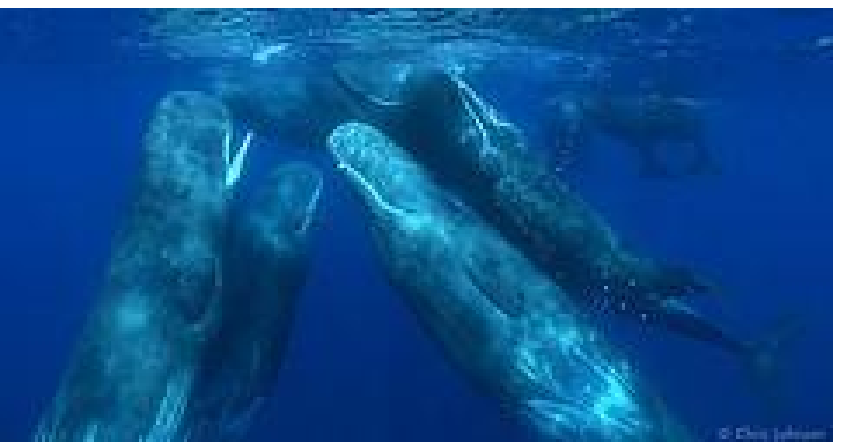}}} & {\multirow{5}{*}{\includegraphics[width=.18\columnwidth,height=.15\columnwidth]{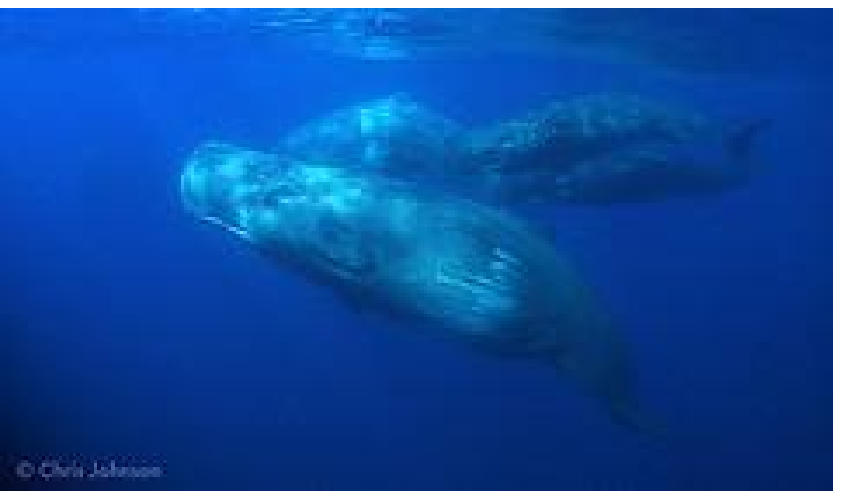}}} & {\multirow{5}{*}{\includegraphics[width=.18\columnwidth,height=.15\columnwidth]{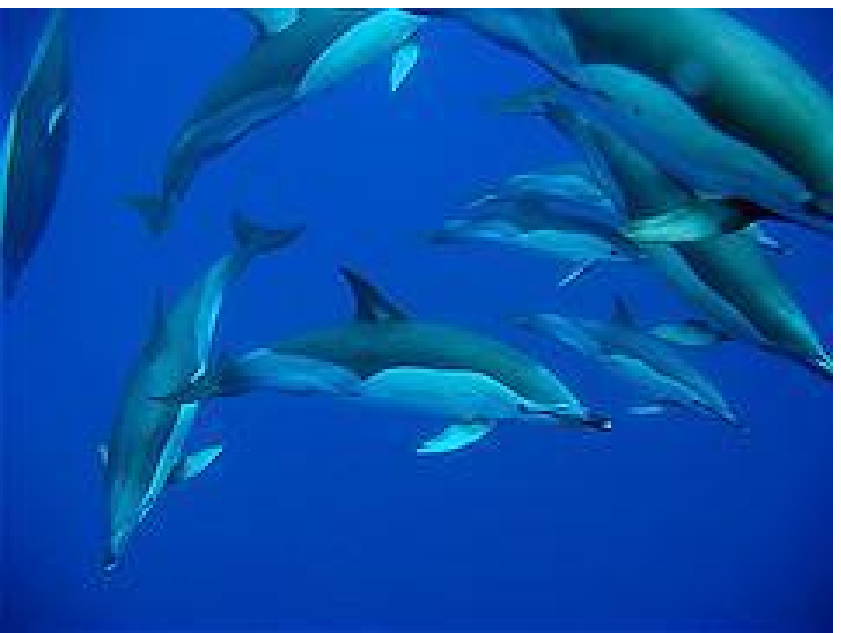}}} & {\multirow{5}{*}{\includegraphics[width=.18\columnwidth,height=.15\columnwidth]{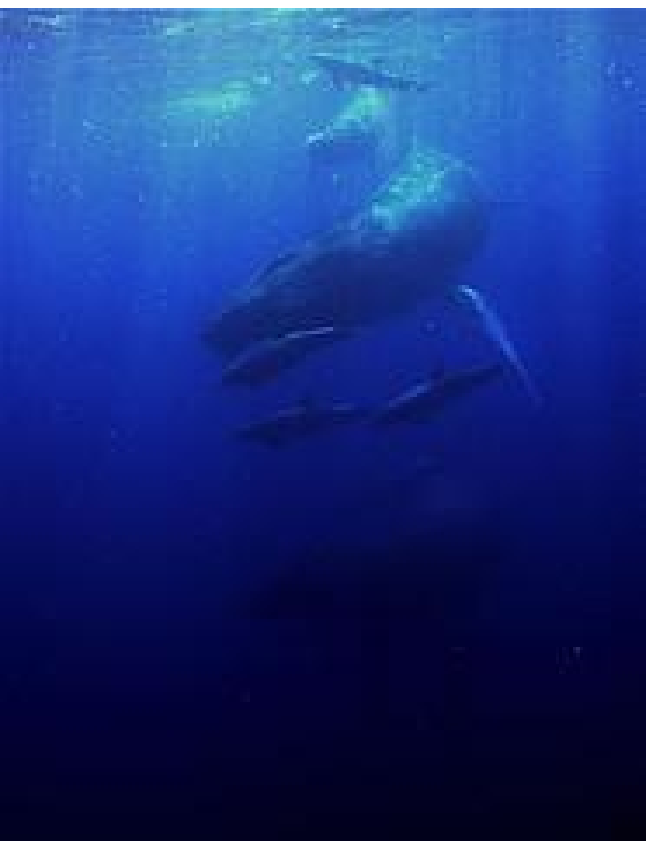}}} & {\multirow{5}{*}{\includegraphics[width=.18\columnwidth,height=.15\columnwidth]{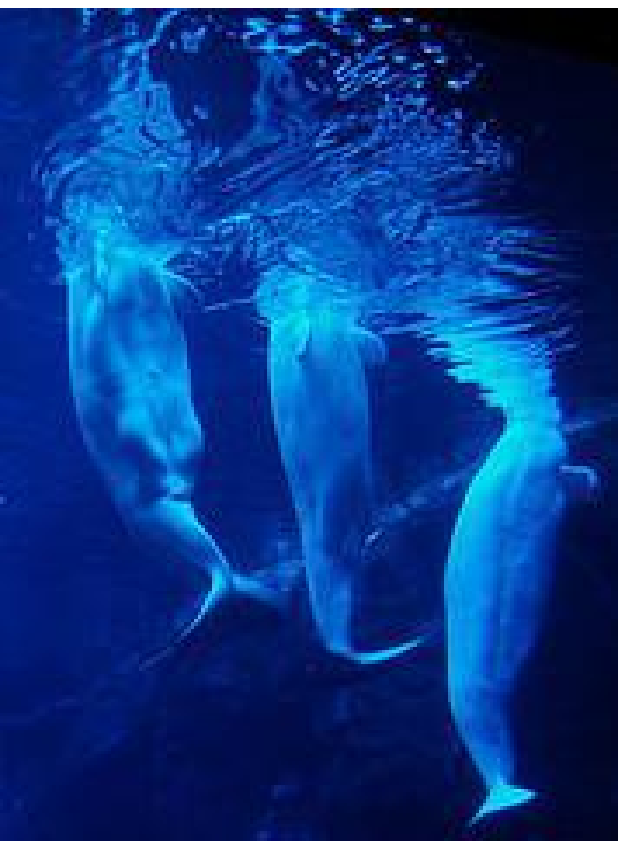}}} \\
        &&&&& \\
        &&&&& \\
        &&&&& \\
        &&&&& \\
         \hline
{\multirow{5}{*}{F3}} & {\multirow{5}{*}{\includegraphics[width=.18\columnwidth,height=.15\columnwidth]{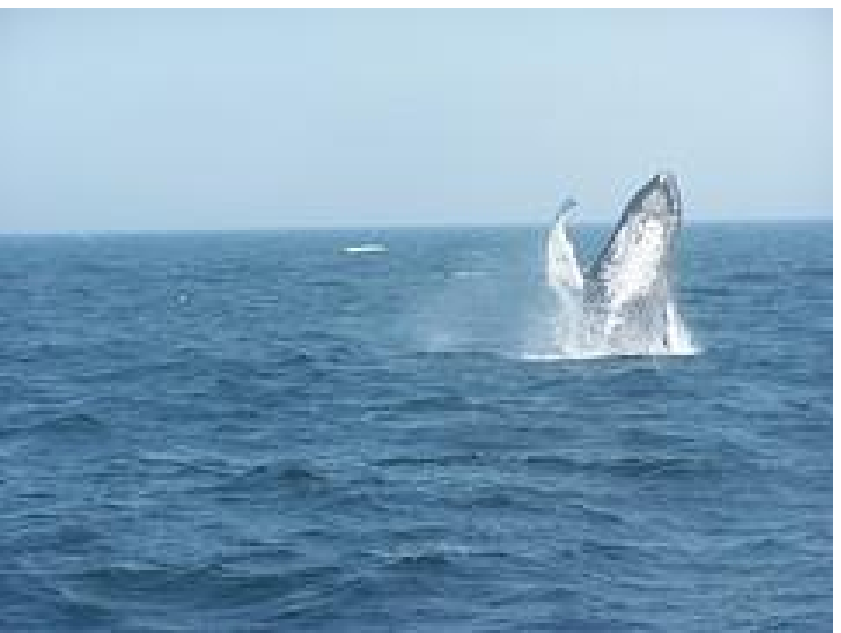}}} & {\multirow{5}{*}{\includegraphics[width=.18\columnwidth,height=.15\columnwidth]{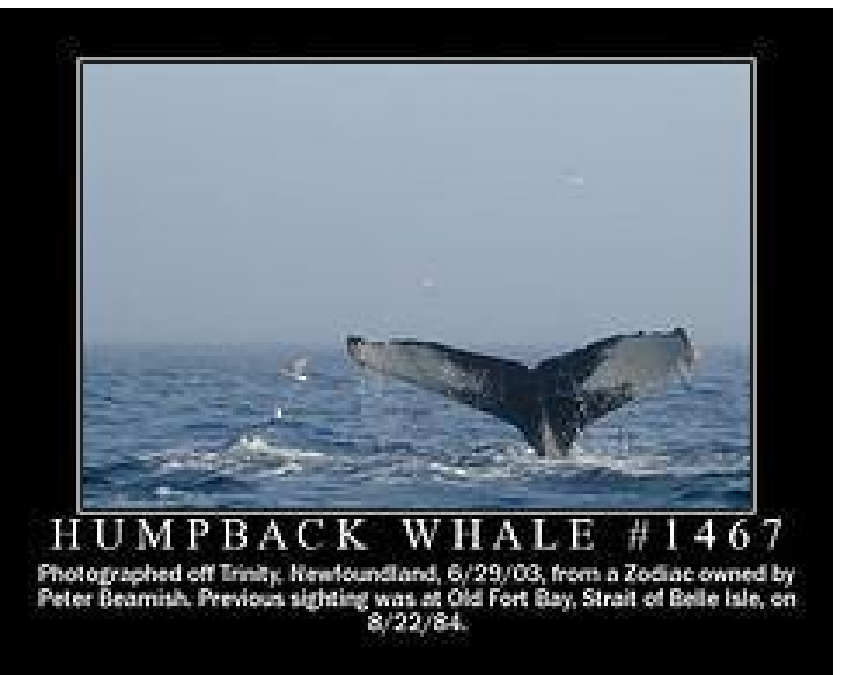}}} & {\multirow{5}{*}{\includegraphics[width=.18\columnwidth,height=.15\columnwidth]{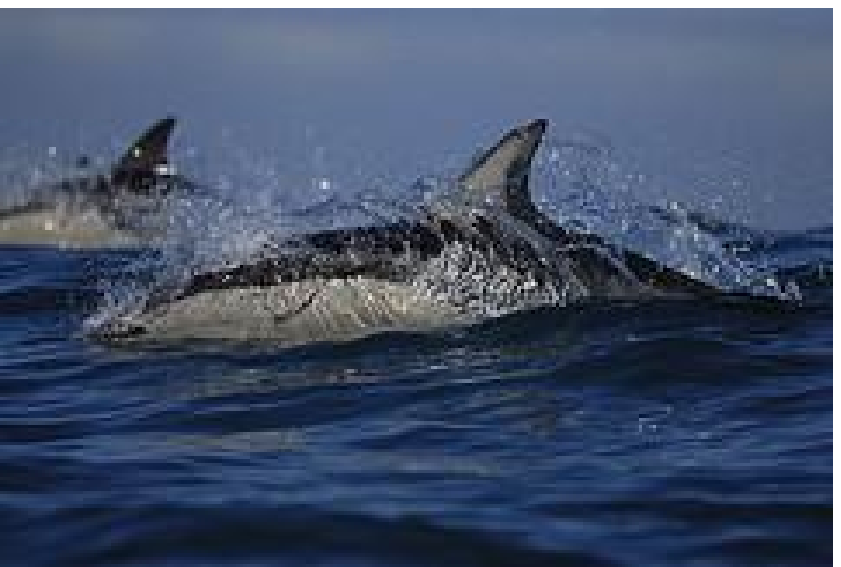}}} & {\multirow{5}{*}{\includegraphics[width=.18\columnwidth,height=.15\columnwidth]{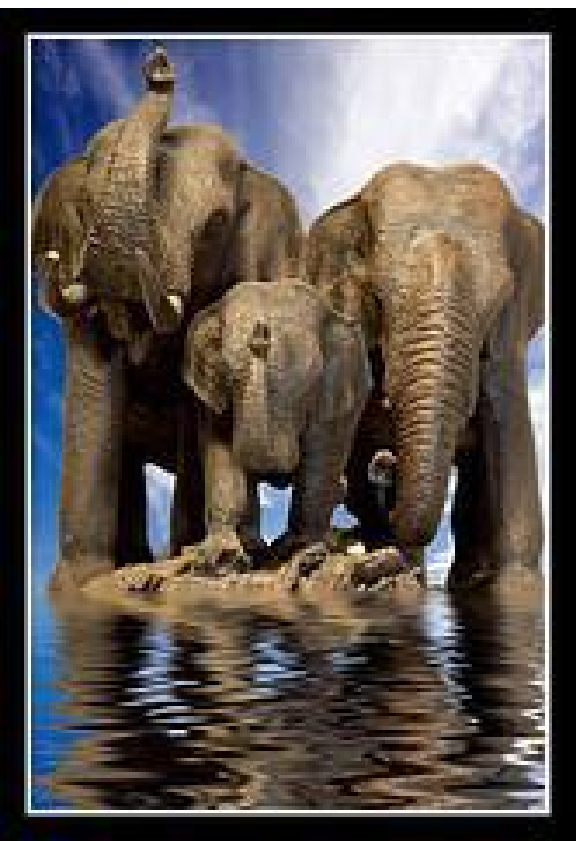}}} & {\multirow{5}{*}{\includegraphics[width=.18\columnwidth,height=.15\columnwidth]{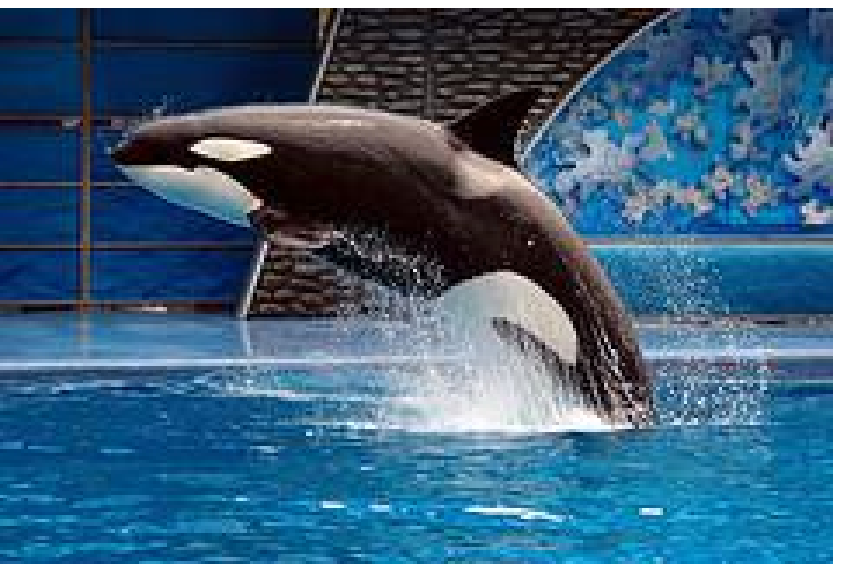}}} \\
        &&&&& \\
        &&&&& \\
        &&&&& \\
        &&&&& \\
         \hline
{\multirow{5}{*}{F4}} & {\multirow{5}{*}{\includegraphics[width=.18\columnwidth,height=.15\columnwidth]{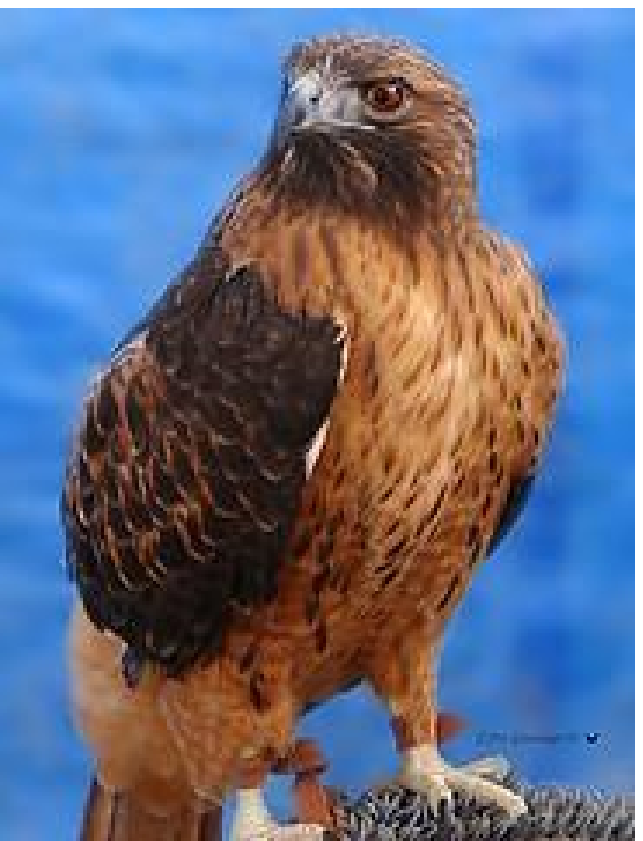}}} & {\multirow{5}{*}{\includegraphics[width=.18\columnwidth,height=.15\columnwidth]{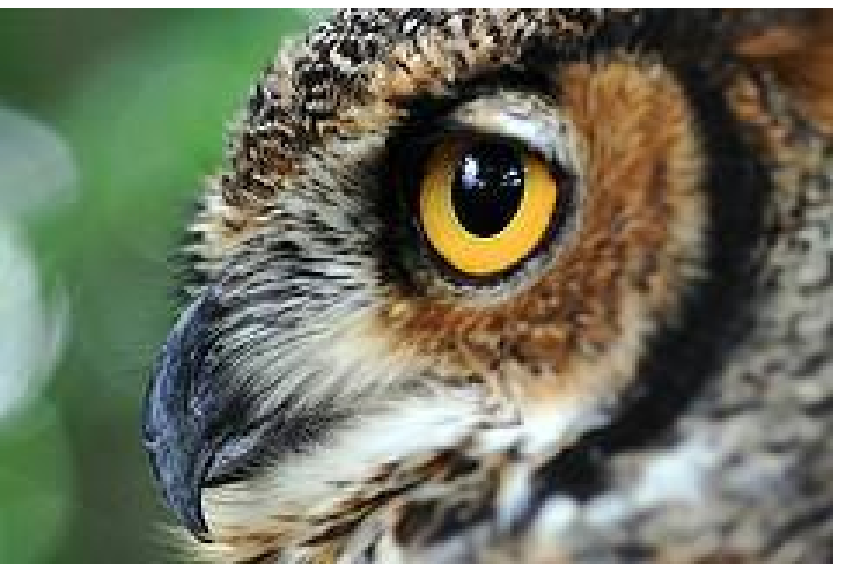}}} & {\multirow{5}{*}{\includegraphics[width=.18\columnwidth,height=.15\columnwidth]{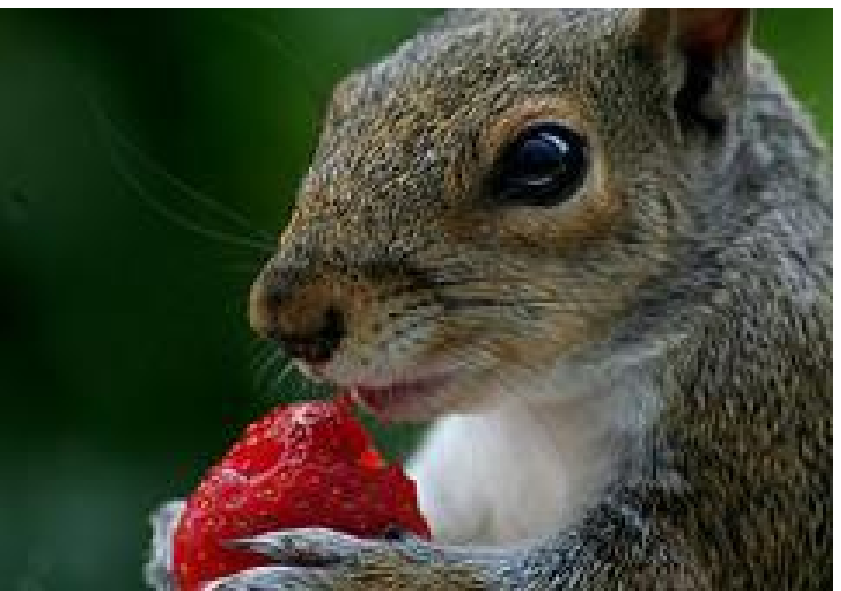}}} & {\multirow{5}{*}{\includegraphics[width=.18\columnwidth,height=.15\columnwidth]{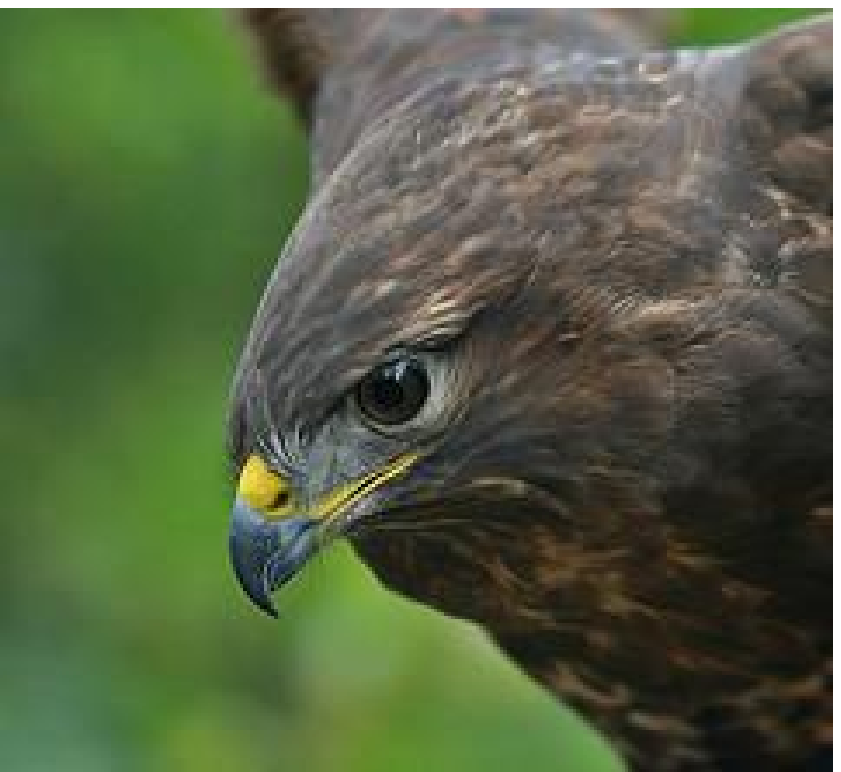}}} & {\multirow{5}{*}{\includegraphics[width=.18\columnwidth,height=.15\columnwidth]{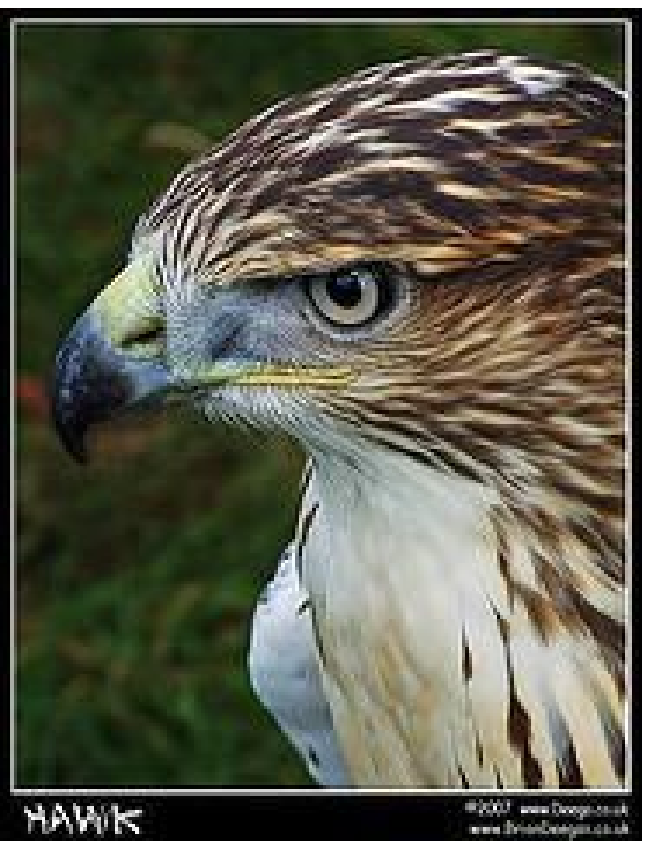}}} \\
        &&&&& \\
        &&&&& \\
        &&&&& \\
        &&&&& \\
        \hline
{\multirow{5}{*}{F5}} & {\multirow{5}{*}{\includegraphics[width=.18\columnwidth,height=.15\columnwidth]{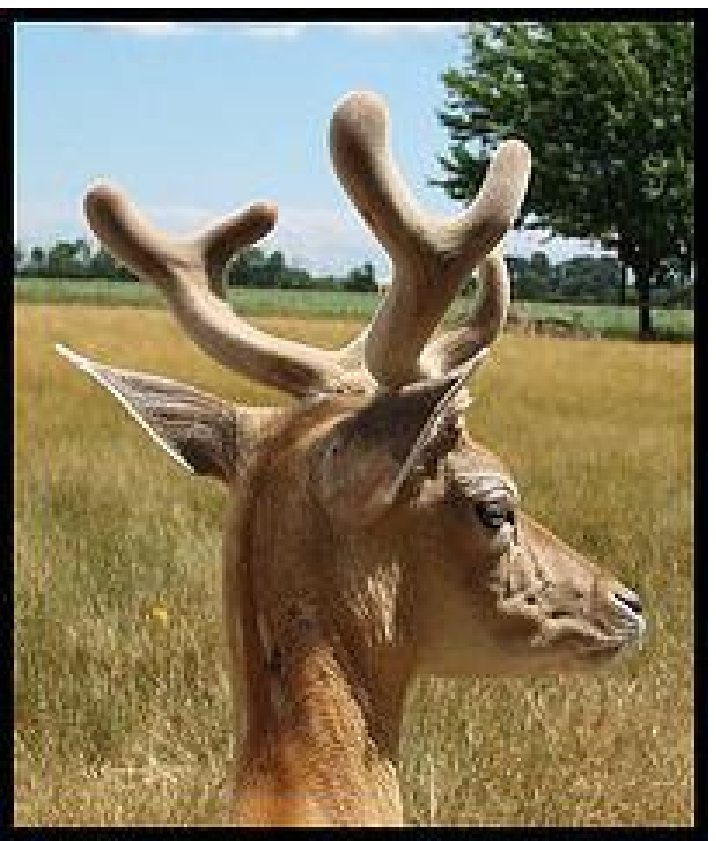}}} & {\multirow{5}{*}{\includegraphics[width=.18\columnwidth,height=.15\columnwidth]{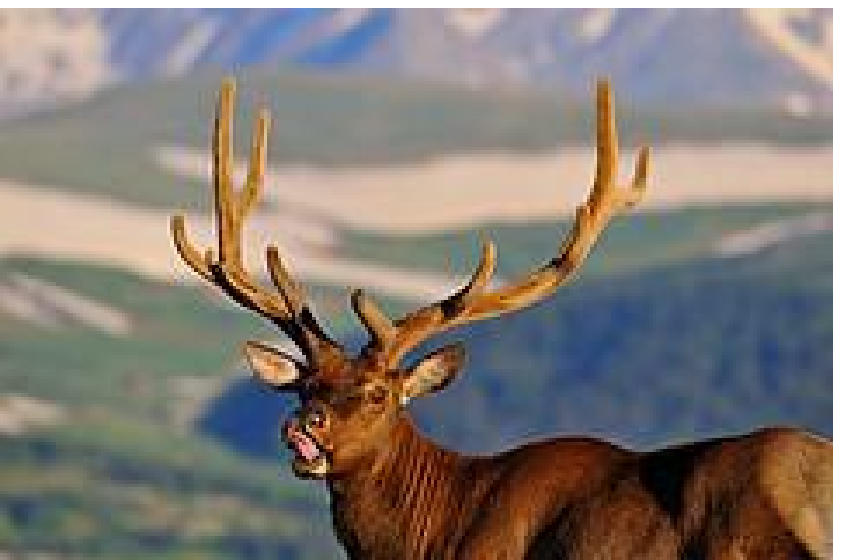}}} & {\multirow{5}{*}{\includegraphics[width=.18\columnwidth,height=.15\columnwidth]{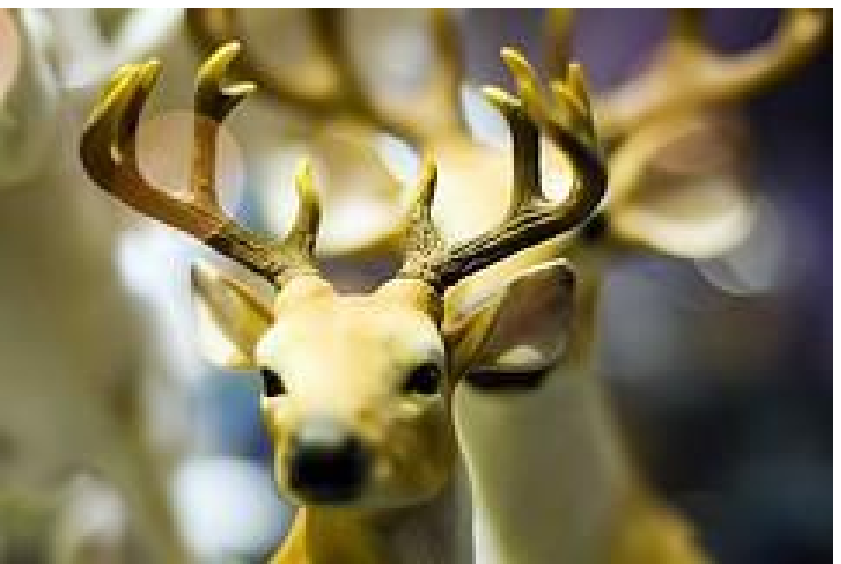}}} & {\multirow{5}{*}{\includegraphics[width=.18\columnwidth,height=.15\columnwidth]{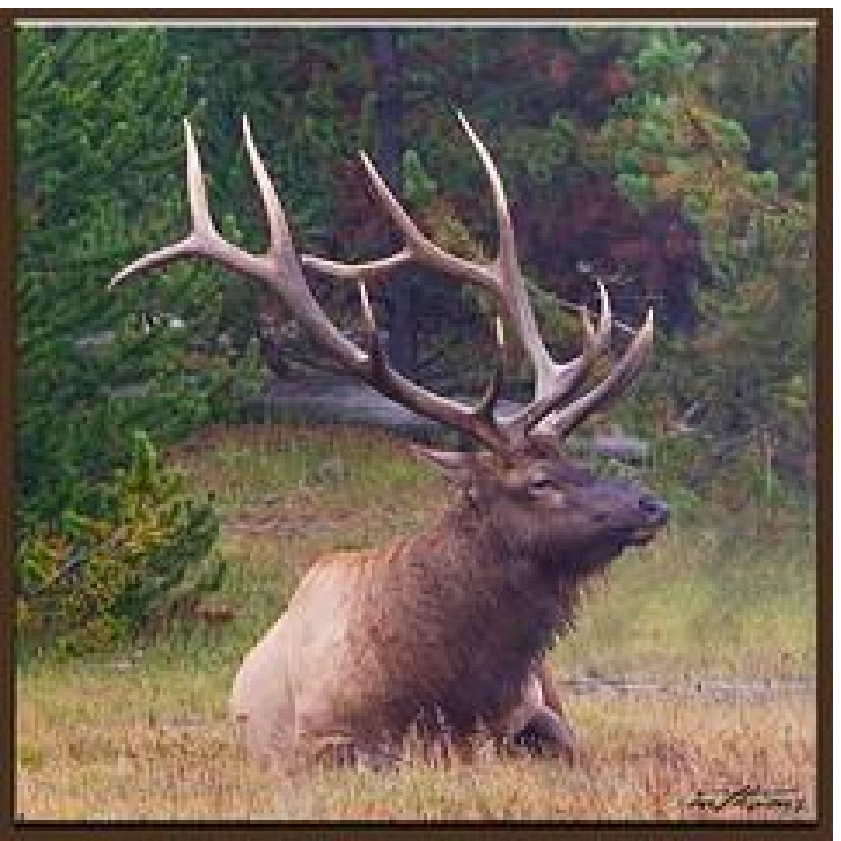}}} & {\multirow{5}{*}{\includegraphics[width=.18\columnwidth,height=.15\columnwidth]{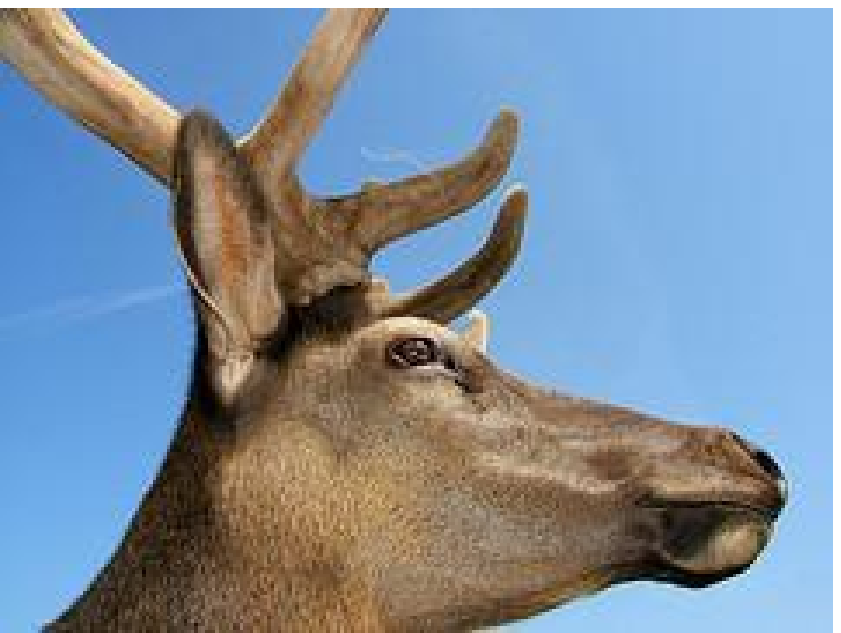}}} \\
        &&&&& \\
        &&&&& \\
        &&&&& \\
        &&&&& \\
         \hline
{\multirow{5}{*}{F6}} & {\multirow{5}{*}{\includegraphics[width=.18\columnwidth,height=.15\columnwidth]{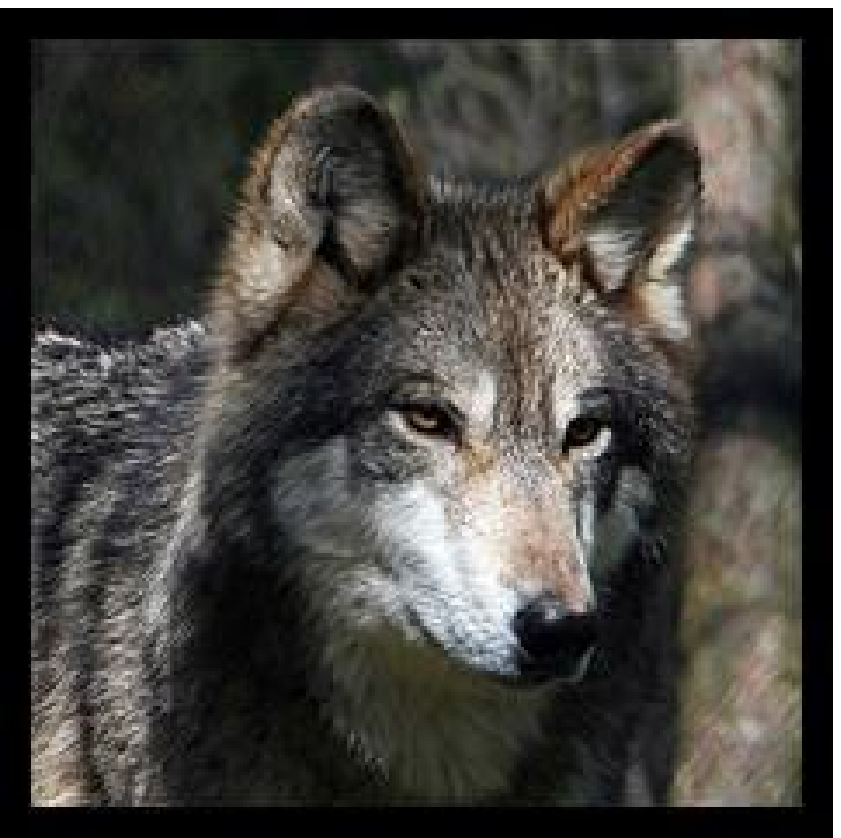}}} & {\multirow{5}{*}{\includegraphics[width=.18\columnwidth,height=.15\columnwidth]{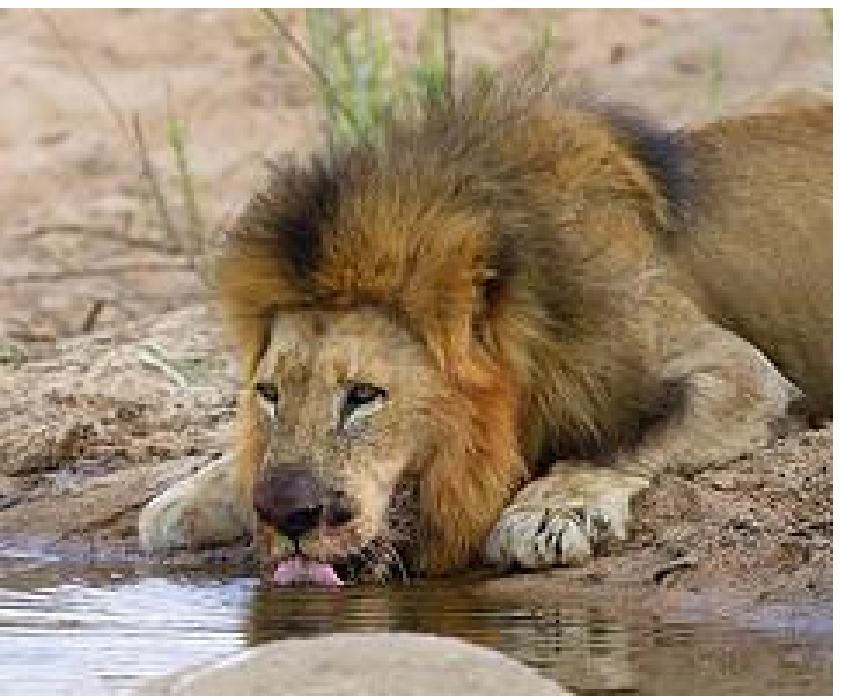}}} & {\multirow{5}{*}{\includegraphics[width=.18\columnwidth,height=.15\columnwidth]{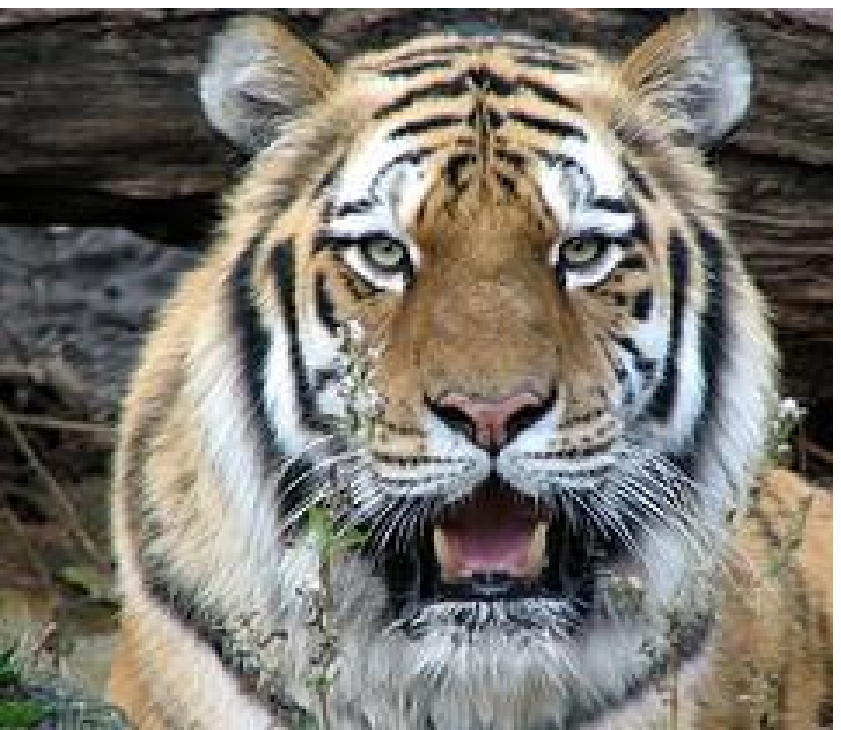}}} & {\multirow{5}{*}{\includegraphics[width=.18\columnwidth,height=.15\columnwidth]{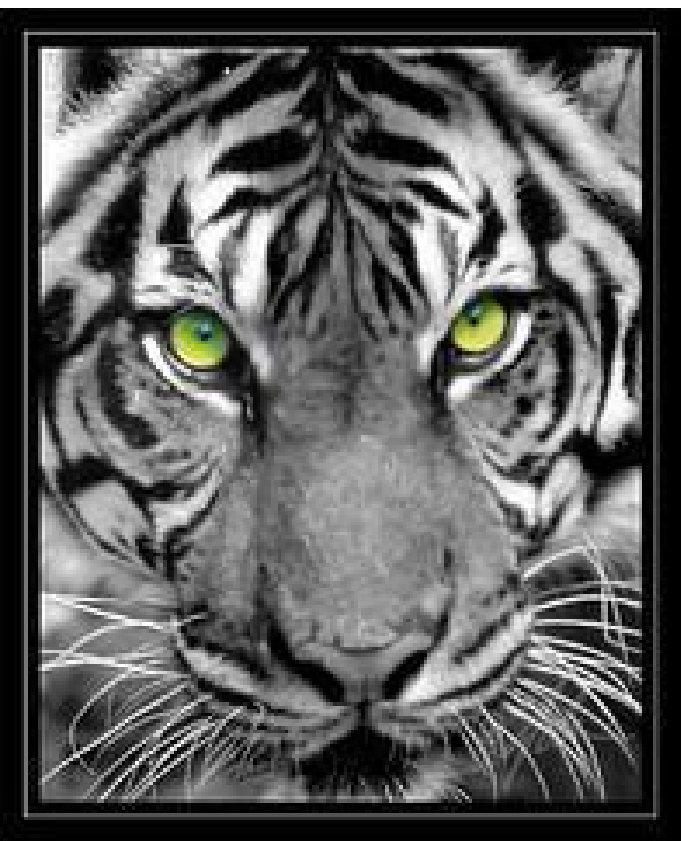}}} & {\multirow{5}{*}{\includegraphics[width=.18\columnwidth,height=.15\columnwidth]{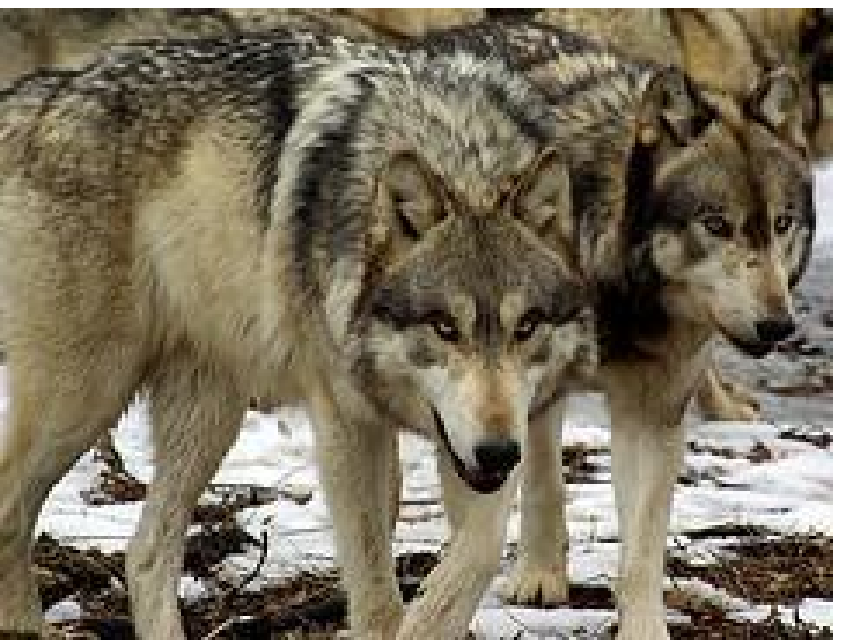}}} \\
        &&&&& \\
        &&&&& \\
        &&&&& \\
        &&&&& \\
        \hline \hline
\end{tabular}}
\caption{Six example features discovered iLSVM on the Flickr animal dataset. For each feature, we show 5 top-ranked images.} \label{fig:flickr_topic}
\end{center}\vspace{-1cm}
\end{figure*}

We compare iLSVM with the large-margin Harmonium (MMH)~\citep{Chen:nips10}, which was shown to outperform many other latent feature models, and two decoupled approaches -- {\it EFH+SVM} and {\it IBP+SVM}. EFH+SVM uses the exponential family Harmonium (EFH)~\citep{Welling:04} to discover latent features and then learns a multi-way SVM classifier. IBP+SVM is similar, but uses an IBP factor analysis model~\citep{Griffiths:tr05} to discover latent features. To initialize the learning algorithms for these models, we found that using the SVD factors of the input feature matrix as the initial weights for MMH and EFH can produce better results. Here, we also use the SVD factors as the initial mean of weights in the likelihood models for iLSVM.
Both MMH and EFH+SVM are finite models and they need to pre-specify the dimensionality of latent features. We report their results on classification accuracy and F1 score (i.e., the average F1 score over all possible classes)~\citep{Zhu:iSVM11} achieved with the best dimensionality in Table~\ref{table:Classification}. Figure~\ref{fig:FlickrK} illustrates the performance change of MMH when using different number of latent features, from which we can see that $K=40$ produces the best performance and either increasing or decreasing $K$ could make the performance worse. For iLSVM and IBP+SVM, we use the mean-field inference method and present the average performance with 5 randomly initialized runs (Please see Appendix D.2 for the algorithm and initialization details). We perform 5-fold cross-validation on training data to select hyperparameters, e.g., $\alpha$ and $C$ (we use the same procedure for MT-iLSVM). We can see that iLSVM can achieve comparable performance with the nearly optimal MMH, without needing to pre-specify the latent feature dimension\footnote{We set the truncation level to 300, which is large enough.}, and is much better than the decoupled approaches (i.e., IBP+SVM and EFH+SVM). For the two stage methods, we don't have a clear winner -- IBP+SVM performs a bit worse than EFH+SVM on the TRECVID dataset, while it outperforms EFH+SVM on the flickr dataset. The reason for the difference may be due to the initialization or different properties of the data.

It is also interesting to examine the discovered latent features. Figure~\ref{fig:TrecPattern} shows the overall average values of latent features and the per-class average feature values of iLSVM in one run on the TRECVID dataset. We can see that on average only about 45 features are active for the TRECVID dataset. For the overall average, we also present the standard deviation over the 5 categories. A larger deviation means that the corresponding feature is more discriminative when predicting different categories. For example, feature 26 and feature 34 are generally less discriminative than many other features, such as feature 1 and feature 30. Figure~\ref{fig:FlickrZ} shows the overall average feature values together with standard deviation on the Flickr dataset. We omitted the per-class average because that figure is too crowded with 13 categories. We can that as $k$ increases, the probability that feature $k$ is active decreases. The reason for the features with stable values (i.e., standard deviations are extremely small) is due to our initialization strategy (each feature has $0.5$ probability to be active). Initializing $\psi_{dk}$ as being exponentially decreasing (e.g., like the constructing process of $\piv$) leads to a faster decay and many features will be inactive. To examine the semantics\footnote{The interpretation of latent features depends heavily on the input data.} of each feature, Figure~\ref{fig:flickr_topic} presents some example features discovered on the Flickr animal dataset. For each feature, we present 5 top-ranked images which have large values on this particular feature. We can see that most of the features are semantically interpretable. For instance, feature F1 is about squirrel; feature F2 is about ocean animal, which is whales in the Flickr dataset; and feature F4 is about hawk. We can also see that some features are about different aspects of the same category. For example, feature F2 and feature F3 are both about whales, but with different background.

\subsection{Multi-task Learning}

Now, we evaluate the multi-task infinite latent SVM (MT-iLSVM) on several well-studied real datasets.

\subsubsection{Description of the Data}

{\bf Scene and Yeast Data}: These datasets are from the UCI repository, and each data example has multiple labels. As in~\citep{HalDaume:10}, we treat the multi-label classification as a multi-task learning problem, where each label assignment is treated as a binary classification task. The Yeast dataset consists of 1500 training and 917 test examples, each having 103 features, and the number of labels (or tasks) per example is 14. The Scene dataset consists 1211 training and 1196 test examples, each having 294 features, and the number of labels (or tasks) per example for this dataset is 6.


{\bf School Data}: This dataset comes from the Inner London Education Authority and has been used to study the effectiveness of schools. It consists of examination records of 15,362 students from 139 secondary schools in years 1985, 1986 and 1987. The dataset is publicly available and has been extensively evaluated in various multi-task learning methods~\citep{Bakker:JMLR03,Bonilla:MTGP08,Zhang:uai10}, where each task is defined as predicting the exam scores of students belonging to a specific school based on four student-dependent features (year of the exam, gender, VR band and ethnic group) and four school-dependent features (percentage of students eligible
for free school meals, percentage of students in VR band 1, school gender and school denomination). In order to compare with the above methods, we follow the same setup described in~\citep{Argyriou:nips07,Bakker:JMLR03} and similarly we create dummy variables for those features that are categorical forming a total of 19 student-dependent features and 8 school-dependent features. We use the same 10 random splits\footnote{Available at: http://ttic.uchicago.edu/$\sim$argyriou/code/index.html} of the data, so that $75\%$ of the examples from each school (task) belong to the training set and $25\%$ to the test set. On average, the training set includes about 80 students per school and the test set about 30 students per school.

\begin{table}[t]
\begin{center}
\begin{tabular}{|c|c|ccc|}
\hline
Dataset & Model & Acc & F1-Micro & F1-Macro \\
\hline
\multirow{5}{*}{Yeast} & YaXue  & 0.5106 & 0.3897  & 0.4022 \\
{} & Piyushrai-1 & 0.5212 & 0.3631  & 0.3901  \\
{} & Piyushrai-2 & 0.5424 & 0.3946  & 0.4112  \\
{} & MT-IBP+SVM  & 0.5475 $\pm$ 0.005 & 0.3910 $\pm$ 0.006 & 0.4345 $\pm$ 0.007  \\
{} & MT-iLSVM & {\bf 0.5792} $\pm$ 0.003 & {\bf 0.4258} $\pm$ 0.005 & {\bf 0.4742} $\pm$ 0.008  \\
\hline
\multirow{5}{*}{Scene} & YaXue       & 0.7765 & 0.2669 & 0.2816 \\
{} & Piyushrai-1  & 0.7756 & 0.3153 & 0.3242 \\
{} & Piyushrai-2  & 0.7911 & 0.3214 & 0.3226 \\
{} & MT-IBP+SVM  &  0.8590 $\pm$ 0.002 & 0.4880 $\pm$ 0.012 & 0.5147 $\pm$ 0.018 \\
{} & MT-iLSVM    & {\bf 0.8752} $\pm$ 0.004 & {\bf 0.5834} $\pm$ 0.026 & {\bf 0.6148} $\pm$ 0.020 \\
\hline
\end{tabular}
\end{center}
\caption{ Multi-label classification performance on Scene and Yeast datasets.}\label{table:MultiLabel}
\end{table}

\subsubsection{Results}

{\bf Scene and Yeast Data}: We compare with the closely related nonparametric Bayesian methods, including kernel stick-breaking (YaXue)~\citep{XueYa:icml07} and the basic and augmented infinite predictor subspace models (i.e., Piyushrai-1 and Piyushrai-2)~\citep{HalDaume:10}. These nonparametric Bayesian models were shown to outperform the independent Bayesian logistic regression and a single-task pooling approach~\citep{HalDaume:10}. We also compare with a decoupled method {\it MT-IBP+SVM}\footnote{This decoupled approach is in fact an one-iteration MT-iLSVM, where we first infer the shared latent matrix $\Zv$ and then learn an SVM classifier for each task.} that uses an IBP factor analysis model to find shared latent features among multiple tasks and then builds separate SVM classifiers for different tasks. For MT-iLSVM and MT-IBP+SVM, we use the mean-field inference method in Sec~\ref{sec:inference} and report the average performance with 5 randomly initialized runs (See Appendix D.1 for initialization details). For comparison with~\citep{HalDaume:10,XueYa:icml07}, we use the overall classification accuracy, F1-Macro and F1-Micro as performance measures. Table~\ref{table:MultiLabel} shows the results. On both datasets, MT-iLSVM needs less than 50 latent features on average. We can see that the large-margin MT-iLSVM performs much better than other nonparametric Bayesian methods and MT-IBP+SVM, which separates the inference of latent features from learning the classifiers.

\begin{figure*}[t]
\begin{center}
\includegraphics[width=0.75\columnwidth,height=0.35\columnwidth]{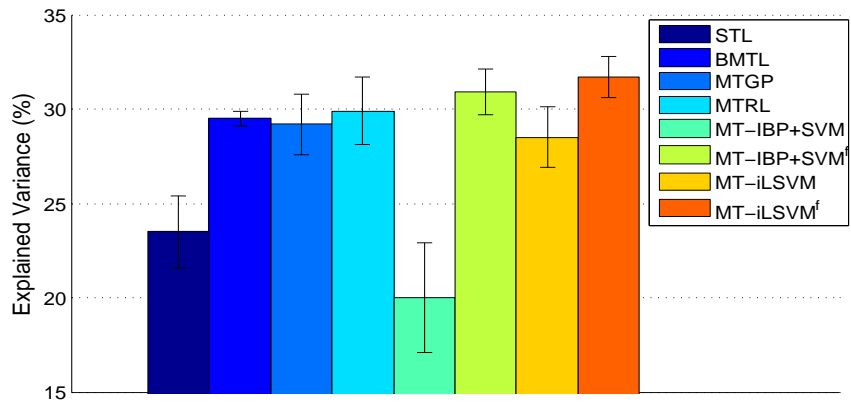}
\caption{Percentage of explained variance by various models on the School dataset.}
\label{fig:MultiTask}
\end{center}
\end{figure*}

\begin{figure*}[t]
\begin{center}
\subfigure[Yeast]{\includegraphics[width=0.43\columnwidth,height=0.33\columnwidth]{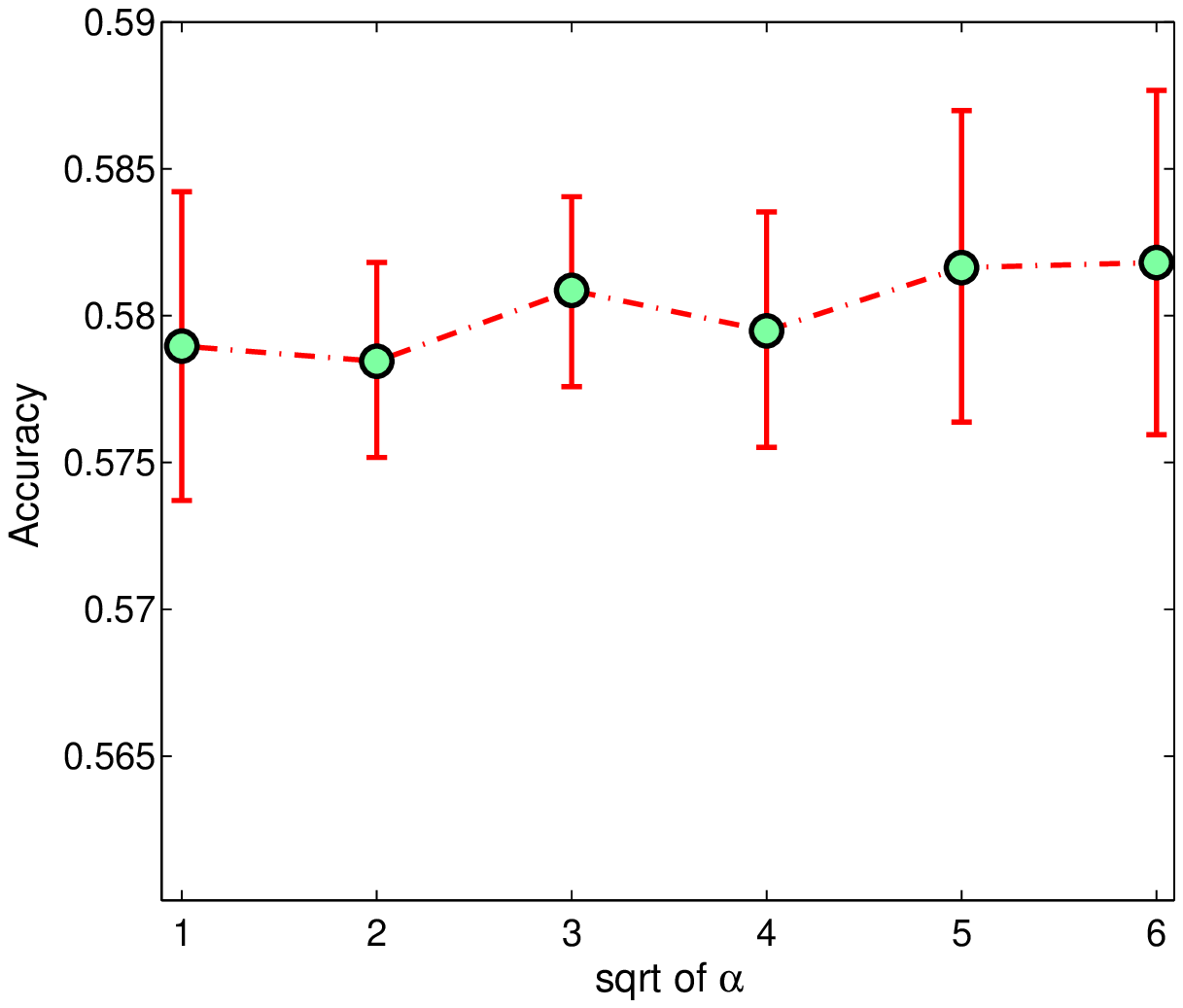}}\hfill
\subfigure[Yeast]{\includegraphics[width=0.43\columnwidth,height=0.33\columnwidth]{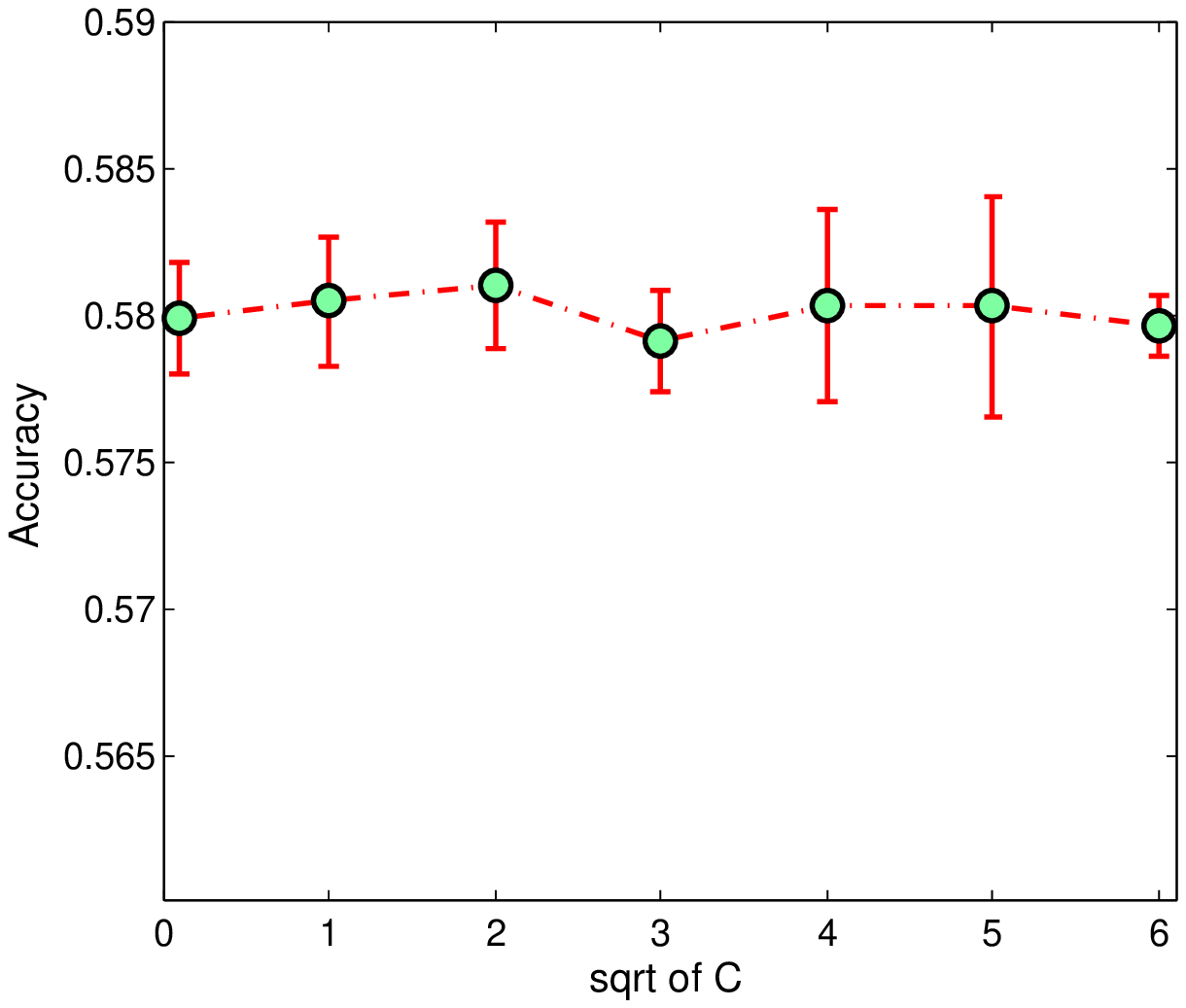}}\hfill
\subfigure[School]{\includegraphics[width=0.43\columnwidth,height=0.33\columnwidth]{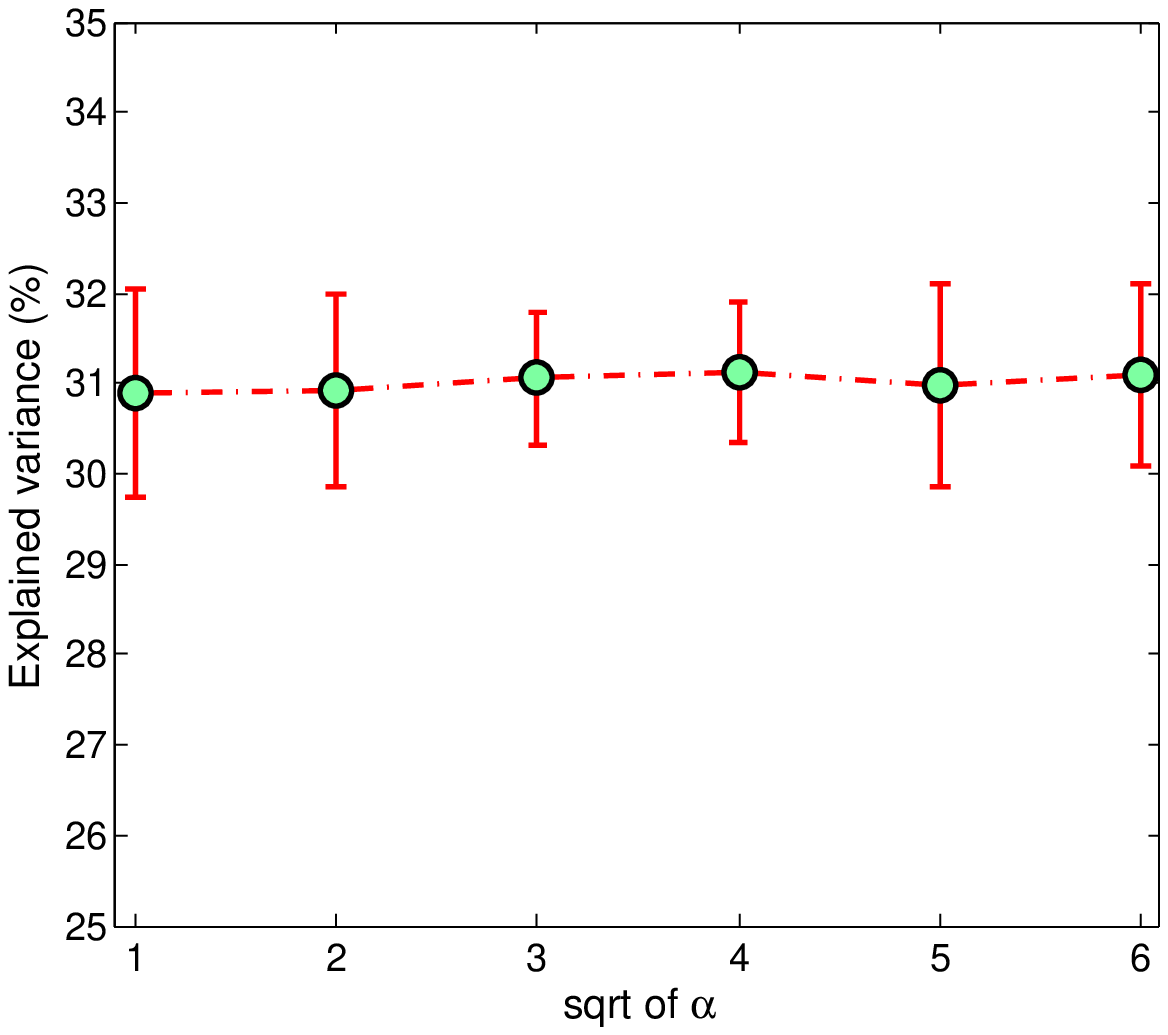}}\hfill
\subfigure[School]{\includegraphics[width=0.43\columnwidth,height=0.33\columnwidth]{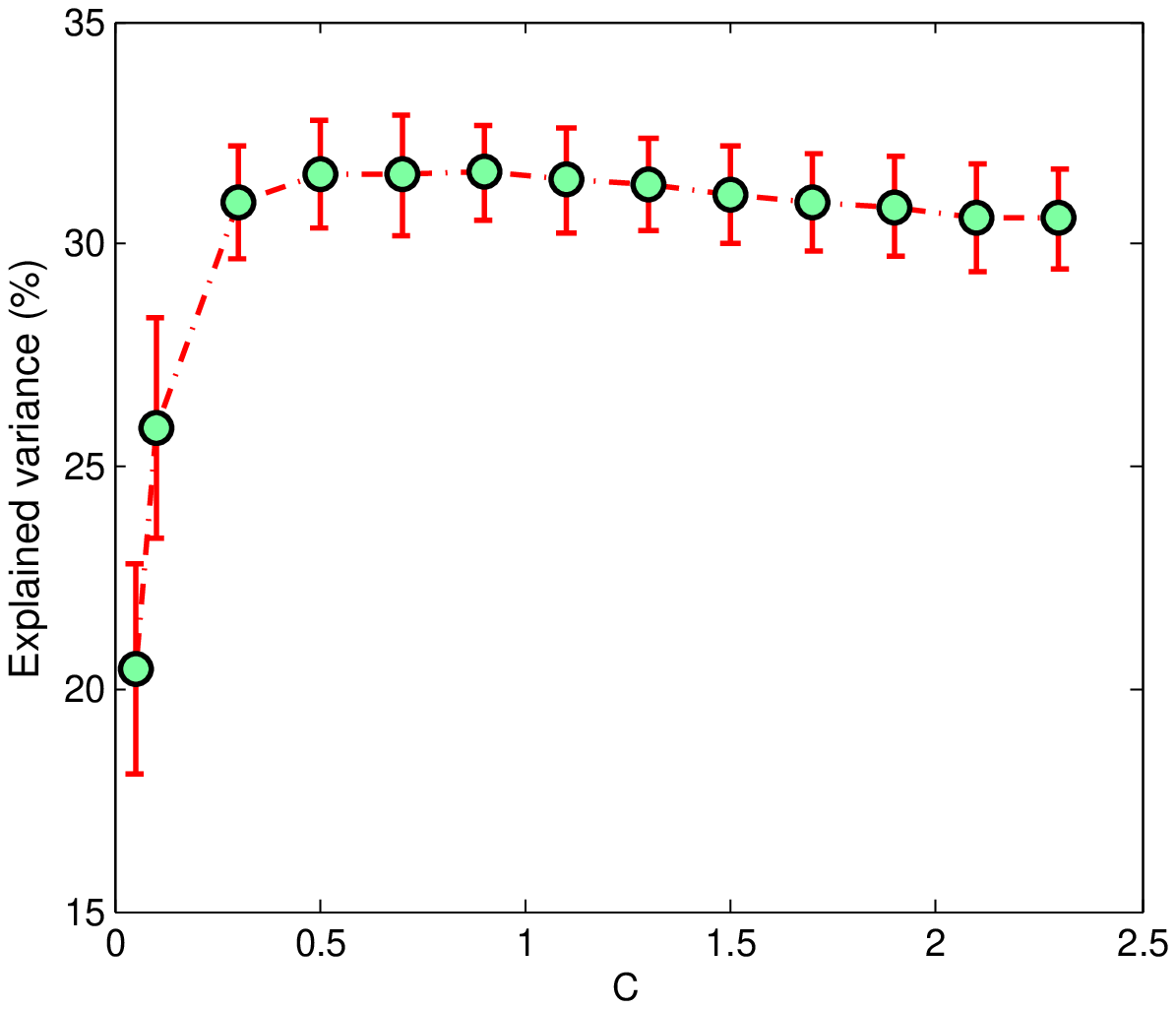}}
\caption{Sensitivity study of MT-iLSVM: (a) classification accuracy with different $\alpha$ on Yeast data; (b) classification accuracy with different $C$ on Yeast data; (c) percentage of explained variance with different $\alpha$ on School data; and (d) percentage of explained variance with different $C$ on School data.}
\label{fig:sensitivity}\vspace{-.4cm}
\end{center}
\end{figure*}

{\bf School Data}: We use the percentage of explained variance~\citep{Bakker:JMLR03} as the measure of the regression performance, which is defined as the total variance of the data minus the sum-squared error on the test set as a percentage of the total variance. Since we use the same settings, we can compare with the state-of-the-art results of
\begin{enumerate}[(1)]
\item Bayesian multi-task learning (BMTL)~\citep{Bakker:JMLR03};
\item Multi-task Gaussian processes (MTGP)~\citep{Bonilla:MTGP08};
\item Convex multi-task relationship learning (MTRL)~\citep{Zhang:uai10};
\end{enumerate}
and single-task learning (STL) as reported in~\citep{Bonilla:MTGP08,Zhang:uai10}. For MT-iLSVM and MT-IBP+SVM, we also report the results achieved by using both the latent features (i.e., $\Zv^\top \xv$) and the original input features $\xv$ through vector concatenation, and we denote the corresponding methods by {\it MT-iLSVM$^f$} and {\it MT-IBP+SVM$^f$}, respectively.
On average the multi-task latent SVM (i.e., MT-iLSVM) needs about 50 latent features to get sufficiently good and robust performance. From the results in Figure~\ref{fig:MultiTask}, we can see that the MT-iLSVM achieves better results than the existing methods that have been tested in previous studies. Again, the joint MT-iLSVM performs much better than the decoupled method MT-IBP+SVM, which separates the latent feature inference from the training of large-margin classifiers. Finally, using both latent features and the original input features can boost the performance slightly for MT-iLSVM, while much more significantly for the decoupled MT-IBP+SVM.

\subsection{Sensitivity Analysis}

\begin{figure*}[t]
\begin{center}
\includegraphics[width=0.88\columnwidth,height=0.4\columnwidth]{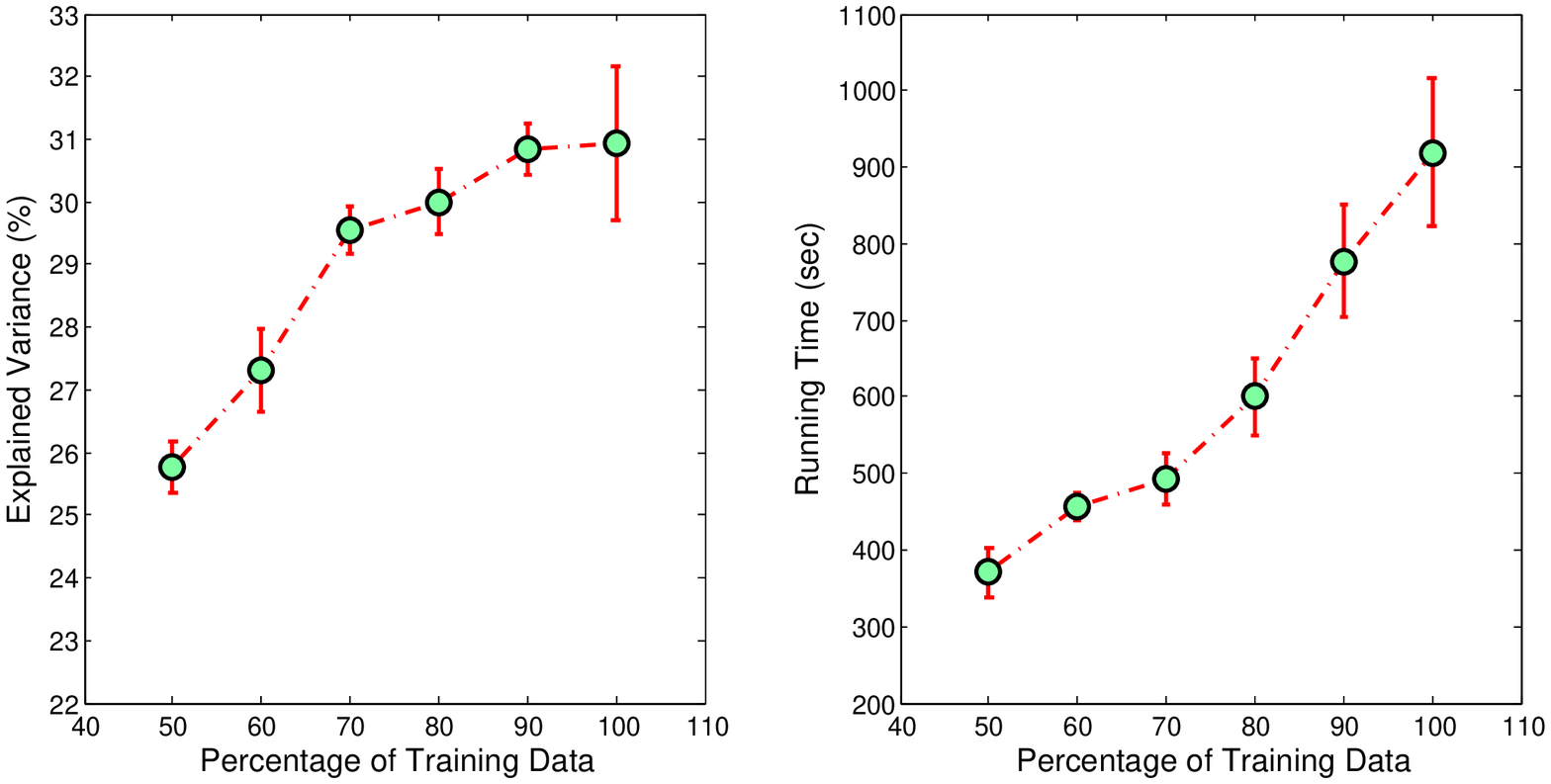}
\caption{Percentage of explained variance and running time by MT-iLSVM with various training sizes.}
\label{fig:Time}
\end{center}
\end{figure*}

Figure~\ref{fig:sensitivity} shows how the performance of MT-iLSVM changes against the hyper-parameter $\alpha$ and regularization constant $C$ on the Yeast and School datasets. We can see that on the Yeast dataset, MT-iLSVM is insensitive to both $\alpha$ and $C$. For the School dataset, MT-iLSVM is very insensitive the $\alpha$, and it is stable when $C$ is set between 0.3 and 1.


Figure~\ref{fig:Time} shows how the training size affects the performance and running time of MT-iLSVM on the School dataset. We use the first $b\%$ ($b=50,60,70,80,90,100$) of the training data in each of the 10 random splits as training set and use the corresponding test data as test set. We can see that as training size increases, the performance and running time generally increase; and MT-iLSVM achieves the state-of-art performance when using about $70\%$ training data. From the running time, we can also see that MT-iLSVM is generally quite efficient by using mean-field inference.

Finally, we investigate how the performance of MT-iLSVM changes against the hyperparameters $\sigma_{m0}^2$ and $\lambda_{mn}^2$. We initially set $\sigma_{m0}^2=1$ and compute $\lambda_{mn}^2$ from observed data. If we further estimate them by maximizing the objective function, the performance does not change much ($\pm 0.3 \%$ for average explained variance on the School dataset). We have similar observations for iLSVM.

\section{Conclusions and Discussions}

We present regularized Bayesian inference (RegBayes), a computational framework to perform post-data posterior inference with a rich set of regularization/constraints on the desired post-data posterior distributions. RegBayes is formulated as a information-theoretical optimization problem, and it is applicable to both directed and undirected graphical models. We present a general theorem to characterize the solution of RegBayes, when the posterior regularization is induced from a linear operator (e.g., expectation). Furthermore, we particularly concentrate on developing two large-margin nonparametric Bayesian models under the RegBayes framework to learn predictive latent features for classification and multi-task learning, by exploring the large-margin principle to define posterior constraints. Both models allow the latent dimension to be automatically resolved from the data. The empirical results on several real datasets appear to demonstrate that our methods inherit the merits from both Bayesian nonparametrics and large-margin learning.


RegBayes offers a flexible framework for considering posterior regularization in performing parametric or nonparametric Bayesian inference. For future work, we plan to study other posterior regularization beyond the large-margin constraints, such as posterior constraints defined on manifold structures~\citep{Huh:KDD10} and those represented in the form of first-order logic, and investigate how posterior regularization can be used in other interesting nonparametric Bayesian models~\citep{Beal:iHMM07,YWTeh:jasa06,Blei:icml10} in different contexts, such as link prediction~\citep{Miller:nips09} for social network analysis and low-rank matrix factorization for collaborative prediction. Some of our preliminary results~\citep{Xu:nips12,Zhu:icml12,Zhu:RegBayes-icml14} have shown great promise. It is interesting to investigate more carefully along this direction. Moreover, as we have stated, RegBayes can be developed for undirected MRFs. But the inference would be even harder. We plan to do a systematic investigation along this direction too. We have some preliminary results presented in~\citep{Chen:iEFH11}, but there is a lot of room to further improve. Finally, regularized Bayesian inference in general leads to a highly nontrivial inference problem. Although the general solution can be derived with convex analysis theory, it is normally intractable to infer them directly. Therefore, approximate inference techniques such as the truncated mean-field approximation have to be used. For the current truncated inference methods, one key limit is to pre-specify the truncation level. A too conservative truncation level could lead to a waste of computing resources. So, it is important to develop inference algorithms that could adaptively determine the number of latent features, such as Monte Carlo methods. We have some preliminary progress along this direction as reported in the work~\citep{Zhu:nips12,Zhu:icml13}. It is interesting to extend these techniques to deal with other challenging nonparametric Bayesian models.

\section*{Acknowledgements}

We thank the anonymous reviewers and the editors for many helpful comments to improve the manuscript. NC and JZ are supported by National Key Foundation R\&D Projects (No.s 2013CB329403, 2012CB316301), National Natural Science Foundation of China (Nos. 61322308, 61332007, 61305066), Tsinghua University Initiative Scientific Research Program (No. 20121088071), and the China Postdoctoral Science Foundation Grant (No. 2013T60117) to NC. EX is supported by AFOSR FA95501010247, ONR N000140910758, NSF Career DBI-0546594 and an Alfred P. Sloan Research Fellowship.

\appendix

\section*{Appendix A: Generalization Beyond Bayesian Networks}

Standard Bayesian inference and the proposed regularized Bayesian inference implicitly make the assumption that the model can be graphically drawn as a Bayesian network as illustrated in Figure~\ref{fig:BayesGraph1}\footnote{The structure within $\model$ can be arbitrary, either a directed, undirected or hybrid chain graph.}. Here, we consider a more general formulation which could cover both directed and undirected latent variable models, such as the well-studied Boltzmann machines~\citep{Murray:UAI04,Welling:04}, as well as the case where a model could have some unknown parameters (e.g., hyper-parameters) and need an estimation procedure, such as maximum likelihood estimation (MLE), besides posterior inference. The latter is also known as empirical Bayesian methods, which are frequently employed by practitioners.

{\bf Extension 1: Empirical Bayesian Inference with Unknown Parameters}: As illustrated in Figure~\ref{fig:BayesGraph2}, in some cases we need to perform the empirical Bayesian inference in the presence of unknown parameters. For instance, in a linear-Gaussian Bayesian model, we may choose to estimate its covariance matrix using MLE; and in a latent Dirichlet allocation (LDA)~\citep{Blei:03} model, we may choose to estimate the unknown topical dictionary, although in principle we can treat these parameters as random variables and perform full Bayesian inference. In such cases, we need some mechanisms to estimate the unknown parameters when doing Bayesian inference. Let $\Theta$ be model parameters. We can formulate empirical Bayesian inference as solving\footnote{The objective can be derived using variational techniques. It is in fact a variational upper bound of the negative log-likelihood.}
\begin{eqnarray}\label{eq:MLEBayes}
\inf_{\Theta, q(\model)} &&  \KL(q(\model) \Vert \pi(\model)) - \int_{\ms} \log p(\mathcal{D}|\model,\Theta) q(\model) d\model  \\
\mathrm{s.t.:} && q(\model) \in \mathcal{P}_{\textrm{prob}}.\nonumber
\end{eqnarray}
Although the problem is convex over $q(\model)$ for any fixed $\Theta$, it is not jointly convex in general. A natural algorithm to solve this problem is the well-known EM procedure~\citep{Dempster:77}, which converges to a local optimum. Specifically, we have the following result.
\begin{lemma}\label{lemma:MLEDuality}
For problem~(\ref{eq:MLEBayes}), the optimum solution of $q(\model)$ is equivalent to the posterior distribution by Bayes' theorem for any $\Theta$; and the optimum $\Theta^\ast$ is the MLE
\begin{eqnarray}
\Theta^\ast = \argmax_\Theta \log p(\mathcal{D}|\Theta). \nonumber
\end{eqnarray}
\end{lemma}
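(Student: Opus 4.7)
The plan is to decouple the joint minimization into two nested optimizations: first solve the inner problem over $q(\model)$ for arbitrary fixed $\Theta$, then substitute the optimal $q$ back and minimize over $\Theta$. This mirrors the argument already used to establish the variational form of Bayes' rule in Eq.~(\ref{eq:BasicMaxEnt}), which is essentially the $\Theta$-free special case of the present lemma.

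For the inner problem, I would rewrite the objective by absorbing the likelihood term into the KL divergence. Concretely, using $\pi(\model) p(\data \mid \model, \Theta) = p(\model \mid \data, \Theta)\, p(\data \mid \Theta)$, the sum $\KL(q(\model) \Vert \pi(\model)) - \int_{\ms} \log p(\data \mid \model, \Theta)\, q(\model)\, \ud \mu(\model)$ collapses into
\[
\KL\bigl(q(\model) \,\Vert\, p(\model \mid \data, \Theta)\bigr) - \log p(\data \mid \Theta).
\]
Since the last term does not depend on $q$, and since the KL divergence is nonnegative with equality iff the two distributions coincide (Gibbs' inequality, applicable because the constraint $q \in \mathcal{P}_{\textrm{prob}}$ already encodes absolute continuity with respect to the base measure so that the divergence is well-defined), the unique minimizer over $q$ is $\hat q_\Theta(\model) = p(\model \mid \data, \Theta)$, i.e., the Bayesian posterior obtained via Bayes' theorem with parameter $\Theta$. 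This is the first claim.

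For the outer problem, substitute $\hat q_\Theta$ back into the objective. The KL term vanishes, leaving the reduced objective
\[
\inf_{\Theta}\; -\log p(\data \mid \Theta),
\]
which is exactly the MLE problem $\Theta^\ast = \argmax_\Theta \log p(\data \mid \Theta)$. This yields the second claim.

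There is no real obstacle here beyond a careful rewriting; the only subtlety is making sure the measure-theoretic conditions for rewriting $\pi(\model) p(\data \mid \model, \Theta)$ as $p(\model, \data \mid \Theta)$ and then conditioning are valid. These are guaranteed by the absolute-continuity and dominated-likelihood assumptions stated at the beginning of Section~\ref{sec:Bayes-As-Optimization}, so invoking Bayes' conditionalization rule~(\ref{eq:bayesthrm}) for each $\Theta$ is legitimate. Because the inner problem admits a closed-form minimizer for every $\Theta$, the joint infimum factorizes cleanly and the proof reduces to two lines of algebra plus Gibbs' inequality; no compactness or convexity argument in $\Theta$ is needed for the \emph{characterization} of optima (the problem is generally non-convex in $\Theta$, which is why Remark on EM-type procedures in the main text is relevant, but that concerns computation rather than the statement itself).
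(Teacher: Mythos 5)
Your proposal is correct and follows essentially the same route as the paper: the paper's proof simply invokes the variational formulation of Bayes' rule in Eq.~(\ref{eq:BasicMaxEnt}) to conclude $\hat q_\Theta(\model)=p(\model|\data,\Theta)$ for any fixed $\Theta$, then substitutes back to obtain the MLE problem for $\Theta$. Your write-up just makes explicit the rewriting of the objective as $\KL(q\Vert p(\model|\data,\Theta))-\log p(\data|\Theta)$ and the appeal to Gibbs' inequality, which the paper leaves implicit.
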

\begin{proof}
According to the variational formulation of Bayes' rule in Eq.~(\ref{eq:BasicMaxEnt}), we get that the optimum solution is $q(\model) = p(\model|\data, \Theta)$ for any $\Theta$. Substituting the optimum solution of $q$ into the objective, we get the optimization problem of $\Theta$.
\end{proof}

\begin{figure*}
\begin{center}
{\hfill\subfigure[]{\includegraphics[height=0.25\columnwidth]{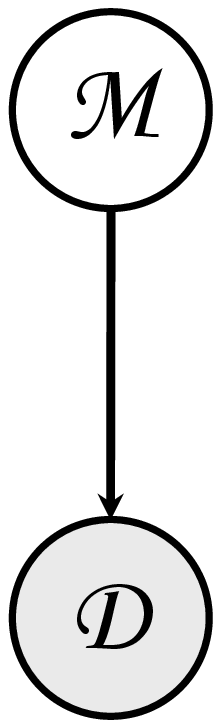}\label{fig:BayesGraph1}}\hfill
\subfigure[]{\includegraphics[height=.25\columnwidth]{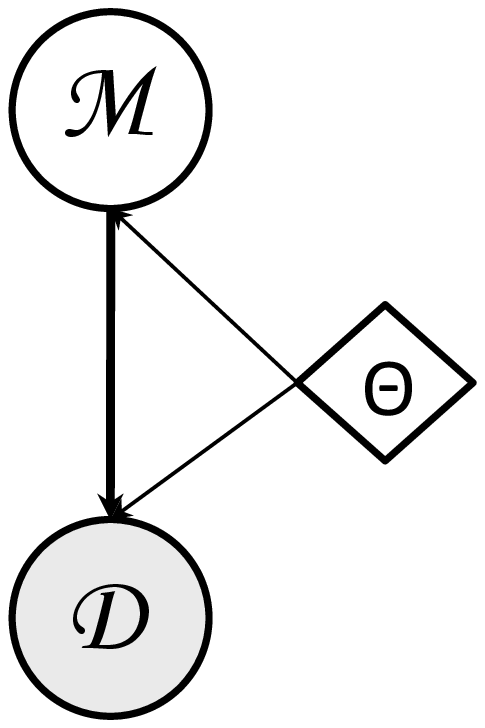}\label{fig:BayesGraph2}}\hfill
\subfigure[]{\includegraphics[height=.25\columnwidth]{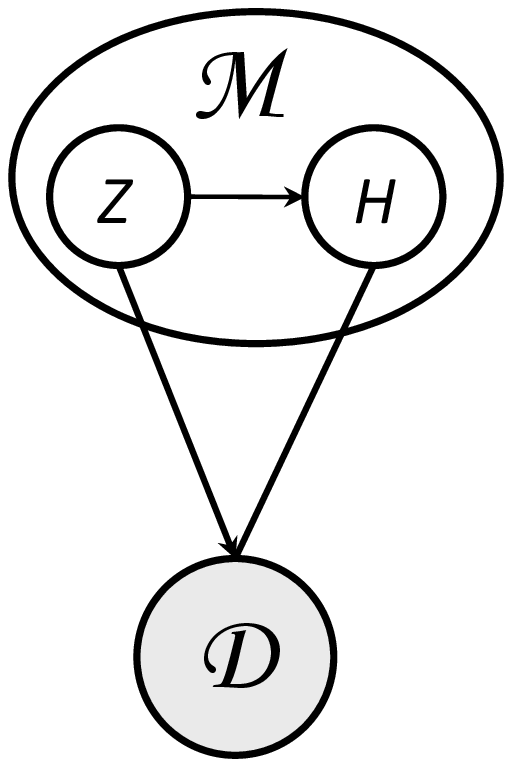}\label{fig:BayesGraph3}}\hfill}
\caption{Illustration graphs for three different types of models that involve Bayesian inference: (a) a Bayesian generative model; (b) a Bayesian generative model with unknown parameters $\Theta$; and (c) a chain graph model.}
\end{center}
\end{figure*}

{\bf Extension 2: Chain Graph}: In the above cases, we have assumed that the observed data are generated by some model in a directed causal sense. This assumption holds in directed latent variable models. However, in many cases, we may choose alternative formulations to define the joint distribution of a model and the observed data. Figure~\ref{fig:BayesGraph3} illustrates one such scenario, where the model $\model$ consists of two subsets of random variables. One subset $H$ is connected to the observed data via an undirected graph and the other subset $Z$ is connected to the observed data and $H$ using directed edges. This graph is known as a chain graph. Due to the Markov properties of chain graph~\citep{Frydenberg:90}, we know that the joint distribution has the factorization form as
\begin{eqnarray}\label{eq:chainGraph}
p(\model, \data) = p(Z) p(H, \data | Z),
\end{eqnarray}
where $p(H,\data|Z)$ is a Markov random field (MRF).
One concrete example of such a hybrid chain model is the Bayesian Boltzman machines~\citep{Murray:UAI04}, which treat the parameters of a Boltzmann machine as random variables and perform Bayesian inference with MCMC sampling methods.

The insights that RegBayes covers undirected or chain graph latent variable models come from the observation that the objective $\mathcal{L}(q(\model))$ of problem~(\ref{eq:BasicMaxEnt}) is in fact an KL-divergence, namely, we can show that
\begin{eqnarray}\label{eq:JointKL}
\mathcal{L}(q(\model)) = \KL( q(\model) \Vert p(\model, \data) ),
\end{eqnarray}
where $p(\model, \data)$ is the joint distribution. Note that when $\data$ is given, the distribution $p(\model, \data)$ is non-normalized for $\model$; and we have abused the KL notation for non-normalized distributions in Eq.~(\ref{eq:JointKL}), but with the same formula. For directed Bayesian networks~\citep{Zhu:iLSVM11}, we naturally have $p(\model, \data) = \pi(\model) p(\data|\model)$. For the undirected MRF models, we have $\model=\{Z, H\}$ and again we can define the joint distribution as in Eq.~(\ref{eq:chainGraph}).

Putting the above two extensions of Bayesian inference together, the regularized Bayesian inference with estimating unknown model parameters can be generally formulated as
\setlength\arraycolsep{1pt}\begin{eqnarray}\label{eq:constraindBayes_param}
\inf_{\Theta, q(\model), \xiv} && \mathcal{L}( \Theta, q(\model) ) + U(\xiv)~~~\mathrm{or}~~~ \inf_{\Theta, q(\model)}~  \mathcal{L}( \Theta, q(\model) ) + g(Eq(\model)) \\
\mathrm{s.t.:} && q(\model) \in \mathcal{P}_{\textrm{post}}(\xiv) ~~~~~~~~~~~~~~~~ \mathrm{s.t.:}~~  q(\model) \in \mathcal{P}_{\textrm{prob}}, \nonumber
\end{eqnarray}
where $\mathcal{L}(\Theta, q(\model))$ is the objective function of problem~(\ref{eq:MLEBayes}). These two formulations are equivalent. We will call the former a {\it constrained} formulation and call the latter an {\it unconstrained} formulation by ignoring the standard normalization constraints, which are easy to deal with.

\section*{Appendix B: MedLDA---A RegBayes Model with Finite Latent Features}\label{sec:MedLDA}

This section presents a new interpretation of MedLDA (maximum entropy discrimination latent Dirichlet allocation)~\citep{Zhu:MedLDA09} under the framework of regularized Bayesian inference. MedLDA is a max-margin supervised topic model, an extension of latent Dirichlet allocation (LDA)~\citep{Blei:03} for supervised learning tasks. In MedLDA, each data example is projected to a point in a finite dimensional latent space, of which each feature corresponds to a topic, i.e., a unigram distribution over the terms in a vocabulary. MedLDA represents each data as a probability distribution over the features, which results in a conservation constraint (i.e., the more a data expresses on one feature, the less it can express others) ~\citep{Griffiths:tr05}. The infinite latent feature models discussed in Section~\ref{sec:ilsvm} do not have such a constraint.

Without loss of generality, we consider the MedLDA regression model as an example (classification model is similar), whose graphical structure is shown in Figure~\ref{fig:MedLDA}. We assume that all data examples have the same length $V$ for notation simplicity. Each document is associated with a response variable $Y$, which is observed in the training phase but unobserved in testing. We will use $y$ to denote an instance value of $Y$. Let $K$ be the number of topics or the dimensionality of the latent topic space. MedLDA builds an LDA model to describe the observed words. The generating process of LDA is that each document $n$ has a mixing proportion $\thetav_n \sim \textrm{Dirichlet}(\alphav)$; each word $w_{nm}$ is associated with a topic $z_{nm} \sim \thetav_n$, which indexes the topic that generates the word, i.e., $w_{nm} \sim \betav_{z_{nm}}$. Define $\bar{Z}_n = \frac{1}{V} \sum_{m=1}^V Z_{nm}$ as the average topic assignment for document $n$. Let $\Theta = \{\alphav, \betav, \delta^2\}$ denote the unknown model parameters and $\data = \{y_n, w_{nm}\}$ be the training set.
MedLDA was defined as solving a regularized MLE problem with expectation constraints
\setlength\arraycolsep{2pt}\begin{eqnarray}
\inf_{\Theta, \xiv,
\xiv^\ast}~ & & -\log p(\{y_n, w_{nm}\} | \Theta) + C \sum_{n=1}^N (\xi_n + \xi_n^\ast)  \\
\mathrm{s.t.}~\forall n: & & \left\{ \begin{array}{rcl}
    y_n - \ep_p\lbrack \etav^\top \bar{Z}_n \rbrack & \leq & \epsilon + \xi_n \\
    - y_n + \ep_p\lbrack \etav^\top \bar{Z}_n \rbrack & \leq & \epsilon + \xi_n^\ast \\
    \xi_n,~\xi_n^\ast & \geq & 0
    \end{array} \right. \nonumber
\end{eqnarray}
The posterior constraints are imposed following the large-margin principle and they correspond to a quality measure of the prediction results on training data. In fact, it is easy to show that minimizing $U(\xiv, \xiv^\ast) = C \sum_{n=1}^N (\xi_n + \xi_n^\ast)$ under the above constraints is equivalent to minimizing an $\epsilon$-insensitive loss~(Smola and Sch$\ddot{o}$lkopf, 2003)
\begin{eqnarray}
\mathcal{R}_\epsilon\Big( p(\{\thetav_n, z_{nm}, \etav\} | \data, \Theta ) \Big) = C \sum_{n=1}^N \max(0, |y_n - \ep_p[\etav^\top \bar{Z}_n]| - \epsilon).
\end{eqnarray}
of the expected linear prediction rule $\hat{y}_n = \ep_p\lbrack \etav^\top \bar{Z}_n \rbrack$.

\begin{figure*}
\begin{center}
\includegraphics[height=0.22\columnwidth]{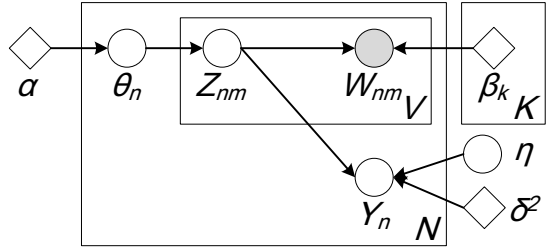}
\caption{Graphical structure of MedLDA.}\label{fig:MedLDA}\vspace{-.2cm}
\end{center}
\end{figure*}


To practically learn an MedLDA model, since the above problem is intractable, variational methods were used by introducing an auxiliary distribution $q(\{\thetav_n, z_{nm}, \etav\} | \Theta)$~\footnote{We have explicitly written the condition on model parameters.} to approximate the true posterior $p(\{\thetav_n, z_{nm}, \etav\} | \data, \Theta )$, replacing the negative data likelihood with its upper bound $\mathcal{L}\big(q(\{\thetav_n, z_{nm}, \etav\} | \Theta )\big)$, and replacing $p$ by $q$ in the constraints. The variational MedLDA regression model is
\setlength\arraycolsep{2pt}\begin{eqnarray}
\inf_{q, \Theta, \xiv,
\xiv^\ast}~ & & \mathcal{L}\Big(q(\{\thetav_n, z_{nm}, \etav\} | \Theta )\Big) + C \sum_{n=1}^N (\xi_n + \xi_n^\ast)  \\
\mathrm{s.t.}~\forall n: & & \left\{ \begin{array}{rcl}
    y_n - \ep_q\lbrack \etav^\top \bar{Z}_n \rbrack & \leq & \epsilon + \xi_n \\
    - y_n + \ep_q\lbrack \etav^\top \bar{Z}_n \rbrack & \leq & \epsilon + \xi_n^\ast \\
    \xi_n,~\xi_n^\ast & \geq & 0
    \end{array} \right. \nonumber
\end{eqnarray}
where $\mathcal{L}\big(q(\{\thetav_n, z_{nm}, \etav\} | \Theta )\big) = - \ep_q\big[ \log p(\{\thetav_n, z_{nm}, \etav\}, \data | \Theta) \big\rbrack - \mathcal{H}\big(q(\{\thetav_n, z_{nm}, \etav\} | \Theta )\big)$ is a variational upper-bound of the negative data log-likelihood. The upper bound is tight if no restricting constraints are made on the variational distribution $q$. In practice, additional assumptions (e.g., mean-field) can be made on $q$ to derive a practical approximate algorithm.

Based on the previous discussions on the extensions of RegBayes and the duality in Lemma~\ref{lemma:MLEDuality}, we can reformulate the MedLDA regression model as an example of RegBayes. Specifically, for the MedLDA regression model, we have $\model = \{\thetav_n, z_{nm}, \etav\}$. According to Eq.~(\ref{eq:JointKL}), we can easily show that
\begin{eqnarray}
\mathcal{L}\Big(q(\{\thetav_n, z_{nm}, \etav\} | \Theta )\Big) &=& \KL\Big( q(\{\thetav_n, z_{nm}, \etav \}|\Theta) \Vert p(\{\thetav_n, z_{nm}, \etav\}, \{w_{nm}, y_n\} | \Theta) \Big) \nonumber \\
&=& \mathcal{L}_B\Big( \Theta, q(\model|\Theta) \Big). \nonumber
\end{eqnarray}
Then, the MedLDA problem is a RegBayes model in Eq.~(\ref{eq:constraindBayes_param}) with
\setlength\arraycolsep{1pt} \begin{eqnarray}\label{eq:MedLDAConstraints}
\mathcal{P}^{\textrm{MedLDA}}_{\mathrm{post}}(\Theta, \xiv, \xiv^\ast) \defEq \left\{ q(\{\thetav_n, z_{nm}, \etav\} | \Theta)~ \begin{array}{|crcl}
\forall n: & y_n - \ep_q\lbrack \etav^\top \bar{Z}_n \rbrack & \leq & \epsilon + \xi_n \\
   {} & - y_n + \ep_q\lbrack \etav^\top \bar{Z}_n \rbrack & \leq & \epsilon + \xi_n^\ast \\
   {} & \xi_n,~\xi_n^\ast & \geq & 0
\end{array}\right\}.
\end{eqnarray}

For the MedLDA problem, we can use Lagrangian methods to solve the constrained formulation. Alternatively, we can also use the convex duality theorem to solve the equivalent unconstrained form. For the variational MedLDA, the $\epsilon$-insensitive loss is $\mathcal{R}_\epsilon(q(\{\theta_n, z_{nm}, \etav\} | \Theta))$. Its conjugate can be derived using the results of Lemma~\ref{lemma:ConjugateMaxFunc2}. Specifically, we have the following result, whose proof is deferred to Appendix C.6.
\begin{lemma}[Conjugate of MedLDA]\label{lemma:conjugateMedLDA}
For the variational MedLDA problem, we have
\begin{eqnarray}
 \inf_{\Theta, q(\{\thetav_n, z_{nm}, \etav\} | \Theta) \in \mathcal{P}_{\mathrm{prob}}} && \mathcal{L}(q(\{\thetav_n, z_{nm}, \etav\} | \Theta), \Theta) + \mathcal{R}_\epsilon(q(\{\thetav_n, z_{nm}, \etav\} | \Theta)) \\
 =~~~~~~~~ \sup_{\omegav} ~~~~~~~~~&& - \log Z^\prime(\omegav, \Theta^\ast) - \sum_n g_2^\ast( \omegav_n;  -y_n+\epsilon, y_n+\epsilon ) , \nonumber
\end{eqnarray}
where $\omegav_n = (\omega_n, \omega_n^\prime)$. Moreover, The optimum distribution is the posterior distribution
\begin{eqnarray}\label{eq:qOptMedLDA}
\hat{q}(\{\thetav_n, z_{nm}, \etav\} | \Theta^\ast) = \frac{1}{Z^\prime(\hat{\omegav}, \Theta^\ast | \data)}p(\{\thetav_n, z_{nm}, \etav\}, \data | \Theta^\ast)\exp\Big\{ \sum_{n}(\hat{\omega}_n - \hat{\omega}_n^\prime) \etav^\top \bar{z}_n  \Big\},
\end{eqnarray}
where $Z^\prime(\hat{\omegav}, \Theta | \data)$ is the normalization factor and the optimum parameters are
\begin{eqnarray}
\Theta^\ast = \argmax_\Theta \log p(\data | \Theta).
\end{eqnarray}
\end{lemma}

Note that although in general, either the primal or the dual problem is hard to solve exactly, the above conjugate results are still useful when developing approximate inference algorithms. For instance, we can impose additional mean-field assumptions on $q$ in the primal formulation and iteratively solve for each factor; and in this process convex conjugates are useful to deal with the large-margin constraints~\citep{Zhu:MedLDA09}. Alternatively, we can apply approximate methods (e.g., MCMC sampling) to infer the $q$ based on its solution in Eq.~(\ref{eq:qOptMedLDA}), and iteratively solves for the dual parameters $\omegav$ using approximate statistics~\citep{schofield2006fitting}. We will discuss more on this when presenting
the inference algorithms for iLSVM and MT-iLSVM.

In the above discussions, we have treated the topics $\betav$ as fixed unknown parameters. A fully Bayesian formulation would treat $\betav$ as
random variables, e.g., with a Dirichlet prior~\citep{Blei:03,Griffiths:04}. Under the RegBayes interpretation, we can easily do such an extension of MedLDA, simply by
moving $\betav$ from $\Theta$ to $\model$.

\section*{Appendix C: Proof of the Theorems and Lemmas}

\subsection*{Appendix C.1: Proof of Theorem~\ref{lemma:RegBayes}}
\begin{proof}
The adjoint of the linear operator $E$ is given by $\left< Ex, \phi \right> = \left< E^\ast \phi, x \right>$. In this theorem, $E$ is the expectation with respect to $q$. Thus, we have
\begin{eqnarray}
\left< Eq, \phi \right> &=& \left< \int q(\model) \psiv(\model,\data) \ud \mu(\model), \phi \right>  \nonumber \\
              &=& \int q(\model) \left< \psiv(\model,\data), \phi \right>  \ud \mu(\model)  \nonumber \\
              &=& (E^\ast \phi)(q),
\end{eqnarray}
where $E^\ast \phi = \left< \phi, \psi(.) \right>$.

By definition, we have $\KL(q(\model) \Vert p(\model, \data)) = \KL(q(\model) \Vert p(\model|\data)) + c$, where $c = -\log p(\data)$ is a constant. Let $f(q(\model))$ denote the KL-divergence $\KL(q(\model) \Vert p(\model | \data))$. 
The following proof is similar to the proof of the Fenchel duality theorem~\citep{Borwein:05}. Let $t$ and $d$ denote the primal value and the dual value, respectively. By Lemma 4.3.1~\citep{Borwein:05}, under appropriate regularity conditions, there is a $\hat{\phi}$ such that
\begin{eqnarray}
t \leq \left[ f(q) - \left< \hat{\phi}, Eq \right> \right] + \left[ g(\phi) + \left< \hat{\phi}, \phi \right> \right] + c. \nonumber
\end{eqnarray}
For any $\mu$, setting $\phi = Eq + \mu$ in the above inequality, we have
\begin{eqnarray}
t & \leq & f(q) + g(E q + \mu) + \left< \hat{\phi}, \mu \right>  + c \nonumber \\
   &=& \left\{ f(q) - \left< E^\ast \hat{\phi}, q\right> \right\} + \left\{ g(Eq + \mu) - \left< -\hat{\phi}, Eq + \mu\right> \right\}  + c.  \nonumber
\end{eqnarray}
Taking the infimum over all points $\mu$, we have
\begin{eqnarray}
t  \leq   \left\{ f(q) - \left< E^\ast \hat{\phi}, q\right> \right\} - g^\ast( - \hat{\phi} )  + c. \nonumber
\end{eqnarray}
Then, taking the infimum over all points $q \in \mathcal{P}_{\textrm{prob}}$, we have
\begin{eqnarray}\label{eq:infimum2}
t  &\leq&   \inf_{q \in \mathcal{P}_{\textrm{prob}}} \left\{ f(q) - \left< E^\ast \hat{\phi}, q\right> \right\} - g^\ast( - \hat{\phi} )  + c \nonumber \\
   &=& -f^\ast( E^\ast \hat{\phi}) - g^\ast(- \hat{\phi})  + c \nonumber \\
   &\leq& d,
\end{eqnarray}
where
\begin{eqnarray}
f^\ast(E^\ast \phi) &=& \log \int p(\model | \data) \exp\left( \left< \phi, \psi(\model,\data) \right> \right) \ud \mu(\model)  \nonumber
\end{eqnarray}
is the convex conjugate of the KL-divergence.

Since $d \leq t$ due to the Fenchel weak duality theorem~\citep{Borwein:05} (Theorem 4.4.2), we have the strong duality that $t = d$, and $\hat{\phi}$ attains the supremum in the dual problem.
During the deviation of the infimum in Eq.~(\ref{eq:infimum2}), we get the optimum solution of $q$:
\begin{eqnarray}
\hat{q}_{\hat{\phi}}(\model) &\propto& p(\model|\data) \exp\left( \left< \hat{\phi}, \psiv(\model; \data) \right> \right) \nonumber \\
                &=& p(\model, \data) \exp\left( \left< \hat{\phi}, \psiv(\model; \data) \right> - \Lambda_{\hat{\phi}} \right). \nonumber
\end{eqnarray}
Absorbing the constant $c$ into $f^\ast$, we get the dual objective of Theorem~\ref{lemma:RegBayes}.
\end{proof}

\subsection*{Appendix C.2: Proof of Lemma~\ref{proposition:ConjugateBinaryFunc}}
\begin{proof}
By definition, $g_0^\ast(\mu) = \sup_{x \in \mathbb{R}}(x\mu - C\max(0, x))$. We consider two cases. First, if $\mu < 0$, we have
$$g_0^\ast(\mu) \geq \sup_{x < 0}(x \mu - C\max(0,x)) = \sup_{x < 0} x \mu = \infty.$$ Therefore, we have $g_0^\ast(\mu) = \infty$ if $\mu < 0$.
Second, if $\mu \geq 0$, we have
$$g_0^\ast(\mu) = \sup_{x \geq 0}(x \mu - C x) = \indicator( \mu \leq C ).$$
Putting the above results together, we prove the claim.
\end{proof}

\subsection*{Appendix C.3: Proof of Lemma~\ref{proposition:ConjugateMaxFunc}}\label{sec:Appendix1}

\begin{proof}
The proof has a similar structure as the proof of Lemma~\ref{proposition:ConjugateBinaryFunc}. By definition, we have
\begin{eqnarray}
g_1^\ast(\muv) = \sup_{\xv}\Big\{ \muv^\top \xv - g_1(\xv) \Big\} = \sup_{\xv}\Big\{ \sum_j \mu_j x_j - \max(x_1, \cdots, x_L) \Big\}. \nonumber
\end{eqnarray}
We first show that $\forall i,~\mu_i \geq 0$ in order to have finite $g_1^\ast$ values. Suppose that $\exists j,~\mu_j < 0$. Then, we define
\begin{eqnarray}
\mathcal{G}_j = \{\xv \in \mathbb{R}^L: x_j < 0 \},~\textrm{and}~\mathcal{G}_j^o = \{\xv \in \mathcal{G}_j: x_i = 0,~\textrm{if}~i \neq j \}.
\end{eqnarray}
Since $\mathcal{G}_j^o \subset \mathcal{G}_j \subset \mathbb{R}^L$, we have $$g_1^\ast(\muv) \geq \sup_{\xv \in \mathcal{G}_j} \{ \muv^\top \xv - g_1(\xv) \} \geq \sup_{\xv \in \mathcal{G}_j^o} \{ \muv^\top \xv - g_1(\xv) \} = \sup_{x_j \in \mathbb{R}_{-}}\{ x_j \mu_j - 0 \} = \infty.$$
Therefore, $g_1^\ast(\muv) = \infty$ if $\exists j,~\mu_j < 0$.

Now, we consider the second case, where $\forall i, \mu_i \geq 0$. We can easily show that $$\forall \xv \in \mathbb{R}^L,~\muv^\top \xv - g_1(\xv) \leq \sum_i \mu_i \max(\xv) - g_1(\xv).$$
Therefore $$g_1^\ast(\muv) \leq \sup_{\xv \in \mathbb{R}^L}\Big\{ (\sum_i \mu_i - C) \max(\xv) \Big\} = \indicator\Big( \sum_i \mu_i = C \Big).$$
Moreover, let $\mathcal{G}^1 = \{\xv \in \mathbb{R}^L:~ \xv = x \ev,~x \in \mathbb{R} \}$, where $\ev$ is a vector with every element being $1$. Then, we have
$$g_1^\ast(\muv) \geq \sup_{\xv \in \mathcal{G}^1}\{ \muv^\top \xv - g_1(\xv) \} = \sup_{x \in \mathbb{R}} \Big\{ (\sum_i \mu_i - C) x \Big\} = \indicator\Big(\sum_i \mu_i = C \Big).$$

Putting the above results together proves the claim.
\end{proof}

\subsection*{Appendix C.4: Proof of Lemma~\ref{lemma:ConjugateMaxFunc2}}

\begin{proof}
By definition, the conjugate is
\begin{eqnarray}
g_2^\ast(\mu) && = \sup_{x \in \mathbb{R}}\Big\{ \mu x - C\max(0, |x - y| - \epsilon) \Big\}. \nonumber \\
&& = - \inf_{x \in \mathbb{R}}\Big\{ - \mu x + C\max(0, |x-y| - \epsilon) \Big\}. \nonumber \\
&& = - \inf_{x \in \mathbb{R}; t \geq 0; t \geq |x-y|-\epsilon} \Big\{ - \mu x + C t \Big\} \nonumber \\
&& = - \sup_{\alpha,\beta \geq 0} \Big\{  \inf_{x,t \in \mathbb{R}} \big\{ - \mu x + C t - \alpha(t - |x-y| + \epsilon) - \beta t \big\} \Big\} \nonumber \\
&& = - \sup_{\alpha,\beta \geq 0} \Big\{ \inf_{x \in \mathbb{R}} \big\{ - \mu x + \alpha|x-y| \big\} + \inf_{ t \in \mathbb{R}}\big\{ C t - \alpha t - \beta t \big\} - \alpha \epsilon \Big\} \nonumber
\end{eqnarray}
For the second infimum, it is easy to show that $$\inf_{ t \in \mathbb{R}}\big\{ C t - \alpha t - \beta t \big\} = -\indicator( \alpha + \beta = C ).$$
For the first infimum, we can show that $$\inf_{x \in \mathbb{R}} \big\{ - \mu x + \alpha|x-y| \big\} = -\mu y + \inf_{x^\prime \in \mathbb{R}} \big\{ - \mu x^\prime + \alpha|x^\prime| \big\} = - \mu y - \indicator( |\mu| \leq \alpha ).$$

Thus, we have \begin{eqnarray}
g_2^\ast(\mu) && = - \sup_{\alpha,\beta \geq 0} \Big\{ -\mu y - \alpha \epsilon  - \indicator( |\mu| \leq \alpha )  - \indicator( \alpha + \beta = C ) \Big\} \nonumber \\
&& = -( -\mu y - \epsilon |\mu| - \indicator(|\mu| \leq C) ) \nonumber \\
&& = \mu y + \epsilon |\mu| + \indicator(|\mu| \leq C), \nonumber
\end{eqnarray}
where the second equality holds by setting $\alpha = |\mu|$, under the condition that $\epsilon$ is positive; the condition $|\mu| \leq C$ is induced from the conditions $\alpha + \beta = C$ and $\beta \geq 0$.
\end{proof}

\subsection*{Appendix C.5: Proof of Lemma~\ref{lemma:conjugateiLSVM}}\label{sec:AppendixiLSVM}

\begin{proof}
By definition, we have $g(Eq) \defEq \mathcal{R}_h^c\big(q(\Zv, \etav, \Wv)\big)  = \sum_n g_1(\ell_n^\Delta - Eq(n))$. Let $\muv_n = Eq(n)$. We have the conjugate
\begin{eqnarray}
g^\ast(\omegav) && = \sup_{\muv}\Big\{ \omegav^\top \muv - \sum_n g_1( \ell_n^\Delta - \muv_n) \Big\} \nonumber \\
                               && = \sum_n \sup_{\muv_n}\Big\{ \omegav_n^\top \muv_n - g_1(\ell_n^\Delta - \muv_n) \Big\} \nonumber \\
                               && = \sum_n \sup_{\nuv_n}\Big\{ \omegav_n^\top ( \ell_n^\Delta - \nuv_n ) - g_1(\nuv_n) \Big\} \nonumber \\
                               && = \sum_n \Big( \omegav_n^\top \ell_n^\Delta + g_1^\ast( - \omegav_n ) \Big). \nonumber
\end{eqnarray}
Thus, $$g^\ast(-\omegav) = \sum_n \big(- \omegav_n^\top \ell_n^\Delta + g_1^\ast( \omegav_n ) \big).$$
Using the results of Theorem~\ref{lemma:RegBayes} proves the claim.
\end{proof}

\subsection*{Appendix C.6: Proof of Lemma~\ref{lemma:conjugateMTiLSVM}}\label{sec:Appendix4}
\begin{proof}
Similar structure as the proof of Lemma~\ref{lemma:conjugateiLSVM}. In this case, the linear expectation operator is $E:~\mathcal{P}_{\textrm{prob}} \to \mathbb{R}^{\sum_m |\mathcal{I}_{\textrm{tr}}^m|}$ and the element of $Eq$ evaluated at the $n$th example for task $m$ is
\begin{eqnarray}
Eq(n, m) \defEq  y_{mn} \ep_{q(\Zv, \etav)}[ \Zv \etav_m ]^\top \xv_{mn} = \ep_{q(\Zv, \etav)}[ y_{mn} (\Zv \etav_m )^\top \xv_{mn}].
\end{eqnarray}
Then, let $g_0:~\mathbb{R} \to \mathbb{R}$ be a function defined in Lemma~\ref{proposition:ConjugateBinaryFunc}. We have $$g(Eq) \defEq \mathcal{R}_h^{MT}\Big(q(\Zv, \etav, \Wv)\Big) = \sum_{m, n \in \mathcal{I}^m_{\textrm{tr}}} g_0\Big( 1 - Eq(n, m) \Big).$$
Let $\muv = Eq$. By definition, the conjugate is
\begin{eqnarray}
g^\ast(\omegav)  && = \sup_{\muv}\Big\{ \omegav^\top \muv - \sum_{m, n \in \mathcal{I}^m_{\textrm{tr}}} g_0(1 - \mu_{mn}) \Big\} \nonumber \\
                     && = \sum_{m, n \in \mathcal{I}^m_{\textrm{tr}}} \sup_{\mu_{mn}}\Big\{ \omega_{mn} \mu_{mn} - g_0(1 - \mu_{mn}) \Big\} \nonumber \\
                               && = \sum_{m, n \in \mathcal{I}^m_{\textrm{tr}}} \sup_{\nu_n^m}\Big\{ \omega_{mn} (1 - \nu_{mn}) - g_0(\nu_{mn}) \Big\} \nonumber \\
                               && = \sum_{m, n \in \mathcal{I}^m_{\textrm{tr}}} \Big(\omega_{mn} + g_0^\ast( -\omega_{mn} ) \Big). \nonumber
\end{eqnarray}
Thus, $$g^\ast(-\omegav) = \sum_{m, n \in \mathcal{I}^m_{\textrm{tr}}} \Big(-\omega_{mn} + g_0^\ast( \omega_{mn} ) \Big).$$
By the results in Theorem~\ref{lemma:RegBayes} and Lemma~\ref{proposition:ConjugateBinaryFunc}, we can derive the conjugate of the problem~(\ref{eq:MTconstrained}).
\end{proof}

\subsection*{Appendix C.7: Proof of Lemma~\ref{lemma:conjugateMedLDA}} 

\begin{proof}
Similar structure as the proof of Lemma~\ref{lemma:conjugateiLSVM}. In this case, the linear expectation operator is $E:~\mathcal{P}_{\textrm{prob}} \to \mathbb{R}^{N}$ and the elements of $Eq$ evaluated at the $n$th example is 
\begin{eqnarray}
\mu_n = \ep_{q(\{\theta_n, z_{nm}, \etav\} | \Theta)}[ \etav^\top \bar{z}_n ].
\end{eqnarray}
Then, using the $g_2$ function defined in Lemma~\ref{lemma:ConjugateMaxFunc2}, we have $$g(Eq) \defEq \mathcal{R}_\epsilon (q(\{\theta_n, z_{nm}, \etav\} | \Theta)) = \sum_n g_2\Big(\mu_n; y_n, \epsilon \Big).$$ Therefore $g^\ast(\omegav) = \sum_{n} g_2^\ast(\omega_n; y_n, \epsilon )$ and $g^\ast(-\omegav) = \sum_{n} g_2^\ast(-\omega_n; y_n, \epsilon ).$ By the results in Theorem~\ref{lemma:RegBayes} and Lemma~\ref{proposition:ConjugateBinaryFunc}, we can derive the conjugate and the optimum solution of $\hat{q}$. The optimum solution of $\Theta$ is due to Lemma~\ref{lemma:MLEDuality}. Note that the constraints are not directly dependent on $\Theta$.
\end{proof}

\section*{Appendix D: Inference Algorithms for Infinite Latent SVMs}

\subsection*{Appendix D.1: Inference for MT-iLSVM}\label{sec:Appendix2}

In this section, we provide the derivation of the inference algorithm for MT-iLSVM, which is outlined in Algorithm~\ref{alg:algMT-iLSVM} and detailed below.

For MT-iLSVM, the model $\model$ consists of all the latent variables $(\nuv, \Wv, \Zv, \etav)$. Let $L_{mn}(q) \defEq \ep_q[ \log p(\xv_{mn}|\Zv, \wv_{mn}, \lambda_{mn}^2) ]$ be the expected data likelihood. Then, under the truncated mean-field assumption~(\ref{eq:MF-MTiLSVM}), we have
\setlength\arraycolsep{1pt}\begin{eqnarray}
L_{mn}(q) &&= -\frac{ \xv_{mn}^\top \xv_{mn} - 2\xv_{mn}^\top \ep_q[\Zv \wv_{mn}] + \ep_q[\wv_{mn}^\top \Uv \wv_{mn}] }{2 \lambda_{mn}^2} - \frac{D  \log( 2\pi \lambda_{mn}^2) }{2}, \nonumber
\end{eqnarray}
where $\xv_{mn}^\top \ep_q[\Zv \wv_{mn}] = \sum_{k} \xv_{mn}^\top \psiv_{.k}$; $\psiv_{.k} \defEq (\psi_{1k} \cdots \psi_{Dk})^\top$ is the $k$th column of $\psiv = \ep_q[\Zv]$;
\begin{eqnarray}
\ep_q[ \wv_{mn}^\top \Uv \wv_{mn}] = 2 \sum_{ j < k} \phi_{mn}^j\phi_{mn}^k \Uv_{jk} + \sum_k \Uv_{kk}( K \sigma_{mn}^2 + \Phi_{mn}^\top \Phi_{mn} ); \nonumber
\end{eqnarray}
and $\Uv \defEq \ep_q[\Zv^\top \Zv]$ is a $K\times K$ matrix, whose element is\begin{eqnarray}
\Uv_{ij} = \left\{\begin{array}{ll}
            \sum_d \psi_{di}, & \textrm{if}~ i=j \\
            \sum_d \psi_{di}\psi_{dj}, & \textrm{otherwise}.
            \end{array}\right. \nonumber
\end{eqnarray}
For the KL-divergence term, we have $ \KL(q(\model) \Vert \pi(\model) ) = \KL(q(\nuv) \Vert \pi(\nuv)) + \KL( q(\Wv) \Vert \pi(\Wv) ) + \ep_{q(\nuv)}[\KL(q(\Zv) \Vert \pi(\Zv | \nuv))] + \KL( q(\etav) \Vert \pi(\etav) ) $, where the individual terms are
\setlength\arraycolsep{1pt}\begin{eqnarray}
\KL(q(\nuv) \Vert \pi(\nuv)) = && \sum_{k=1}^K \Big((\gamma_{k1}-\alpha)(\varphi(\gamma_{k1})-\varphi(\gamma_{k1}+\gamma_{k2})) + (\gamma_{k2}-1) (\varphi(\gamma_{k2})-\varphi(\gamma_{k1}+\gamma_{k2})) \nonumber \\
&& - \log \frac{\Gamma(\gamma_{k1})\Gamma(\gamma_{k2})}{\Gamma(\gamma_{k1}+\gamma_{k2})}\Big) - K \log \alpha, \nonumber\\
\ep_{q(\nuv)}[ \KL(q(\Zv) \Vert \pi(\Zv | \nuv)) ] = && \sum_{dk} \Big( - \psi_{dk} \sum_{j=1}^k \ep_q[\log \nu_j] - (1-\psi_{dk})\ep_q[ \log(1 - \prod_{j=1}^k \nu_j) ] \nonumber \\
&& + \psi_{dk}\log \psi_{dk} + (1-\psi_{dk})\log(1-\psi_{dk}) \Big) \nonumber \\
\KL(q(\Wv) \Vert \pi(\Wv)) = && \sum_{mn} \Big(\frac{K\sigma_{mn}^2 + \Phi_{mn}^\top\Phi_{mn}}{2\sigma_{m0}^2} - \frac{K(1 + \log \frac{\sigma_{mn}^2}{\sigma_{m0}^2})}{2}\Big). \nonumber
\end{eqnarray}
where $\varphi(\cdot)$ is the digamma function and $\ep_q[ \log v_j ] = \varphi(\gamma_{j1}) - \varphi(\gamma_{j1} + \gamma_{j2})$. For $\KL(q(\etav)\Vert \pi(\etav))$, we do not need to write it explicitly, as we shall see. Finally, the effective discriminant function is $$f_m(\xv_{mn}; q(\Zv, \etav)) = \ep_q[\etav_m]^\top \psiv^\top \xv_{mn} = \sum_{k=1}^K \ep_q[\eta_{mk}] \psiv_{.k}^\top \xv_{mn}.$$
All the above terms can be easily computed, except the term $\ep_q[ \log(1 - \prod_{j=1}^k \nu_j) ]$. Here, we adopt the multivariate lower bound~\citep{Doshi-Velez:09}
\begin{eqnarray}
\ep_q[ \log(1 - \prod_{j=1}^k \nu_j) ] && \geq \sum_{m=1}^k  q_{km} \varphi(\gamma_{m2}) + \sum_{m=1}^{k-1}( \sum_{n=m+1}^k  q_{kn}) \varphi(\gamma_{m1}) \nonumber \\
                                        && - \sum_{m=1}^k(\sum_{n=m}^k  q_{kn})\varphi(\gamma_{m1}+\gamma_{m2}) + \mathcal{H}(q_{k.}), \nonumber
\end{eqnarray}
where the variational parameters $q_{k.}=(q_{k1} \cdots q_{kk})^\top$ belong to the $k$-simplex, and $\mathcal{H}(q_{k.})$ is the entropy of $q_{k.}$. The tightest lower bound is achieved by setting $q_{k.}$ to be the optimum value
\begin{eqnarray}\label{eq:varQ}
q_{km} = \frac{1}{Z_k} \exp\Big( \varphi(\gamma_{m2}) + \sum_{n=1}^{m-1} \varphi(\gamma_{n1}) - \sum_{n=1}^m \varphi(\gamma_{n1}+\gamma_{n2}) \Big),
\end{eqnarray}
where $Z_k$ is a normalization factor to make $q_{k.}$ be a distribution. We denote the tightest lower bound by $\mathcal{L}_k^\nu$. Replacing the term $\ep_q[ \log(1 - \! \prod_{j=1}^k \nu_j) ]$ with its lower bound $\mathcal{L}_k^\nu$, we can have an upper bound of $\KL(q(\model)\Vert \pi(\model))$ and we denote this upper bound by $\mathcal{L}(q)$.

\begin{algorithm}[tb]
   \caption{\footnotesize Inference Algorithm of MT-iLSVM} \label{alg:algMT-iLSVM}
\begin{algorithmic}[1]
   \STATE {\bfseries Input:} data $\mathcal{D} = \lbrace (\xv_{mn}, y_{mn}) \rbrace_{m,n\in\mathcal{I}_{\textrm{tr}}^m} \cup \lbrace \xv_{mn} \rbrace_{m, n\in\mathcal{I}_{\textrm{tst}}^m}$, constants $\alpha$ and $C$
   \STATE {\bfseries Output:} distributions $q(\nuv)$, $q(\Zv)$, $q(\Wv)$, $q(\etav)$ and hyper-parameters $\sigma_{m0}^2$ and $\lambda_{mn}^2$
   \STATE Initialize $\gamma_{k1}=\alpha$, $\gamma_{k2}=1$, $\psi_{dk}=0.5+\epsilon$, where $\epsilon \sim \mathcal{N}(0, 0.001)$, $\Phi_{mn} = 0$, $\sigma_{mn}^2=\sigma_{m0}^2=1$, $\muv_m = 0$, $\lambda_{mn}^2$ is computed from $\mathcal{D}$.
   \REPEAT
        \REPEAT
            \STATE update $(\gamma_{k1}, \gamma_{k2})$ using Eq. (\ref{eq:updateNu}), $\forall 1 \leq k \leq K$;
            \STATE update $\phi_{mn}^k$ and $\sigma_{mn}^2$ using Eq. (\ref{eq:updateW}), $\forall m, \forall n, \forall 1 \leq k \leq K$;
            \STATE update $\psi_{dk}$ using Eq. (\ref{eq:updateZ}), $\forall 1 \leq d \leq D, \forall 1 \leq k \leq K$;
        \UNTIL relative change of $L$ is less than $\tau$ (e.g., $1e^{-3}$) or iteration number is $T$ (e.g., 10)
       \FOR{$m=1$ {\bfseries to} $M$}
        \STATE solve the dual problem~(\ref{eq:binarySVM}) using a binary SVM learner.
       \ENDFOR
       \STATE update the hyper-parameters $\sigma_{m0}^2$ using Eq.~(\ref{eq:MT-Hyper1}) and $\lambda_{mn}^2$ using Eq.~(\ref{eq:MT-Hyper2}). ({\it Optional})
   \UNTIL relative change of $L$ is less than $\tau^\prime$ (e.g., $1e^{-4}$) or iteration number is $T^\prime$ (e.g., 20)
\end{algorithmic}
\end{algorithm}

With the above terms and the upper bound $\mathcal{L}(q)$, we can implement the general procedure outlined in Algorithm~\ref{alg:coord_descent_regress} to solve the MT-iLSVM problem.
Specifically, the inference procedure iteratively solves the following steps, as summarized in Algorithm~\ref{alg:algMT-iLSVM}:

{\bf Infer $q(\nuv)$, $q(\Zv)$ and $q(\Wv)$:} For $q(\Wv)$, since both the prior $\pi(\Wv)$ and $q(\Wv)$ are Gaussian, we can easily derive the update rules, similar as in Gaussian mixture models
\begin{eqnarray}\label{eq:updateW}
\phi_{mn}^k && = \frac{\sum_d x_{mn}^d \psi_{dk} - \sum_{j \neq k} \phi_{mn}^j \Uv_{kj} }{\lambda_{mn}^2} \Big(\frac{1}{\sigma_{m0}^2} + \frac{\sum_d \psi_{dk}}{\lambda_{mn}^2} \Big)^{-1} \\
\sigma_{mn}^2 &&= \Big(\frac{1}{\sigma_{m0}^2} + \frac{1}{ K} \sum_k \frac{\Uv_{kk}}{\lambda_{mn}^2} \Big)^{-1} \nonumber
\end{eqnarray}
For $q(\nuv)$, we have the update rules similar as in~\citep{Doshi-Velez:09}, that is,
\begin{eqnarray} \label{eq:updateNu}
\gamma_{k1} = && \alpha + \sum_{m=k}^K\sum_{d=1}^D \psi_{dm} + \sum_{m=k+1}^K(D - \sum_{d=1}^D\psi_{dm})(\sum_{i=k+1}^m q_{mi}) \\
\gamma_{k2} = && 1 + \sum_{m=k}^K(D - \sum_{d=1}^D \psi_{dm}) q_{mk}. \nonumber
\end{eqnarray}
For $q(\Zv)$, we have the mean-field update equation as
\begin{eqnarray}\label{eq:updateZ}
\psi_{dk} = \frac{1}{1 + e^{-\vartheta_{dk}}},
\end{eqnarray}
where
\begin{eqnarray}
\vartheta_{dk} = && \sum_{j=1}^k \ep_q[ \log v_j ] - \mathcal{L}_k^\nu - \sum_{mn} \frac{1}{2\lambda_{mn}^2}\Big( (K \sigma_{mn}^2 + (\phi_{mn}^k)^2)  \nonumber \\
&&  - 2 x_{mn}^d \phi_{mn}^k + 2 \sum_{j\neq k} \phi_{mn}^j\phi_{mn}^k \psi_{dj} \Big) + \sum_{m, n\in \mathcal{I}_{\textrm{tr}}^m} y_{mn} \ep_q[\eta_{mk}] x_{mn}^d. \nonumber
\end{eqnarray}
{\bf Infer $q(\etav)$ and solve for $\omegav$:} By the convex duality theory, we have the solution 
\begin{eqnarray}
q(\etav) && \propto \pi(\etav) \exp\Big\lbrace \sum_{ m,n \in \mathcal{I}^m_{\textrm{tr}}} y_{mn} \omega_{mn} \etav_{m}^\top \psiv^\top \xv_{mn} \Big\rbrace \nonumber \\
&& = \prod_{m=1}^M \pi(\etav_m) \exp\Big\lbrace \etav_{m}^\top \big( \sum_{n \in \mathcal{I}^m_{\textrm{tr}}} y_{mn} \omega_{mn} \psiv^\top \xv_{mn} \big) \Big\rbrace. \nonumber
\end{eqnarray}
Therefore, we can see that although we did not assume $q(\etav)$ is factorized, we can get the induced factorization form $q(\etav) = \prod_m q(\etav_m)$, where
\begin{eqnarray}
q(\eta_m) \propto \pi(\etav_m) \exp\Big\lbrace \etav_{m}^\top \big( \sum_{n \in \mathcal{I}^m_{\textrm{tr}}}  y_{mn} \omega_{mn} \psiv^\top \xv_{mn} \big) \Big\rbrace. \nonumber
\end{eqnarray}
Here, we assume $\pi(\etav_m)$ is standard normal. Then, we have $q(\etav_m) = \mathcal{N}(\etav_m|\muv_m , I)$, where
\begin{eqnarray}
\muv_m = \sum_{n \in \mathcal{I}^m_{\textrm{tr}}} y_{mn} \omega_{mn} \psiv^\top \xv_{mn}. \nonumber
\end{eqnarray}
The optimum dual parameters can be obtained by solving the following $M$ independent dual problems
\begin{eqnarray}\label{eq:binarySVM}
\sup_{ \omegav_m }~ -\frac{1}{2} \muv_m^\top \muv_m + \sum_{n \in \mathcal{I}^m_{\textrm{tr}}} \omega_{mn} ~~~~~ \st.:~ 0 \leq \omega_{mn} \leq C, \forall n \in \mathcal{I}^m_{\textrm{tr}},
\end{eqnarray}
which (and its primal form) can be efficiently solved with a binary SVM solver, such as SVM-light.

As we have stated, the hyperparameters $\sigma_0^2$ and $\lambda_{mn}^2$ can be set a priori or estimated from the data. The empirical estimation can be easily done with closed form solutions by optimizing the RegBayes objective with all the variational terms fixed. For MT-iLSVM, we have
\begin{eqnarray}
\sigma_{m0}^2 &=& \frac{\sum_{n=1}^{N_m} (K \sigma_{mn}^2 + \Phi_{mn}^\top \Phi_{mn})}{K N_m} \label{eq:MT-Hyper1} \\
\lambda_{mn}^2 &=& \frac{\xv_{mn}^\top \xv_{mn} - 2\xv_{mn}^\top \ep_q[\Zv \wv_{mn}] + \ep_q[\wv_{mn}^\top \Uv \wv_{mn}]}{D}. \label{eq:MT-Hyper2}
\end{eqnarray}

\subsection*{Appendix D.2: Inference for Infinite Latent SVM}\label{sec:Appendix3}

In this section, we develop the inference algorithm for iLSVM based on the stick-breaking construction of the IBP prior. The algorithm is outlined in Algorithm~\ref{alg:algiLSVM}.

Similar as in the inference for MT-iLSVM, we make the additional constraint about the feasible distribution $$q(\nuv, \Wv, \Zv, \etav) = q(\etav) q(\Wv|\Phi, \Sigma) \prod_n \Big( \prod_{k=1}^K q(z_{nk}|\psi_{nk}) \Big) \prod_{k=1}^K q(\nu_k | \gammav_k),$$where $K$ is the truncation level; $q(\Wv|\Phi, \Sigma)= \prod_k \mathcal{N}(\Wv_{.k}|\Phi_{.k}, \sigma_k^2 I)$; $q(z_{nk}|\phi_{nk}) = \Ber(\phi_{nk})$; and $q(\nu_k | \gammav_k) = \B(\gamma_{k1}, \gamma_{k2})$. Then, we solve the unconstrained problem using convex duality with dual parameters being $\omegav$. Let $L_n(q) \defEq \ep_q[ \log p(\xv_n | \zv_n, \Wv) ]$. We have
\begin{eqnarray}
L_n(q) &&= -\frac{ \xv_n^\top \xv_n - 2\xv_n^\top \Phi \ep_q[\zv_n]^\top + \ep_q[ \zv_n \Av \zv_n^\top ] }{2 \sigma_{n0}^2} - \frac{D  \log( 2\pi \sigma_{n0}^2) }{2},
\end{eqnarray}
where $\Av \defEq \ep_q[\Wv^\top \Wv]$ is a $K\times K$ matrix; $\xv_n^\top \Phi \ep_q[\zv_n]^\top = 2 \sum_k \psi_{nk} (\xv_n^\top \Phi_{.k})$; and $$\ep_q[\zv_n \Av \zv_n^\top] = 2 \sum_{ j < k} \psi_{nj}\psi_{nk} \Av_{jk} + \sum_k \psi_{nk}( D \sigma_k^2 + \Av_{kk} ).$$
The effective discriminant function is $f(y, \xv_n) = \sum_k \ep_q[\eta_y^k] \psi_{nk}$.
Again, for computational tractability, we need the lower bound $\mathcal{L}_k^\nu$ of the term $\ep_q[ \log (1 - \prod_{j=1}^k v_j) ]$. Using this lower bound, we can get an upper bound of the KL-divergence term. Then, the inference procedure iteratively solves the following steps:

{\bf Infer $q(\nuv)$, $q(\Zv)$ and $q(\Wv)$:} For $q(\Wv)$, we have the update rules
\begin{eqnarray}\label{eq:updateW2}
\Phi_{.k} && = \sum_n \frac{\psi_{nk}}{\sigma_{n0}^2}\Big(\xv_n - \sum_{j\neq k} \psi_{nj} \Phi_{.j}\Big) \Big(1 + \sum_n \frac{\psi_{nk}}{\sigma_{n0}^2} \Big)^{-1} \\
\sigma_k^2 &&= \Big(1 + \sum_n \frac{\psi_{nk}}{\sigma_{n0}^2} \Big)^{-1}. \nonumber
\end{eqnarray}
For $q(\nuv)$, we have the update rules similar as in~\citep{Doshi-Velez:09}, that is,
\begin{eqnarray} \label{eq:updateNu2}
\gamma_{k1} = && \alpha + \sum_{m=k}^K\sum_{n=1}^N \psi_{nm} + \sum_{m=k+1}^K(N - \sum_{n=1}^N\psi_{nm})(\sum_{i=k+1}^m q_{mi}) \\
\gamma_{k2} = && 1 + \sum_{m=k}^K(N - \sum_{n=1}^N \psi_{nm}) q_{mk}, \nonumber
\end{eqnarray}
where $q_{.k}$ is computed in the same way as in Eq. (\ref{eq:varQ}).
For $q(\Zv)$, the mean-field update equation for $\psi$ is\\[-1.2cm]

\begin{eqnarray}\label{eq:updateZ2}
\psi_{nk} = \frac{1}{1 + e^{-\vartheta_{nk}}},
\end{eqnarray}\\[-.9cm]

where\\[-1.2cm]

\begin{eqnarray}
\vartheta_{nk} && = \sum_{j=1}^k \ep_q[ \log v_j ] - \mathcal{L}_k^\nu(q) - \frac{1}{2\sigma_{n0}^2}(D\sigma_k^2 + \Phi_{.k}^\top \Phi_{.k}) \nonumber \\
&&  + \frac{1}{\sigma_{n0}^2} \Phi_{.k}^\top \Big( \xv_n - \sum_{j\neq k} \psi_{nj} \Phi_{.j} \Big) + \sum_y \omega_n^y \ep_q[\eta_{y_n}^k - \eta_y^k]. \nonumber
\end{eqnarray}\\[-.9cm]

For testing data, $\vartheta_{nk}$ does not have the last term because of the absence of large-margin constraints.

{\bf Infer $q(\etav)$ and solve for $\omegav$:} By the convex duality theory, we have
\begin{eqnarray}
q(\etav) && \propto \pi(\etav) \exp\Big\lbrace \etav^\top ( \sum_{n \in \mathcal{I}_{\textrm{tr}}} \sum_y \omega_n^y \ep_q[ \gv(y_n, \xv_n, \zv_n) - \gv(y, \xv_n, \zv_n) ]) \Big\rbrace. \nonumber
\end{eqnarray}
For the standard normal prior $\pi(\etav)$, we have that $q(\etav)$ is also normal, with mean $$\muv = \sum_{n \in \iTrain} \sum_y \omega_d^y \ep_q[ \gv(y_n, \xv_n, \zv_n) - \gv(y, \xv_n, \zv_n) ]$$ and identity covariance matrix. The dual problem is
\begin{eqnarray}\label{eq:multiwaySVM}
\sup_{ \omegav }~ -\frac{1}{2} \muv^\top \muv + \sum_{n \in \iTrain} \sum_y \omega_n^y ~~~~~ \st.:~ \omega_n^y \geq 0, ~ \sum_{y} \omega_n^y = C, \forall n \in \iTrain,
\end{eqnarray}
which (and its primal form) can be efficiently solved with a multi-class SVM solver.

\begin{algorithm}[tb]
   \caption{\footnotesize Inference Algorithm of iLSVM} \label{alg:algiLSVM}
\begin{algorithmic}[1]
   \STATE {\bfseries Input:} data $\mathcal{D} = \lbrace (\xv_{n}, y_{n}) \rbrace_{n\in \iTrain} \cup \lbrace \xv_{n} \rbrace_{n \in \iTest}$, constants $\alpha$ and $C$
   \STATE {\bfseries Output:} distributions $q(\nuv)$, $q(\Zv)$, $q(\Wv)$, $q(\etav)$ and hyper-parameters $\sigma_0^2$ and $\sigma_{n0}^2$
   \STATE Initialize $\gamma_{k1}=\alpha$, $\gamma_{k2}=1$, $\psi_{nk}=0.5+\epsilon$, where $\epsilon \sim \mathcal{N}(0, 0.001)$, $\Phi_{.k} = 0$, $\sigma_k^2 = \sigma_0^2 = 1$, $\muv = 0$, $\sigma_{n0}^2$ is computed from $\mathcal{D}$.
   \REPEAT
        \REPEAT
            \STATE update $(\gamma_{k1}, \gamma_{k2})$ using Eq. (\ref{eq:updateNu2}), $\forall 1 \leq k \leq K$;
            \STATE update $\Phi_{.k}$ and $\sigma_k^2$ using Eq. (\ref{eq:updateW2}), $\forall 1 \leq k \leq K$;
            \STATE update $\psi_{nk}$ using Eq. (\ref{eq:updateZ2}), $\forall n \in \iTrain, \forall 1 \leq k \leq K$;
            \STATE update $\psi_{nk}$ using Eq. (\ref{eq:updateZ2}), but $\vartheta_{nk}$ doesn't have the last term, $\forall n \in \iTest, \forall 1 \leq k \leq K$;
        \UNTIL relative change of $L$ is less than $\tau$ (e.g., $1e^{-3}$) or iteration number is $T$ (e.g., 10)
       \STATE solve the dual problem~(\ref{eq:multiwaySVM}) (or its primal form) using a multi-class SVM learner.
       \STATE update the hyper-parameters $\sigma_0^2$ using Eq. (\ref{eq:C-Hyper1}) and $\sigma_{n0}^2$ using Eq. (\ref{eq:C-Hyper2}). ({\it Optional})
   \UNTIL relative change of $L$ is less than $\tau^\prime$ (e.g., $1e^{-4}$) or iteration number is $T^\prime$ (e.g., 20)
\end{algorithmic}
\end{algorithm}

Similar as in MT-iLSVM, the hyperparameters $\sigma_0^2$ and $\sigma_{n0}^2$ can be set a priori or estimated from the data. The empirical estimation can be easily done with closed form solutions. For iLSVM, we have
\begin{eqnarray}
\sigma_{0}^2 &=& \frac{\sum_{k=1}^{K} (D \sigma_{k}^2 + \Phi_{.k}^\top \Phi_{k})}{K D} \label{eq:C-Hyper1} \\
\sigma_{n0}^2 &=& \frac{ \xv_n^\top \xv_n - 2\xv_n^\top \Phi \ep_p[\zv_n]^\top + \ep_q[ \zv_n \Av \zv_n^\top ] }{D}. \label{eq:C-Hyper2}
\end{eqnarray}

\bibliographystyle{plain}
\bibliography{med}

\end{document}